\newtheorem{theorem}{Theorem}[section]
\newtheorem{corollary}[theorem]{Corollary}
\newtheorem{gradient}{Theorem}[section]
\newtheorem{lemma}[gradient]{Lemma}
\newcommand{\diff}{\mathrm{d}}
\renewcommand{\phi}{\psi}
\newtheorem*{theorem*}{Theorem}
\newtheorem*{corollary*}{Corollary}
\title{Uni-Instruct: One-step Diffusion Model through \\Unified Diffusion Divergence Instruction}
\author{Yifei Wang$^{1,5}$ \quad Weimin Bai$^{1,2,3}$ \quad Colin Zhang$^{4}$ \quad  Debing Zhang$^{4}$  \\
\textbf{Weijian Luo}$^{4}$\textsuperscript{\dag} \quad \textbf{He Sun}$^{1,2,3}$ \\
$^{1}$College of Future Technology, Peking University \\
$^{2}$National Biomedical Imaging Center, Peking University\\
$^{3}$Academy for Advanced Interdisciplinary Studies, Peking University\\
$^{4}$ hi-lab, Xiaohongshu Inc \quad $^{5}$ Yuanpei College, Peking University
}
\begin{document}

\maketitle

\begingroup
\renewcommand\thefootnote{\dag}  
\footnotetext{Correspondence to \url{pkulwj1994@icloud.com}.}
\endgroup

\begin{abstract}
In this paper, we unify more than 10 existing one-step diffusion distillation approaches, such as Diff-Instruct, DMD, SIM, SiD, $f$-distill, etc, inside a theory-driven framework which we name the \textbf{\emph{Uni-Instruct}}. Uni-Instruct is motivated by our proposed diffusion expansion theory of the $f$-divergence family. Then we introduce key theories that overcome the intractability issue of the original expanded $f$-divergence, resulting in an equivalent yet tractable loss that effectively trains one-step diffusion models by minimizing the expanded $f$-divergence family. The novel unification introduced by Uni-Instruct not only offers new theoretical contributions that help understand existing approaches from a high-level perspective but also leads to state-of-the-art one-step diffusion generation performances. On the CIFAR10 generation benchmark, Uni-Instruct achieves record-breaking Frechet Inception Distance (FID) values of \textbf{\emph{1.46}} for unconditional generation and \textbf{\emph{1.38}} for conditional generation. On the ImageNet $64\times 64$ generation benchmark, Uni-Instruct achieves a new SoTA one-step generation FID of \textbf{\emph{1.02}}, which outperforms its 79-step teacher diffusion with a significant improvement margin of 1.33 (1.02 vs 2.35). We also apply Uni-Instruct on broader tasks like text-to-3D generation, which slightly outperform previous methods, such as SDS and VSD, in terms of both generation quality and diversity. Both the solid theoretical and empirical contributions of Uni-Instruct will potentially help future studies on one-step diffusion distillation and knowledge transfer of diffusion models. Code will be available at \href{https://github.com/a-little-hoof/Uni_Instruct}{Github}.
\end{abstract}

\section{Introduction}
\vspace{-0.2cm}
\begin{figure}[!htbp]
    \centering
    \begin{minipage}[t]{0.55\textwidth}
        \includegraphics[width=\linewidth]{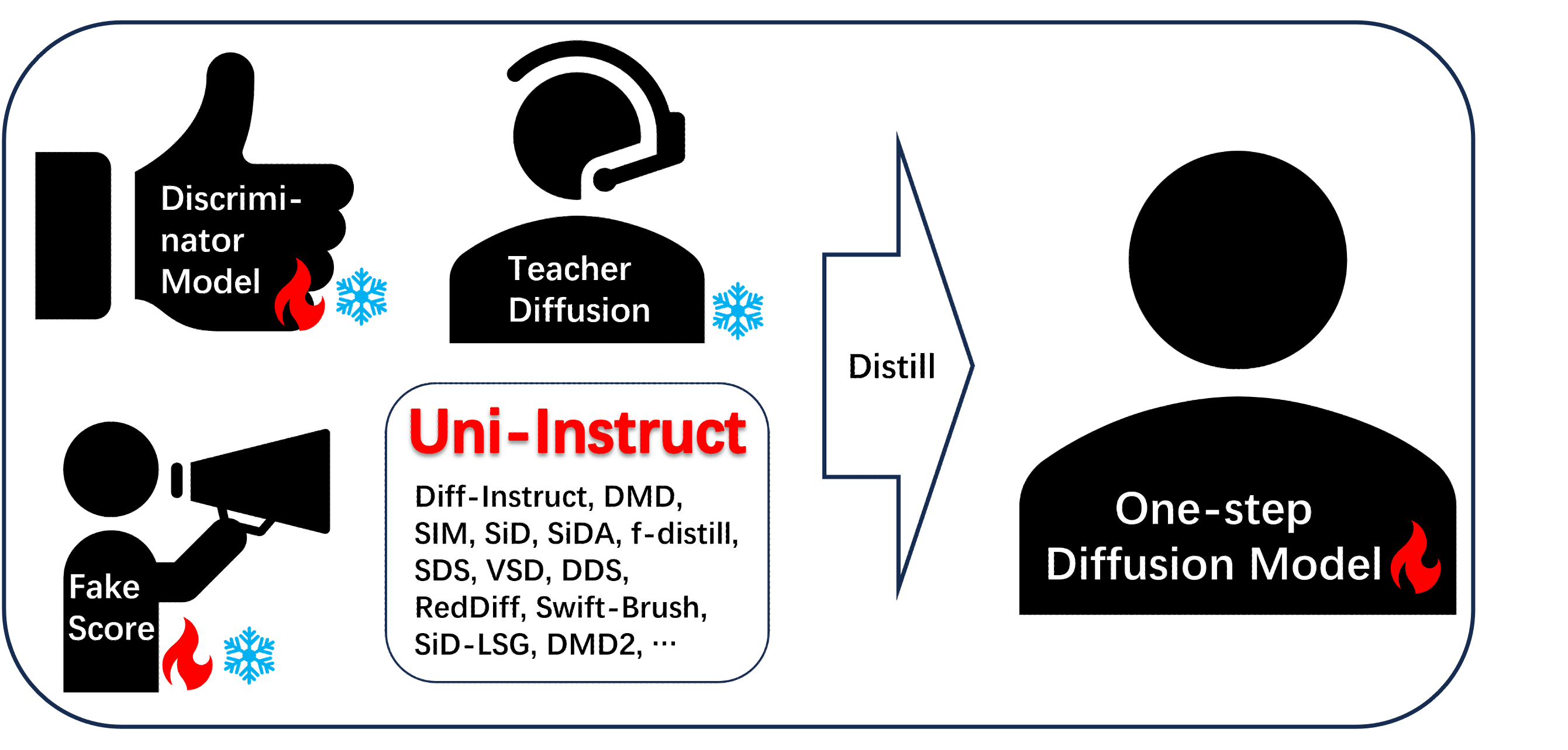}
    \end{minipage}
    \hspace{-0.3em}
    \begin{minipage}[t]{0.30\textwidth}
        \includegraphics[width=\linewidth]{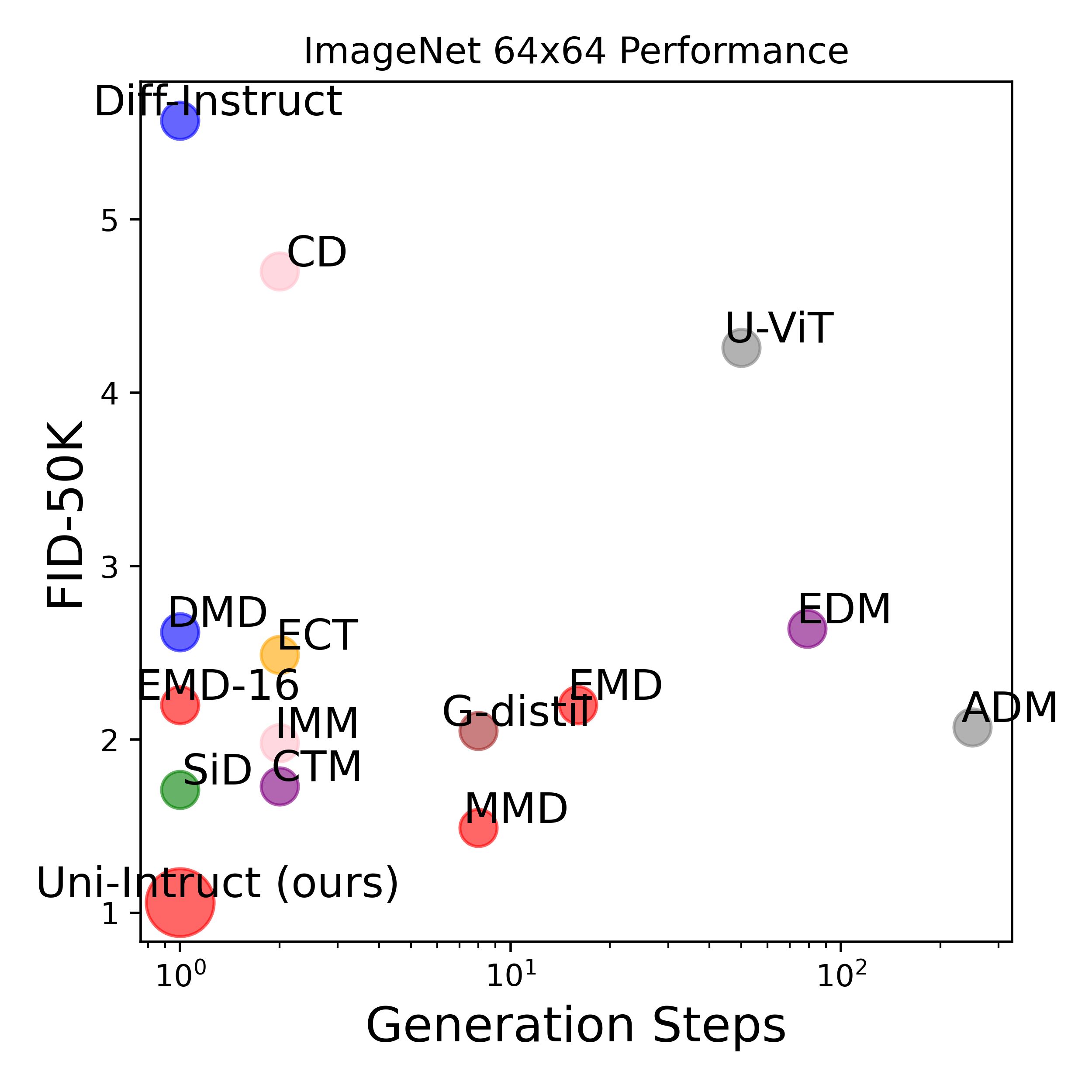}
    \end{minipage}
    \caption{\textbf{Left:} Conception overview of \emph{Uni-Instruct}. The \emph{Uni-Instruct} unifies more than 10 existing diffusion distillation methods in a wide range of applications. Please check Table \ref{TAB:TEASER} for details. \textbf{Right:} selected FID scores of different models on ImageNet-$64\times 64$ conditional generation benchmark.}
    \label{fig:overview}
\end{figure}


{One-step diffusion models, also known as one-step generators \citep{luo2023comprehensive,luo2023diff}, have been recognized as a stand-alone family of generative models that reach the leading generative performances in a wide range of applications}, including benchmarking image generation \citep{luo2023diff,luo2025one,yin2024one,zhou2024score,zhou2024adversarial,huang2024flow,xu2025one}, text-to-image generation \citep{luo2024diff,luo2024diffstar,yin2024improved,yin2024one,luo2025one,zhou2024long,hoang2023swiftbrush}, text-to-video generation \citep{bahmani20244d,yin2024slow}, image-editing \citep{hertz2023delta}, and numerous others \citep{mardani2023variational,poole2022dreamfusion,wang2023prolificdreamer}.

Currently, the mainstream of training the one-step diffusion model is through proper distillation approaches that minimize divergences between distributions of the one-step model and some teacher diffusion models. For instance, Diff-Instruct\citep{luo2023diff} was the first work that introduced one-step diffusion models by minimizing the Kullback-Leibler divergence. DMD \citep{yin2024one} improves the Diff-Instruct by introducing an additional regression loss. Score-identity Distillation (SiD) \citep{zhou2024score} studies the one-step diffusion distillation by minimizing the Fisher divergence, but without the proof of gradient equivalence of the loss function. Later, Score Implicit Matching (SIM)\citep{luo2025one} introduced a complete proof of the gradient equivalence of losses that minimizes the general score-based divergence family, including the Fisher divergence as a special case. f-distill \citep{xu2025one} and SiDA \citep{zhou2024adversarial} recently generalized the Diff-Instruct and the SiD to the integral $f$-divergence and auxiliary GAN losses, resulting in performance improvements on image generation benchmarks. Other approaches have also elaborated on the one-step diffusion models on a wide range of applications through the lens of divergence minimization \citep{hoang2023swiftbrush,zhou2024adversarial,zhou2024long,yin2024improved,luo2024diff,luo2024diffstar,huang2024flow}.

Provided that existing one-step diffusion models have achieved impressive performances, with some of them even outperforming their multi-step teacher diffusions, existing training approaches can be conceptually separated into two lines: 

{\textbf{(1)} Diff-Instruct\citep{luo2023diff}, and its variants like DMD\citep{yin2024one}, tackle the Integral Kullback-Leibler Divergence, while f-distill\citep{xu2025one} unifies the IKL as a special case of Integral f-divergence. These KL and $f$-divergence based distillation approaches have the advantage of fast convergence, but suffer from mode-collapse issues and sub-optimal performances;} 

{\textbf{(2)} Score Implicit Matching (SIM\citep{luo2025one}) proves a solid theoretical equivalence of score-based divergence minimization, which unifies the SiD\citep{zhou2024score} and Fisher divergences as special cases. Though these general score-based divergences minimization has shown surprising generation performance, they may suffer from slow convergence issues and sub-optimal fidelity.}

Till now, it seems that the KL-based and Score-based divergence minimization approaches are pretty parallel in theory. Therefore, we are strongly motivated to answer an interesting yet important research question:
\textbf{\emph{Can we unify KL-based and Score-based approaches in a unified theoretical framework? If we can, would the unified approach lead to better one-step diffusion models?}}

In this paper, we provide a complete answer to the mentioned question. We successfully built a unified theoretical framework based on a novel diffusion expansion of the $f$-divergence family. Though the original expanded $f$-divergence family is not tractable to optimize, we introduced new theorems that lead to tractable yet equivalent losses, therefore making Uni-Instruct an executable training method.

In this way, we are able to unify more than 10 existing diffusion distillation methods across a wide range of applications via our proposed \textbf{\emph{Uni-Instruct}}. The methods that have been unified by Uni-Instruct include both KL-divergence based methods (such as Diff-Instruct\citep{luo2023diff}, DMD\citep{yin2024one}, and $f$-distill\citep{xu2025one}) and general score-divergence based methods (such as Score Implicit Matching (SIM\citep{luo2025one}), SiD\citep{zhou2024score}, and SiDA\citep{zhou2024adversarial}), as is shown in Table \ref{TAB:TEASER}. Such a novel unification of existing one-step diffusion models marks the uniqueness of Uni-Instruct, which brings new perspectives in understanding and connecting different one-step diffusion models. Besides the solid theoretical contributions, Uni-Instruct also leads to new State-of-the-art one-step image generation performances on competitive image generation benchmarks: it achieved a record-breaking FID (Fréchet Inception Distance) \citep{heusel2017gans} value of \textbf{\emph{1.02}} on the ImageNet $64\times 64$ conditional generation task. This score outperforms its 79-step teacher diffusion with a significant improvement margin of 1.33 (1.02 vs 2.35). Uni-Instruct also leads to new state-of-the-art one-step FIDs of \textbf{\emph{1.46}} on CIFAR10 conditional generation task and \textbf{\emph{1.36}} on CIFAR10 unconditional generation task, significantly outperforming previous one-step models such as f-distill, SiDA, SIM, SiD, DMD, and Diff-Instruct. It also outperforms competitive few-step generative models, including consistency models \citep{geng2024consistency,song2023consistency,lu2024simplifying}, moment matching distillation models \citep{salimans2024multistep}, inductive models \citep{zhou2025inductive}, and many others \citep{xie2024distillation}.

Besides the one-step generation benchmark, we are inspired by DreamFusion \citep{poole2022dreamfusion}, ProlificDreamer \citep{wang2023prolificdreamer}, SIM \citep{luo2025one}'s application on 3D experiments. In Section \ref{SEC:3D}, we also successfully apply Uni-Instruct as a knowledge transferring approach for text-to-3D generation applications, resulting in robust, diverse, and high-fidelity 3D contents which are slightly better than ProlificDreamer in quality and diversity.

We summarize the theoretical and practical contributions in this paper as follows:
\begin{itemize}
    \item \textbf{Unified Theoretical Framework:} We introduced a unified theoretical framework named \emph{Uni-Instruct} together with a novel $f$-divergence expansion theorem. \emph{Uni-Instruct} is able to unify more than 10 existing one-step diffusion distillation approaches, bringing new perspectives to understanding one-step diffusion models. 
    \item \textbf{Tractable and Flexible Training Objective:} We introduce novel theoretical tools, such as gradient equivalence theorems, and derived tractable yet equivalent losses for Uni-Instruct. This leads to both flexible training objectives and new tools for one-step diffusion models. 
    \item \textbf{New SoTA Practical Performances:} Uni-Instruct achieved new state-of-the-art generation performances (measured in FID) on CIFAR10 (a one-step FID of 1.36) and ImageNet$64\times 64$ (a one-step FID of 1.02) benchmarks. We also successfully applied Uni-Instruct on the text-to-3D generation task, resulting in plausible and diverse 3D generation results.
\end{itemize}

\begin{table}[!tbp]
\centering
\scriptsize
\begin{tabular}{cccccc}
\hline
Method&  \makecell[c]{Loss}& \makecell[c]{Div. in UI} & Task & Loss Function & Gradient Expression  \\
\hline
\makecell[c]{Diff-Instruct\\(DI) \citep{luo2023diff}} & IKL Div. & $\chi^2$ & \makecell[c]{One-Step \\Diffusion}&  $\int w(t)\mathcal{D}_{\text{KL}}(p_{\theta,t}||q_0)\diff t$ & \makecell[c]{$(\bm{s}_{p_{\theta,t}}(\bx_t) - \bm{s}_{q_t}(\bx_t) )\frac{\partial \mathbf{x}_t}{\partial \theta}$}  \\
\hline
\makecell[c]{DI++ \citep{luo2024diff}} & \makecell[c]{IKL Div. \\+ Reward} & $\chi^2$ & \makecell[c]{Human Aligned \\One-Step \\Diffusion}&  \makecell[c]{$\int w(t)\mathcal{D}_{\text{KL}}(p_{\theta,t}||q_0)\diff t$\\$+\mathcal{L}_{\text{reward}}$} & $\text{Grad}({\mathrm{DI}})+\nabla_{\theta}\mathcal{L}_{\text{reward}}$  \\
\hline
\makecell[c]{DI$^*$ \citep{luo2024diffstar}} & \makecell[c]{KL Div. \\+ Reward} & RKL & \makecell[c]{Human Aligned \\One-Step \\Diffusion}&  \makecell[c]{$\mathcal{D}_{\text{KL}}(p_{\theta,t}||q_0)$\\$+\mathcal{L}_{\text{reward}}$} & $\text{Grad}(\mathrm{SIM})+\nabla_{\theta}\mathcal{L}_{\text{reward}}$  \\
\hline
\makecell[c]{SDS \citep{poole2022dreamfusion}}& IKL Div.& $\chi^2$  & \makecell[c]{Text \\to 3D} & $\int w(t)\mathcal{D}_{\text{KL}}(p_{\theta,t}||q_0)\diff t$ & $\text{Grad}({\mathrm{DI}})$ \\
\hline
\makecell[c]{DDS \citep{hertz2023delta}}& IKL Div. & $\chi^2$& \makecell[c]{Image \\Editing} & $\int w(t)\mathcal{D}_{\text{KL}}(p_{\theta,t}||q_0)\diff t$ & $\text{Grad}({\mathrm{DI}})$  \\
\hline
\makecell[c]{VSD \citep{wang2023prolificdreamer}}& IKL Div.  & $\chi^2$ & \makecell[c]{Text \\to 3D}& $\int w(t)\mathcal{D}_{\text{KL}}(p_{\theta,t}||q_0)\diff t$ & $\text{Grad}({\mathrm{DI}})$  \\
\hline
\makecell[c]{DMD \citep{yin2024one}}&\makecell[c]{IKL Div. \\+ Reg.} & $\chi^2$ & \makecell[c]{One-Step \\Diffusion}& \makecell[c]{$\int w(t)\mathcal{D}_{\text{KL}}(p_{\theta,t}||q_0)\diff t$\\$+\mathcal{L}_{\text{MSE}}$} & $\text{Grad}({\mathrm{DI}})+\nabla_{\theta}\text{MSE}$  \\
\hline
\makecell[c]{RedDiff \citep{mardani2023variational}}&\makecell[c]{IKL Div. \\+ Data Fedility}& $\chi^2$ & \makecell[c]{Inverse \\Problem}& \makecell[c]{$\int w(t)\mathcal{D}_{\text{KL}}(p_{\theta,t}||q_0)\diff t$\\$+\mathcal{L}_{\text{MSE}}$} & $\text{Grad}({\mathrm{DI}})+\nabla_{\theta}\text{MSE}$  \\
\hline
\makecell[c]{DMD2 \citep{yin2024improved}}& \makecell[c]{IKL Div. \\+ GAN} & $\chi^2$ & \makecell[c]{One-Step \\Diffusion} & \makecell[c]{$\int w(t)\mathcal{D}_{\text{KL}}(p_{\theta,t}||q_0)\diff t$\\$+\mathcal{L}_{adv.}$} & \makecell[c]{$\text{Grad}({\mathrm{DI}})+\nabla_{\theta}\mathcal{L}_{adv.}$}  \\
\hline
\makecell[c]{Swift\\Brush \citep{hoang2023swiftbrush}}& IKL Div. & $\chi^2$ & \makecell[c]{One-Step \\Diffusion}& $\int w(t)\mathcal{D}_{\text{KL}}(p_{\theta,t}||q_0)\diff t$ & $\text{Grad}({\mathrm{DI}})$ \\
\hline
SIM \citep{luo2025one}& \makecell[c]{General \\KL Div.} & $\text{RKL}$ & \makecell[c]{One-Step \\Diffusion}  & $\mathcal{D}_{\text{KL}}(p_{\theta,t}||q_0)$ & \makecell[c]{$\frac{\partial}{\partial \theta}\left(\bm{s}_{q_t}(\bx_t) - \bm{s}_{p_{\theta,t}}(\bx_t)\right)\cdot$\\$\left(\bm{s}_{p_{sg[\theta],t}}(\bx_t) - \nabla\log q_t(\bx_t\vert \bx_0)\right) $}  \\
\hline
SiD \citep{zhou2024score}& KL Div.  & $\text{RKL}$ & \makecell[c]{One-Step \\Diffusion}  & $\mathcal{D}_{\text{KL}}(p_{\theta,t}||q_0)$ & $\text{Grad}(\mathrm{SIM})$ \\
\hline
SiDA \citep{zhou2024adversarial}& \makecell[c]{KL Div. \\+ GAN} & $\text{RKL}$ & \makecell[c]{One-Step \\Diffusion} & \makecell[c]{$\mathcal{D}_{\text{KL}}(p_{\theta,t}||q_0)$$+\mathcal{L}_{adv.}$} & \makecell[c]{$\text{Grad}({\mathrm{SIM}})+\nabla_{\theta}\mathcal{L}_{adv.}$} \\
\hline
SiD-LSG \citep{zhou2024long}& KL Div. &$\text{RKL}$ & \makecell[c]{One-Step \\Diffusion}  & $\mathcal{D}_{\text{KL}}(p_{\theta,t}||q_0)$ & $\text{Grad}({\mathrm{SIM}})$ \\
\hline
$f$-distill \citep{xu2025one}&\makecell[c]{I-$f$ Div. \\+ GAN}& $\chi^2$ & \makecell[c]{One-Step \\Diffusion}  & \makecell[c]{$\int w(t)\mathcal{D}_{f}(q_0||p_{\theta,t})\diff t$\\$+\mathcal{L}_{adv.}$} & \makecell[c]{$\lambda_f\text{Grad}({\mathrm{DI}})+\nabla_{\theta}\mathcal{L}_{adv.}$}  \\
\hline
\makecell[c]{Uni-Instruct \\ \textbf{(Ours)} }&\makecell[c]{$f$ Div. \\+ GAN}& \makecell[c]{\textbf{All}} & \textbf{All}  & \makecell[c]{$\mathcal{D}_f(q_0||p_{\theta,t})$\\$+\mathcal{L}_{adv.}$} & \makecell[c]{$\nabla_{\theta}\mathcal{L}_{adv.}+\lambda^{\mathrm{DI}}_f \text{Grad}({\mathrm{DI}}) $\\$+\lambda^{\mathrm{SIM}}_f(\mathbf{x}) \text{Grad}({\mathrm{SIM}})$}  \\
\hline
\end{tabular}
\vspace{0.1cm}
\caption{Distribution matching diffusion distillation loss family. Uni-Instruct not only extends the distribution matching framework theoretically, but also unifies all previous gradient expressions with specific weightings.}
\vspace{-1.0cm}
\label{TAB:TEASER}
\end{table}

\section{Preliminary}
\subsection{Diffusion Models}
\paragraph{Diffusion Models.}
Assume we observe data from the underlying distribution $q_d(\bx)$. 
The goal of generative modeling is to train models to generate new samples $\bx\sim q_d(\bx)$. 
The forward diffusion process of DM transforms any initial distribution $q_{0}=q_d$ towards some simple noise distribution, 
\begin{align}\label{equ:forwardSDE}
    \diff \bx_t = \bm{F}(\bx_t,t)\mathrm{d}t + g(t)\diff \bm{w}_t,
\end{align}
where $\bm{F}$ is a pre-defined drift function, $g(t)$ is a pre-defined scalar-value diffusion coefficient, and $\bm{w}_t$ denotes an independent Wiener process. 
A continuous-indexed score network $\bm{s}_\varphi (\bx,t)$ is employed to approximate marginal score functions of the forward diffusion process \eqref{equ:forwardSDE}. The learning of score networks is achieved by minimizing a weighted denoising score matching objective \citep{vincent2011connection, song2020score},
\begin{align}\label{def:wdsm}
    \mathcal{L}_{\text{DSM}}(\varphi) = \int_{t=0}^T \lambda(t) \mathbb{E}_{\bx_0\sim q_{0}, \bx_t|\bx_0 \sim q_{t|0}(\bx_t|\bx_0)} \|\bm{s}_\varphi(\bx_t,t) - \nabla_{\bx_t}\log q_t(\bx_t|\bx_0)\|_2^2\mathrm{d}t.
\end{align}
Here, the weighting function $\lambda(t)$ controls the importance of the learning at different time levels, and $q_t(\bx_t|\bx_0)$ denotes the conditional transition of the forward diffusion \eqref{equ:forwardSDE}. 
After training, the score network $\bm{s}_{\varphi}(\bx_t, t) \approx \nabla_{\bx_t} \log q_t(\bx_t)$ is a good approximation of the marginal score function of the diffused data distribution. High-quality samples from a DM can be drawn by simulating SDE, which is implemented by the learned score network \citep{song2020score}. However, the simulation of an SDE is significantly slower than that of other models, such as one-step generator models.

\subsection{One-step Diffusion Model via KL Divergence Minimization}\label{sec:pre_onestep}
\paragraph{Notations and the Settings of One-step Diffusion Models.} We use the traditional settings introduced in Diff-Instruct \citep{luo2023diff} to present one-step diffusion models. Our basic setting is that we have a pre-trained diffusion model specified by the score function $\bm{s}_{q_t}(\bx_t)\coloneqq \nabla_{\bx_t} \log q_{t}(\bx_t)$ where $q_{t}(\bx_t)$'s are the underlying distribution diffused at time $t$ according to \eqref{equ:forwardSDE}. 
We assume that the pre-trained diffusion model provides a sufficiently good approximation of the data distribution, and thus will be the only item of consideration for our approach.

The one-step diffusion model of our interest is a single-step generator network $g_\theta$, which can transform an initial random noise $\bz \sim p_z$ to obtain a sample $\bx = g_\theta(\bz)$; this network is parameterized by network parameters $\theta$. Let $p_{\theta,0}$ denote the data distribution of the student model, and $p_{\theta,t}$ denote the marginal diffused data distribution of the student model with the same diffusion process \eqref{equ:forwardSDE}. The student distribution implicitly induces a score function $\bm{s}_{p_{\theta,t}}(\bx_t) \coloneqq \nabla_{\bx_t} \log p_{\theta, t}(\bx_t)$, and evaluating it is generally performed by training an alternative score network as elaborated later.

\paragraph{Diff-Instruct (DI).}
Diff-Instruct \citep{luo2023diff} is the first work that trains one-step diffusion models by minimizing the integral of KL divergence between the one-step model and the teacher diffusion model distributions. The integral Kullback-Leibler divergence between one-step model $p_\theta(.)$ and teacher diffusion model $q_0(.)$ is defined as:
$\mathcal{D}_{\text{IKL}}(p_\theta \| q_0) \coloneqq \int_{t=0}^T w(t)\mathbb{E}_{\bx_0=g_\theta(\bz),~\bz \sim \mathcal{N}(\bm{0}, \bm{I})\atop \bx_t|\bx_0\sim q_{t|0}(\bx_t|\bx_0)}\bigg\{ \log \frac{p_{\theta,t}(\bx_t)}{q_{t}(\bx_t)} \bigg\}\mathrm{d}t$.
Though IKL as a training objective is intractable because we do not have a direct dependence of $\theta$ and $p_{\theta,t}(.)$. \citep{luo2023diff} proved in theory that a tractable yet equivalent objective writes:
\begin{align}\label{eqn:di_loss}
    \mathcal{L}_{\text{DI}}(\theta) \coloneqq \int_{t=0}^T w(t)\mathbb{E}_{\bx_0=g_\theta(\bz),~\bz \sim \mathcal{N}(\bm{0}, \bm{I})\atop \bx_t|\bx_0\sim q_{t|0}(\bx_t|\bx_0)}\operatorname{SG}\bigg\{ \bm{s}_{p_{\operatorname{SG}[\theta],t}}(\bx_t) - \bm{s}_{q_t}(\bx_t) \bigg\}^T \bx_t(\theta)\mathrm{d}t,
\end{align}
Where the operator $\operatorname{SG}(\cdot)$ in \eqref{eqn:di_loss} represents the stop-gradient operator. Diff-Instruct proposed to use an online-trained fake diffusion model to approximate the stopped-gradient one-step model score function $\bm{s}_{\psi,t}(\bx_t) \approx \bm{s}_{p_{\operatorname{SG}[\theta],t}}(\bx_t)$. Such a novel use of a fake score is kept by following approaches such as DMD, SiD, etc. Two key contributions of Diff-Instruct are (1) first introducing the concept of the one-step distillation via divergence minimization; (2) introducing a technical path that derives tractable losses by proving gradient equality w.r.t the intractable divergence. 

\subsection{One-step Diffusion Model via Score-based Divergence Minimization}\label{sec:pre_sim}
\paragraph{Score Implicit Matching (SIM).} 
Inspired by Diff-Instruct and the empirical success of SiD \citep{zhou2024score}, recent work, the Score-implicit Matching (SIM) \citep{luo2025one}, has generalized the KL divergences to general score-based divergence by proving new gradient equivalence theories. The general score-divergence is defined via: $\mathcal{D}^{[0,T]}(p,q) \coloneqq \int_{t=0}^T w(t)\mathbb{E}_{\bx_t\sim \pi_t}\bigg\{ \mathbf{d}(\bm{s}_{p_t}(\bx_t) - \bm{s}_{q_t}(\bx_t)) \bigg\}\mathrm{d}t$,
where $p_{t}$ and $q_{t}$ denote the marginal densities of the diffusion process \eqref{equ:forwardSDE} at time $t$ initialized with $q$ and $p$ respectively. $w(t)$ is an integral weighting function.  $\mathbf{d}(\cdot)$ is a distance function. Clearly, we have $\mathcal{D}^{[0,T]}(p,q)=0$ if and only if all marginal score functions agree, which implies that $p_0(\bx_t) = q_0(\bx_t), ~a.s.~\pi_0$. 

SIM shows that Eq. \eqref{eqn:sim_loss} has the same parameter gradient as the intractable score-divergence: 
\begin{equation}\label{eqn:sim_loss}
    \mathcal{L}_{\text{SIM}}(\theta) 
    = \int_{t=0}^T w(t)\mathbb{E}_{\bz\sim p_z, \bx_0 =g_\theta(\bz),
    \atop \bx_t|\bx_0 \sim q_t(\bx_t|\bx_0)} \bigg\{ - \mathbf{d}'(\bm{y}_t) \bigg\}^T \bigg\{\bm{s}_{p_{\operatorname{sg}[\theta], t}}(\bx_t) - \nabla_{\bx_{t}} \log q_t(\bx_t|\bx_0) \bigg\}\mathrm{d}t,
\end{equation}
with $\bm{y}_t \coloneqq \bm{s}_{p_{\operatorname{sg}[\theta], t}}(\bx_t) - \bm{s}_{q_t}(\bx_t)$. Now the objective becomes tractable.

In Section \ref{SEC:UNI}, we use theoretical tools from Diff-Instruct and SIM to prove the gradient equivalence of tractable Uni-Instruct loss and the intractable expanded $f$-divergence. Furthermore, we are surprisingly to find that the resulting gradient expression recovers a novel combination of the Diff-Instruct and the SIM parameter gradient.

\subsection{Relation Between KL Divergence and Fisher Divergence}
Inspired by the famous De Bruijn identity~\cite{yamano2013bruijn,choi2021entropy} that describes entropy evolution along heat diffusion, notable works~\cite{sohl2009minimum,movellan1993learning,lyu2012interpretation,song2021maximum} have built the relationship between KL divergence and Fisher divergence via a diffusion expansion: the KL divergence is the integral of the Fisher divergence along a diffusion process under mild regularity conditions:
\begin{equation}
    \mathcal{D}_{\text{KL}}(p_{\theta}\vert\vert q_{0}) = \int_0^T \frac{1}{2}g^2(t)\mathbb{E}_{p_{\theta}}\left[\vert\vert\bm{s}_{p_t}(\bx_t) - \bm{s}_{q_t}(\bx_t)\vert\vert_2^2\right] dt
\end{equation}
Motivated by the relationship between KL divergence and Fisher divergence, in Section \ref{SEC:UNI}, we begin the Uni-Instruct framework by proposing a novel diffusion expansion theorem of general KL divergence: the $f$-divergence family.

\section{Uni-Instruct: Unify One-step Distillation Methods in Theory}
\label{SEC:UNI}
In this section, we introduce Uni-Instruct, a theory-driven family of approaches for the one-step distillation of score-based diffusion models. Uni-Instruct is able to unify more than 10 existing methods as special cases with proper weighting functions. It also leads to new state-of-the-art one-step generation performances on ImageNet$64\times 64$ and CIFAR10 generation benchmarks.

Uni-Instruct is built upon a novel 
\emph{diffusion expansion} theory of the $f$-divergence family. We begin by giving a brief introduction to the $f$-divergence family. We then prove a novel diffusion expansion theory of $f$-divergences in Section ~\ref{SEC::FRAMEWORK}, which acts as the target objective we would like to optimize. Then in Section ~\ref{SEC::GRADIENT}, we provide a non-trivial theorem that leads to an equivalent yet tractable loss function that shares the same parameter gradient as the intractable expanded $f$-divergence.

\subsection{Diffusion Expansion of $f$-Divergence}
\label{SEC::FRAMEWORK}

\paragraph{$f$-divergence.}
For a convex function $f(\cdot)$ on $(0,+\infty)$, where $f(1)=0$, The $f$-divergence\citep{renyi1961measures} is:
\begin{equation}
    \mathcal{D}_f(q\vert\vert p)=\int p(\mathbf{x})f\left(\frac{q(\mathbf{x})}{p(\mathbf{x})}\right)d\mathbf{x}.
\end{equation}
Appropriate choices of the function $f(\cdot)$ lead to many widely-used divergences such as reverse-KL divergence (RKL), forward-KL divergence (FKL), Jeffrey-KL divergence (JKL), Jensen-Shannon divergence (JS), and Chi-Square divergence ($\chi^2$). We put more introductions in the appendix \ref{APP:f}.

\paragraph{The Diffusion Expansion Theorem.}
We use the same notations and settings in Section \ref{sec:pre_onestep}. $g_\theta(\cdot)$ represents the one-step diffusion model, and $q_t(\cdot)$ represents the distributions of the teacher diffusion model. 
Our goal is to minimize the $f$-divergence between the output image distribution of the one-step model's distribution and the teacher diffusion model distribution $D_f(q_0||p_{\theta})$. However, since $f$-divergences are defined in the image data space, they can not directly incorporate instructions from multiple noise levels of teacher diffusion models. To address this issue, we first introduce a diffusion expansion Theorem \ref{THM:ONE} of $f$-divergence along a diffusion process. This expansion enables us to construct training objectives by considering all diffusion noise levels.


\begin{theorem}[Diffusion Expansion of $f$-Divergence]
\label{THM:ONE}
Assume \(p, q\) are distributions that both evolve along Eq. \ref{equ:forwardSDE}.  We have the following equivalence:
\begin{equation}
\label{EQ:F_DIVERGENCE}
        \mathcal{D}_{f}(q_0\vert\vert p_{\theta})= \int_0^T \frac{1}{2}g^2(t) \mathbb{E}_{p_{\theta,t}} \left[ \left( \frac{q_t}{p_{\theta,t}} \right)^2 f''\left( \frac{q_t}{p_{\theta,t}} \right) \| \bm{s}_{p_{\theta,t}}(\bx_t) - \bm{s}_{q_t}(\bx_t) \|^2_2 \right] \diff t+\mathcal{D}_f(q_T\|p_{\theta,T}),
\end{equation}

\end{theorem}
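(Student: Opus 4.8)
The plan is to carry out the classical ``relative entropy dissipation'' / De Bruijn argument, but for a general convex $f$ in place of the $\log$ used in the KL case. Write $r_t(\bx)\defeq q_t(\bx)/p_{\theta,t}(\bx)$ for the time-$t$ density ratio and set $G(t)\defeq \mathcal{D}_f(q_t\|p_{\theta,t})=\mathbb{E}_{p_{\theta,t}}[f(r_t)]$, so that $G(0)=\mathcal{D}_f(q_0\|p_\theta)$ is the left-hand side of \eqref{EQ:F_DIVERGENCE}. By the fundamental theorem of calculus, $G(0)=G(T)-\int_0^T G'(t)\,\diff t$. Since the forward SDE \eqref{equ:forwardSDE} sends every initial law to the same terminal noise distribution, $q_T=p_{\theta,T}$ and hence $G(T)=f(1)=0$; so the whole theorem reduces to establishing the pointwise dissipation identity $G'(t)=-\tfrac12 g^2(t)\,\mathbb{E}_{p_{\theta,t}}\big[r_t^2 f''(r_t)\,\|\bm{s}_{p_{\theta,t}}(\bx_t)-\bm{s}_{q_t}(\bx_t)\|_2^2\big]$.

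To obtain $G'(t)$ I would first write both marginals in continuity (transport) form using the Fokker--Planck equation of \eqref{equ:forwardSDE}: $\partial_t p_{\theta,t}=-\nabla\cdot(p_{\theta,t}\,\bm{v}^p_t)$ and $\partial_t q_t=-\nabla\cdot(q_t\,\bm{v}^q_t)$ with $\bm{v}^p_t=\bm{F}-\tfrac12 g^2\bm{s}_{p_{\theta,t}}$ and $\bm{v}^q_t=\bm{F}-\tfrac12 g^2\bm{s}_{q_t}$. Differentiating $G(t)=\int p_{\theta,t} f(r_t)\,\diff\bx$ under the integral sign and using $\partial_t r_t=(\partial_t q_t-r_t\,\partial_t p_{\theta,t})/p_{\theta,t}$, the $p_{\theta,t}$-weight in the second term cancels and one is left with $G'(t)=\int [f(r_t)-r_t f'(r_t)]\,\partial_t p_{\theta,t}\,\diff\bx+\int f'(r_t)\,\partial_t q_t\,\diff\bx$. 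Substituting the two continuity equations and integrating by parts (discarding boundary terms under the usual decay-at-infinity hypotheses) gives $G'(t)=\int \nabla[f(r_t)-r_t f'(r_t)]\cdot p_{\theta,t}\bm{v}^p_t\,\diff\bx+\int \nabla f'(r_t)\cdot q_t\bm{v}^q_t\,\diff\bx$.

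The remaining work is purely algebraic: since $\tfrac{d}{dr}[f(r)-rf'(r)]=-rf''(r)$ one has $\nabla[f(r_t)-r_t f'(r_t)]=-r_t f''(r_t)\nabla r_t$ and likewise $\nabla f'(r_t)=f''(r_t)\nabla r_t$, while $q_t=r_t p_{\theta,t}$; the two integrals therefore collapse to $\int r_t f''(r_t)\,p_{\theta,t}\,\nabla r_t\cdot(\bm{v}^q_t-\bm{v}^p_t)\,\diff\bx$. Plugging in $\bm{v}^q_t-\bm{v}^p_t=\tfrac12 g^2(\bm{s}_{p_{\theta,t}}-\bm{s}_{q_t})$ and $\nabla r_t=r_t\nabla\log r_t=r_t(\bm{s}_{q_t}-\bm{s}_{p_{\theta,t}})$ produces exactly $G'(t)=-\tfrac12 g^2(t)\int r_t^2 f''(r_t)\,p_{\theta,t}\,\|\bm{s}_{p_{\theta,t}}-\bm{s}_{q_t}\|_2^2\,\diff\bx$, which is the dissipation identity; integrating this against $-\int_0^T(\cdot)\,\diff t$ and using $G(T)=0$ yields \eqref{EQ:F_DIVERGENCE}. (As a sanity check, $f(r)=-\log r$ gives $r^2 f''(r)\equiv 1$ and recovers the KL expansion quoted just before the theorem.)

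I expect the genuine obstacle to be analytic rigor rather than computation: one must justify differentiation under the integral sign, the vanishing of the boundary terms in the integration by parts, integrability of $r_t^2 f''(r_t)$ against $p_{\theta,t}$ along the whole trajectory, and the terminal identity $G(T)=0$ (or, for finite $T$, absorb the residual $G(T)$ as the same mild approximation already present in the KL special case). I would package these as a standing regularity assumption on $(p,q,f,\bm{F},g)$ --- matching the ``mild regularity conditions'' under which the analogous KL identity is stated --- and keep the derivation at the level of the formal manipulations above, which is all that is needed to identify the expanded objective.
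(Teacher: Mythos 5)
Your proposal is correct and follows essentially the same route as the paper's proof: differentiate $G(t)=\mathcal{D}_f(q_t\|p_{\theta,t})$ under the integral sign, use the Fokker--Planck equations, integrate by parts (discarding boundary terms), and simplify to the pointwise dissipation identity $G'(t)=-\tfrac12 g^2(t)\,\mathbb{E}_{p_{\theta,t}}\big[r_t^2 f''(r_t)\|\bm{s}_{p_{\theta,t}}-\bm{s}_{q_t}\|_2^2\big]$. Indeed your decomposition $G'(t)=\int[f(r_t)-r_tf'(r_t)]\partial_t p_{\theta,t}+\int f'(r_t)\partial_t q_t$ is exactly the paper's Eqs.~(15)--(17) after combining the product-rule and quotient-rule steps. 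Two small presentational advantages of your write-up are worth noting: (i) writing Fokker--Planck as continuity equations with velocity fields $\bm{v}^p_t,\bm{v}^q_t$ and exploiting $\tfrac{d}{dr}[f(r)-rf'(r)]=-rf''(r)$ packages the gradient algebra of the paper's Eqs.~(18)--(20) into a one-line collapse, which is cleaner; and (ii) you make explicit the endpoint condition $G(T)=\mathcal{D}_f(q_T\|p_{\theta,T})\to 0$ needed to pass from the dissipation rate to the claimed integral identity --- the paper's proof stops at the rate formula and tacitly assumes this terminal vanishing, so spelling it out (and the requirement $T\to\infty$ or sufficiently large $T$ for it to hold approximately) is a genuine improvement in rigor, not merely style.
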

We give a complete proof with regularity analysis in Appendix~\ref{APP:ONE}. For simplicity, we assume $\mathcal{D}_f(q_T\|p_{\theta,T})=0$ and ignore the last term in Eq. \ref{EQ:F_DIVERGENCE} in the following section.
This fundamental expansion (Eq.~\ref{EQ:F_DIVERGENCE}) expands the static $f$-divergence in data space into an integral of divergences along the diffusion process. \textbf{\emph{However, directly optimizing objective \eqref{EQ:F_DIVERGENCE} is not tractable}} because we do not know the exact expressions of either the density $p_{\theta,t}$ or the score function $\bm{s}_{p_{\theta,t}}(\cdot)$ of the diffused one-step model's distribution. To step towards a tractable objective, we derive the $\theta$ gradient of the expanded $f$-divergence \eqref{EQ:F_DIVERGENCE} in Theorem \ref{thm:f_divergence_gradient}.

\subsection{Theories to Get Tractable Losses}
\label{SEC::GRADIENT}
To tackle the intractable issue of the expanded $f$-divergence, we prove a novel parameter gradient equivalence theorem \ref{thm:f_divergence_gradient}.

\begin{theorem}[Gradient Equality Theorem of the Expanded $f$-divergence]
\label{thm:f_divergence_gradient}
Let $q_t(\bx)$ and $p_{\theta,t}(\bx)$ be probability density functions evolving under the Fokker-Planck dynamics, and $f: \mathbb{R}_+ \to \mathbb{R}$ is a four-times differentiable convex function. The parameter gradient of the $f$-divergence rate satisfies:
{\scriptsize
\begin{align}
     &\frac{1}{2}g^2(t)\nabla_\theta \Bigg\{ \mathbb{E}_{ p_{\theta,t}} \left[ \left( \frac{q_t}{p_{\theta,t}} \right)^2 f''\left( \frac{q_t}{p_{\theta,t}} \right) \| \bm{s}_{p_{\theta,t}}(\bx_t) - \bm{s}_{q_t}(\bx_t) \|^2_2 \right] \Bigg\} \nonumber \\
    =& -\frac{1}{2}g^2(t)\frac{\partial}{\partial \theta}\Bigg\{ \mathbb{E}_{p_{\theta,t}} \Bigg[ \text{SG}\Bigg(\mathcal{C}_1\left(\frac{q_t}{p_{\theta,t}}\right) \Bigg)  2\left(\bm{s}_{q_t}(\bx_t) - \bm{s}_{p_{sg[\theta],t}}(\bx_t) \right)\left(\bm{s}_{p_{sg[\theta],t}}(\bx_t) - \nabla_{\bx_t}\log q_t(\bx_t\mid \bx_0)\right) \Bigg]\Bigg\} \nonumber \\
     & -\frac{1}{2}g^2(t)\frac{\partial}{\partial \theta}\Bigg\{\mathbb{E}_{p_{\theta,t}} \Bigg[\text{SG}\Bigg( \mathcal{C}_2\left(\frac{q_t}{p_{\theta,t}}\right) \left( \bm{s}_{q_t}(\bx_t) - \bm{s}_{p_{\theta,t}}(\bx_t) \right) \| \bm{s}_{q_t}(\bx_t) - \bm{s}_{p_{\theta,t}}(\bx_t) \|_2^2 \Bigg) \bx_t \Bigg]\Bigg\}
\label{EQU:LOSS}
\end{align}
}
where $\text{SG}$ donates stop gradient operator, and the curvature coupling coefficient $\mathcal{C}(r)$ are defined as:

\begin{align}
    \mathcal{C}_1(r) \coloneqq  r^3 f'''(r),~~
    \mathcal{C}_2(r) \coloneqq 2r^2 f''(r) + 4r^3 f'''(r) + r^4 f''''(r), \quad r \coloneqq \frac{q_t(\bx)}{p_{\theta,t}(\bx)}
\end{align}
\end{theorem}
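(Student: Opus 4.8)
\textbf{Proof proposal for Theorem \ref{thm:f_divergence_gradient}.}

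The plan is to start from the integrand of the expanded $f$-divergence from Theorem \ref{THM:ONE}, namely $\Phi_t(\theta) \coloneqq \mathbb{E}_{p_{\theta,t}}\!\left[\,r^2 f''(r)\,\|\bm{s}_{p_{\theta,t}}(\bx_t) - \bm{s}_{q_t}(\bx_t)\|_2^2\,\right]$ with $r = q_t/p_{\theta,t}$, and compute $\nabla_\theta \Phi_t$ directly. There are two sources of $\theta$-dependence: the sampling distribution $p_{\theta,t}$ (which enters both through the expectation measure and through the density ratio $r$) and the student score $\bm{s}_{p_{\theta,t}}$. I would separate these contributions using the standard ``score trick'' for differentiating an expectation over a parameterized density, $\nabla_\theta \mathbb{E}_{p_{\theta,t}}[h] = \mathbb{E}_{p_{\theta,t}}[\nabla_\theta h] + \mathbb{E}_{p_{\theta,t}}[h\,\nabla_\theta \log p_{\theta,t}]$, together with the reparameterization $\bx_t = \bx_t(\theta)$ coming from $\bx_0 = g_\theta(\bz)$ and the conditional diffusion kernel. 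The key structural fact to exploit, borrowed from the Diff-Instruct and SIM machinery cited in the excerpt, is that the troublesome term $\nabla_\theta \bm{s}_{p_{\theta,t}}(\bx_t)$ — the gradient of the student score through its own parameter — integrates against $p_{\theta,t}$ to something expressible via $\bm{s}_{p_{\theta,t}} - \bm{s}_{q_t}$ after integration by parts, so it never needs to be evaluated explicitly.

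The concrete steps, in order: (1) expand $\nabla_\theta \Phi_t$ into three pieces — variation of the weight $r^2 f''(r)$, variation of the squared-score-difference factor, and the measure/score-trick term — writing $\bx_t = \bx_t(\theta)$ throughout; (2) for the factor $\|\bm{s}_{p_{\theta,t}} - \bm{s}_{q_t}\|_2^2$, split its $\theta$-derivative into the part acting through the spatial argument $\bx_t(\theta)$ and the part acting through the $\theta$ inside $\bm{s}_{p_{\theta,t}}$; handle the latter with the SIM-style identity that replaces $\nabla_\theta \bm{s}_{p_{\theta,t}}$ under $\mathbb{E}_{p_{\theta,t}}$ by terms involving only $\bm{s}_{p_{\mathrm{sg}[\theta],t}} - \nabla_{\bx_t}\log q_t(\bx_t\mid\bx_0)$ — this is what produces the first line on the right-hand side of \eqref{EQU:LOSS} and the coefficient $\mathcal{C}_1(r) = r^3 f'''(r)$; (3) collect the remaining terms — the weight variation via $\nabla_\theta r = -r\,\nabla_{\bx_t}\log p_{\theta,t}\cdot \partial_\theta \bx_t + \dots$ computed via Fokker–Planck and the chain rule, plus the score-trick term — and show, after integration by parts in $\bx_t$ against $p_{\theta,t}$, that they assemble into the second line of \eqref{EQU:LOSS} with coefficient $\mathcal{C}_2(r) = 2r^2 f''(r) + 4r^3 f'''(r) + r^4 f''''(r)$, the $f'''$ and $f''''$ terms arising precisely from differentiating $r^2 f''(r)$ and from the extra factor of $r$ picked up in the integration by parts; (4) insert the stop-gradient operators consistently — on the curvature coefficients $\mathcal{C}_1, \mathcal{C}_2$ and on the score differences that should be treated as fixed targets — and verify the placement does not alter the parameter gradient, which is the whole point of the equivalence; (5) reinstate the $\tfrac12 g^2(t)$ prefactor.

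I expect the main obstacle to be Step (2)–(3): carefully tracking how the coefficients $\mathcal{C}_1$ and $\mathcal{C}_2$ emerge with exactly the stated combinations of $f''$, $f'''$, $f''''$ and powers of $r$. This requires the four-times differentiability hypothesis on $f$, and it is easy to drop or double-count a term when differentiating the product $r^2 f''(r)$ and then performing integration by parts, since each integration by parts against $p_{\theta,t}$ converts a spatial derivative of $r$ into an extra factor of $r$ (because $\nabla_{\bx_t} r = r(\bm{s}_{q_t} - \bm{s}_{p_{\theta,t}})$) and simultaneously hits the weight. The bookkeeping is further complicated by needing to keep the reparameterized $\bx_t(\theta)$ and the parameter-only dependence of $\bm{s}_{p_{\theta,t}}$ cleanly separated so that the SIM identity can be applied to exactly the right sub-term. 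A secondary, more technical obstacle is justifying the regularity needed to differentiate under the integral sign and to discard boundary terms in the integration by parts — I would defer these to the same mild decay-and-integrability conditions used in the proof of Theorem \ref{THM:ONE} and in the Diff-Instruct/SIM references, stating them explicitly rather than re-deriving them.
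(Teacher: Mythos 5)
Your plan matches the paper's proof in essentials: expand $\nabla_\theta\Phi_t$ by the product rule into contributions from the measure $p_{\theta,t}$, from the ratio powers and from $f''(r)$, and from the score-difference norm; use the reparameterization lemma (Lemma~\ref{lemma:gradient}) to convert each resulting $\nabla_\theta p_{\theta,t}$ term into a pathwise $\partial_\theta\bx_t$ integral; and use the SIM gradient identity (Lemma~\ref{lemma:sim}) to replace the intractable $\nabla_\theta\bm{s}_{p_{\theta,t}}$ piece, which is precisely what produces the first line of \eqref{EQU:LOSS}. A landmark worth identifying to keep the bookkeeping under control: the variation of $(q_t/p_{\theta,t})^2$ is exactly $-2$ times the measure-variation term (the paper's $B=-2A$), so those two pieces collapse at the outset, and the extra powers of $r$ in $\mathcal{C}_1,\mathcal{C}_2$ arise from the chain rule $\nabla_{\bx}r = r(\bm{s}_{q_t}-\bm{s}_{p_{\theta,t}})$ applied inside the reparameterization lemma rather than from a separate integration-by-parts step as you suggest.
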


\begin{remark}
    It is worth noting that in Theorem \ref{thm:f_divergence_gradient}, we derived an equality of the gradient of the intractable expanded $f$-divergence. The right side of the equality is two terms, which are gradients of two tractable functions. With this observation, we can see that minimizing the tractable right-hand side of equality \eqref{EQU:LOSS} using gradient-based optimization algorithms such as Adam \citep{kingma2014adam} is equivalent to minimizing the intractable expanded $f$-divergence, which lies in the left-hand side.  
\end{remark}

We notice that the gradient of the training objective admits a composition of the Diff-Instruct\citep{luo2023diff} gradient and a SIM\citep{luo2025one} gradient. Therefore, we can formally write down our tractable loss function as:
\begin{align}
\label{EQU:CONNECTION}
&\mathcal{L}_{UI}(\theta) = \int_0^T -\frac{1}{2}g^2(t)\left(\lambda_f^{\mathrm{DI}} \mathcal{L}_{\mathrm{DI}} + \lambda_f^{\mathrm{SIM}} \mathcal{L}_{\mathrm{SIM}}\right) \diff t,\\
&\mathcal{L}_{\mathrm{SIM}} = \mathbb{E}_{ p_{\theta,t}} \left[ \text{SG}\left(\mathcal{C}_1\left(r\right) \right)2  \left(\bm{s}_{q_t}(\bx_t) - \bm{s}_{p_{sg[\theta],t}}(\bx_t) \right)\left(\bm{s}_{p_{sg[\theta],t}}(\bx_t) - \nabla_{\bx_t}\log q_t(\bx_t\mid \bx_0)\right)  \right], \nonumber \\
\label{QEU:SIMloss}
& \mathcal{L}_{\mathrm{DI}} = \mathbb{E}_{p_{\theta,t}} \left[\text{SG}\left( \mathcal{C}_2\left(r\right) \left( \bm{s}_{q_t}(\bx_t) - \bm{s}_{p_{\theta,t}}(\bx_t) \right) \| \bm{s}_{q_t}(\bx_t) - \bm{s}_{p_{\theta,t}}(\bx_t) \|_2^2 \right) x_t \right],
\end{align}

where the weighting coefficients are determined by the $f$-divergence selection, we provide our completed proofs in Appendix~\ref{APP:GRADIENT}. 

\paragraph{Density Ratio Estimation via an Auxiliary GAN Loss}
Notice that the tractable loss function \eqref{EQU:CONNECTION} requires the density ratio between the one-step model and teacher diffusion. For this, we train a GAN discriminator along the process, where the discriminator output serves as an estimator. This use of GAN discriminator is also widely applicable in other works like SiDA\citep{zhou2024adversarial} and $f$-distill \citep{xu2025one}. Details on why the GAN discriminator recovers the density ratio can be found in Theorem \ref{thm:density_ratio}.

\paragraph{Practical Algorithm of the Uni-Instruct}
We can now present the formal training algorithm of Uni-Instruct. As is shown in  Algorithm \ref{alg:diff_instruct}, we maintain the active training status of three models: one-step diffusion model, online fake score network, and a discriminator. The training is performed in two steps alternatively: we first optimize the discriminator with real data, and then optimize the online fake score network with score matching loss. After that, we optimize the one with Uni-Instruct loss, which is given by the previous two models. Uni-Instruct loss varies based on the divergence we choose. We provide example divergences in Tab. \ref{TAB:EXAMPLE}. Note that through choosing proper divergence, we can recover the distillation loss of Diff-Instruct \citep{luo2023diff}, SIM \citep{luo2025one}, as well as $f$-distill \citep{xu2025one}. To be more specific: $\mathcal{L}_{\mathrm{SIM}}$ vanishes when selecting $\chi^2$-divergence, while $\mathcal{L}_{\mathrm{DI}}$ vanishes if we choose forward-KL, reverse-KL, and Jeffrey-KL divergence.

\begin{algorithm}[!htbp]
\caption{Uni-Instruct Algorithm on Distilling One Step Diffusion Model}
\label{alg:diff_instruct}

\textbf{Input:} pre-trained DM \( \bm{s}_{q_{t}} \), generator \( g_\theta \), fake score network \( \bm{s}_{\phi} \), discriminator \(D_{\lambda}\), divergence \(f\), GAN weight \(w_{\text{GAN}}\), diffusion timesteps weighting $w(t)$.

\begin{algorithmic}[1]
\WHILE{not converge}
    \STATE Sample real images and random noises: $\bx_{\text{real}} \sim p_{\text{data}}, \epsilon \sim \mathcal{N}(0,I)$
    \STATE Generate fake images: $\bx_{\text{fake}}=g_{\theta}(\epsilon)$
    \STATE Update $D_{\lambda}$ with discriminator loss: $\mathcal{L}_D = -\mathbb{E}_{\bx_{\text{real}}}[\log D_{\lambda}(\bx_{\text{real}})] - \mathbb{E}_{\bx_{\text{fake}}}[\log(1 - D_{\lambda}(\bx_{\text{fake}}))]$
    \STATE Update $s_{\phi}$ with denoising score matching loss:
    
    $
    \mathcal{L}_{\text{diffusion}} = \int_{0}^{T} w(t) \, \mathbb{E}_{\bx_t \mid \bx_{\text{fake}} \sim p_{\theta,t}(\bx_t \mid \bx_{\text{fake}})} \left\| \bm{s}_\phi(\bx_t, t) - \nabla_{\bx_t} \log p_t(\bx_t \mid \bx_{\text{fake}}) \right\|_2^2 \diff t
    $
    \STATE Calculate Uni-Instruct loss: $\mathcal{L}_{\text{Uni}} = \text{Equation~\ref{EQU:CONNECTION}}$ 
    \STATE Calculate adversarial loss (non-saturating): $\mathcal{L}_{\text{GAN}} = -\mathbb{E}_{\bx_{\text{fake}}}[\log D_{\lambda}(\bx_{\text{fake}})]$
    \STATE Update $g_{\theta}$ with total loss: $\mathcal{L}_{\text{total}} = \mathcal{L}_{\text{Uni}} + w_{\text{GAN}} \cdot \mathcal{L}_{\text{GAN}}$
\ENDWHILE
\RETURN $g_{\theta}$
\end{algorithmic}
\end{algorithm}

\subsection{How Uni-Instruct can Unify Previous Methods}\label{SEC::CONNECTIONS}
In this section, we show in what cases Uni-Instruct can recover previous methods. As is shown in Tab. \ref{TAB:TEASER}, Uni-Instruct can effectively unify more than 10 existing distillation methods for one-step diffusion models, such as Diff-Instruct, DMD, $f$-distill, SIM, and SiD.

\paragraph{DI, DMD, and $f$-distill are Uni-Instruct with additional time weighting.} DI \citep{luo2023diff} and DMD \citep{yin2024one} integrates KL divergence along a diffusion process: $D_{\text{IKL}}(p_{\theta}\vert\vert q_0) := \int_0^T w(t) D_{\text{KL}}(p_{\theta}\vert\vert q_0) dt$. Furthermore, $f$-distill \citep{xu2025one} replace KL with general $f$-divergence. Our goal, on the other hand, is to match these two distributions only at the original distributions: $D_{f}(q_0\vert\vert p_{\theta})$, which requires no specific weightings $\omega(t)$. Our framework is more theoretically self-consistent for those ad-hoc weightings that may induce mismatches between the optimization target and the true distribution divergence. However, with additional weightings, Uni-Instruct can recover $f$-distill.
\begin{corollary}
\label{cor:iff}
Suppose $W(t)=\int w(t)dt+C, W(0)=0$, the expression of Uni-Instruct with an extra weighting $W(t)$ is equivalent to $f$-distill:
{\scriptsize   
\begin{equation}
    \int_0^T \frac{1}{2}g^2(t)W(t) \mathbb{E}_{p_{\theta,t}} \left[ \left( \frac{q_t}{p_{\theta,t}} \right)^2 f''\left( \frac{q_t}{p_{\theta,t}} \right) \| \bm{s}_{p_{\theta,t}}(\bx_t) - \bm{s}_{q_t}(\bx_t) \|^2_2 \right] \diff t = \int_0^T w(t)\mathcal{D}_f(q_0\vert\vert p_{\theta,t}) \diff t.
\end{equation}
}
\end{corollary}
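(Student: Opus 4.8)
\textbf{Proof proposal for Corollary~\ref{cor:iff}.}

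The plan is to prove this as essentially a restatement of Theorem~\ref{THM:ONE} combined with an elementary exchange of the order of integration (Fubini), so the corollary is really a bookkeeping consequence rather than a fresh theorem. First, I would start from the right-hand side and apply the diffusion expansion Theorem~\ref{THM:ONE}, but in a slightly more general form: the theorem as stated gives $\mathcal{D}_f(q_0\|p_\theta)$ as an integral over $[0,T]$ of the Fisher-type rate, but inspecting its proof (an integration of the time-derivative of the $f$-divergence rate along the Fokker-Planck flow) shows that for any intermediate time $s$ one has $\mathcal{D}_f(q_s\|p_{\theta,s}) = \int_s^T \tfrac12 g^2(t)\,\mathbb{E}_{p_{\theta,t}}\big[(q_t/p_{\theta,t})^2 f''(q_t/p_{\theta,t})\,\|\bm{s}_{p_{\theta,t}}-\bm{s}_{q_t}\|_2^2\big]\,\diff t$, since the noise distribution at $t=T$ makes the boundary term vanish. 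So I would first establish (or cite from Appendix~\ref{APP:ONE}) this ``partial expansion'' identity, writing $R(t)$ for the rate integrand so that $\mathcal{D}_f(q_s\|p_{\theta,s}) = \int_s^T R(t)\,\diff t$.

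Next, I substitute this into the right-hand side of the claimed identity: $\int_0^T w(s)\,\mathcal{D}_f(q_0\|p_{\theta,s})\,\diff s$. Here I would note that the corollary's right-hand side actually writes $\mathcal{D}_f(q_0\|p_{\theta,s})$, so I should be careful about which argument is diffused; reading it as $\mathcal{D}_f(q_s\|p_{\theta,s})$ (both distributions diffused to level $s$, matching the $f$-distill convention where the divergence is taken between the two diffused marginals) is the interpretation that makes the statement true, and I would flag this. Then the right-hand side becomes $\int_0^T w(s)\big(\int_s^T R(t)\,\diff t\big)\,\diff s$. Applying Fubini on the triangle $\{0 \le s \le t \le T\}$ swaps this to $\int_0^T R(t)\big(\int_0^t w(s)\,\diff s\big)\,\diff t = \int_0^T R(t)\,W(t)\,\diff t$, using the hypothesis $W(t) = \int w(t)\,\diff t + C$ with $W(0)=0$, i.e. $W(t) = \int_0^t w(s)\,\diff s$. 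Unfolding $R(t)$ back into its explicit form gives exactly the left-hand side of the corollary, $\int_0^T \tfrac12 g^2(t) W(t)\,\mathbb{E}_{p_{\theta,t}}[(q_t/p_{\theta,t})^2 f''(q_t/p_{\theta,t})\|\bm{s}_{p_{\theta,t}}-\bm{s}_{q_t}\|_2^2]\,\diff t$, which closes the argument.

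The main obstacle is not the calculation — Fubini is routine once the pieces are in place — but rather two points of rigor and interpretation. The first is making sure the ``partial expansion'' $\mathcal{D}_f(q_s\|p_{\theta,s}) = \int_s^T R(t)\,\diff t$ is actually what the Appendix~\ref{APP:ONE} argument delivers; this requires that the boundary term at the lower limit $s$ reconstructs $\mathcal{D}_f$ at time $s$ (true by the fundamental theorem of calculus applied to $s \mapsto \mathcal{D}_f(q_s\|p_{\theta,s})$) and that the integrability/regularity conditions invoked there hold uniformly enough to justify the Fubini swap — one needs $R(t) \ge 0$ (which holds since $f$ is convex so $f'' \ge 0$, making the integrand nonnegative) so that Tonelli applies without extra hypotheses. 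The second is simply the notational reconciliation between $\mathcal{D}_f(q_0\|p_{\theta,s})$ as written and $\mathcal{D}_f(q_s\|p_{\theta,s})$ as needed; I would resolve this explicitly in the proof so the reader is not misled, and note that either way the corollary's content is that prepending the weighting $W(t)$ to the Uni-Instruct rate integral exactly reproduces the time-integrated $f$-divergence objective of $f$-distill.
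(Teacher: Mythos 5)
Your proposal is correct and is essentially the paper's own argument: the paper writes the rate integrand as $-\frac{\mathrm{d}}{\mathrm{d}t}\mathcal{D}_f(q_t\|p_{\theta,t})$ and integrates by parts against $W(t)$ (using $W(0)=0$ and the vanishing of the divergence at $t=T$ to kill the boundary terms), which is exactly your Fubini/Tonelli swap on the triangle $\{0\le s\le t\le T\}$ in a different guise. Your write-up is in fact slightly more careful than the paper's, since you make explicit the two points it glosses over — the need for the boundary term at $t=T$ to vanish (equivalently, your partial expansion $\mathcal{D}_f(q_s\|p_{\theta,s})=\int_s^T R(t)\,\mathrm{d}t$), and the fact that the $\mathcal{D}_f(q_0\|p_{\theta,t})$ appearing on the right-hand side of the corollary must be read as $\mathcal{D}_f(q_t\|p_{\theta,t})$, which is what the paper's own proof actually manipulates.
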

Complete proof is in Appendix \ref{APP:IFF}, which leverages integration by parts and Theorem \ref{THM:ONE}.

\paragraph{SIM is a Special Case of Uni-Instruct.} Suppose $\mathbf{d}(\cdot)$ is l2-norm, SIM in Section \ref{sec:pre_sim} becomes:
$\int_0^T \omega(t)\mathbb{E}_{ p_{\theta,t}} \left[ \| \bm{s}_{p_{\theta,t}}(\bx_t) - \bm{s}_{q_t}(\bx_t)  \|^2_2 \right] \diff t$. It turns out that SIM is a special case of Uni-Instruct. We find that the right-hand side of Theorem~\ref{THM:ONE} will degenerate to SIM through selecting the divergence as reverse-KL divergence: $\mathcal{D}_{\text{KL}}(p_{\theta}||q_0)= \frac{1}{2}\int_0^T g^2(t)\mathbb{E}_{p_{\theta,t}} \left[  \| \bm{s}_{p_{\theta,t}}(\bx_t) - \bm{s}_{q_t}(\bx_t) \|^2_2 \right]  \diff t$.
As a result, SIM is secretly minimizing the KL divergence between the teacher model and the one-step diffusion model, which is a special case of our $f$-divergence. Beyond this specific configuration, Uni-Instruct offers enhanced flexibility through its support for alternative divergence metrics, including FKL and JKL, which enable improved mode coverage. This generalized formulation contributes to superior empirical performance, achieving lower FID values.

\subsection{Text-to-3D Generation using Uni-Instruct}
\vspace{-6pt}
Recent advances in 3D text-to-image synthesis leverage 2D diffusion models as priors. Dreamfusion \citep{poole2022dreamfusion} introduced score distillation sampling (SDS) to align NeRFs with text guidance, while ProlificDreamer \citep{wang2023prolificdreamer} improved quality via variational score distillation (VSD). These methods mainly use reverse KL divergence. Uni-Instruct generalizes this framework by allowing flexible divergence choices (e.g., FKL, JKL), enhancing mode coverage and geometric fidelity, and unifying SDS and VSD as special cases.


\vspace{-6pt}
\section{Experiments}
\label{SEC:EXP}
\vspace{-6pt}
In this section, we first demonstrate Uni-Instruct’s 
strong capability to generate high-quality
samples on benchmark datasets through efficient distillation. Followed by text-to-3D generation, which illustrates the wide application of Uni-Instruct.

\begin{figure}[htbp]
    \centering

    \begin{minipage}[b]{0.325\textwidth}
        \includegraphics[width=\linewidth]{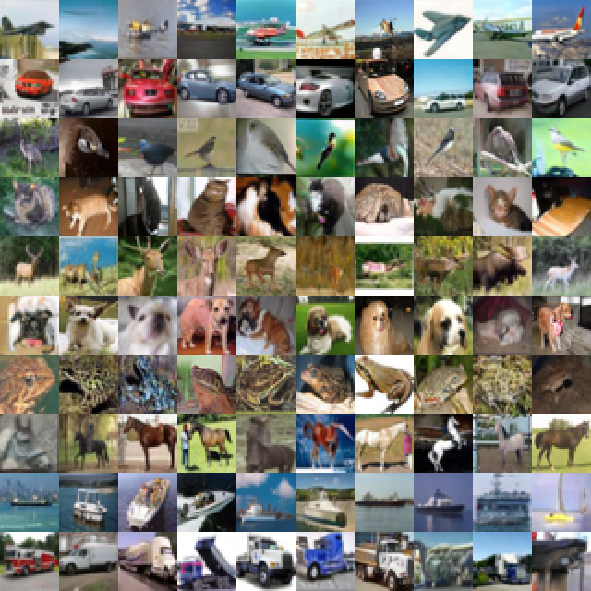}
    \end{minipage}
    \hfill
    \begin{minipage}[b]{0.325\textwidth}
        \includegraphics[width=\linewidth]{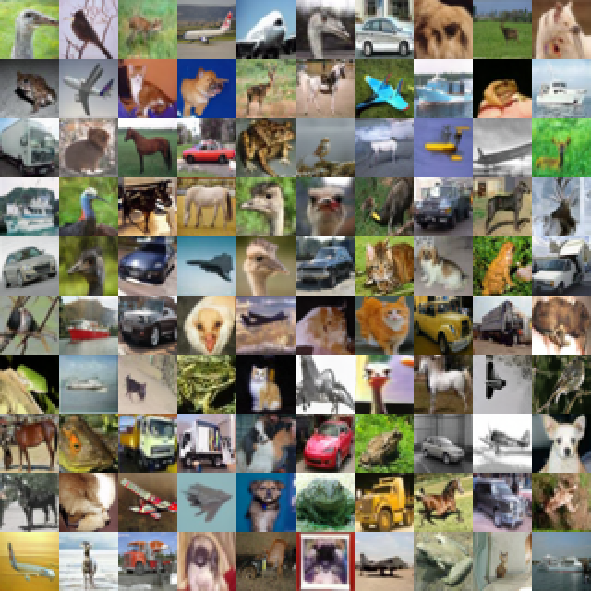}
    \end{minipage}
    \hfill
    \begin{minipage}[b]{0.325\textwidth}
        \includegraphics[width=\linewidth]{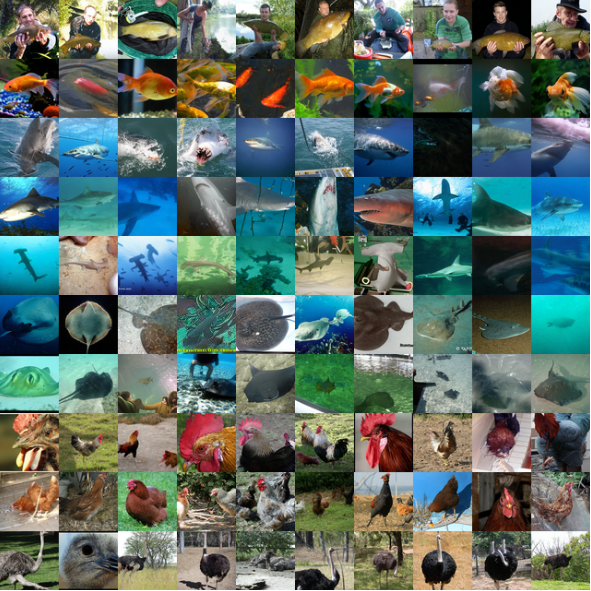}
    \end{minipage}

    \caption{Generated samples from Uni-Instruct one-step generators that are distilled from pre-trained diffusion models on different datasets. \textit{Left:} CIFAR10 (unconditional); \textit{Mid:} CIFAR10 (conditional); \textit{Right:} ImageNet $64\times64$ (conditional)}.
    \label{fig:1x3-benchmark}
    \vspace{-0.5cm}
\end{figure}

\begin{figure}[!htbp]
\centering
\begin{minipage}[t]{0.48\textwidth}
\scriptsize  
\centering
\captionof{table}{Comparison image generation on CIFAR-10 (unconditional). The best one/few-step generator under the FID metric is highlighted with \textbf{bold}. \textbf{F.S.} means \emph{from scratch}. \textbf{L.T.} means \emph{resume and Longer Training}.}
\label{tab:cifar-uncond}
\begin{tabular}{@{}llcc@{}}
\toprule
\textbf{Family} & \textbf{Model} & NFE & \textbf{FID} ($\downarrow$)  \\
\midrule
\multirow{1}{*}{Teacher} & VP-EDM \citep{karras2022elucidating}  & 35 & 1.97 \\
\midrule
\multirow{5}{*}{Diffusion} 
& DPM-Solver-3 \citep{zheng2023dpm} & 48 & 2.65 \\
& DDIM \citep{song2020denoising} & 100 & 4.16\\
& DDPM \citep{ho2020denoising}  & 1000 & 3.17 \\
& NCSN++ \citep{song2020score}  & 1000 & 2.38 \\
& iDDPM \citep{nichol2021improved}  & 4000 & 2.90  \\
\midrule
\multirow{3}{*}{Consistency} 
& sCT \citep{lu2024simplifying} & 2 & 2.06 \\
& ECT \citep{geng2024consistency} & 2 & 2.11 \\
& iCT \citep{song2023improved} & 2 & 2.46 \\
\midrule
\multirow{18}{*}{Few Step} 
& PD \citep{salimans2022progressive} & 2 & 4.51 \\
& IMM \citep{zhou2025inductive} & 2 & 1.98 \\
& TRACT \citep{berthelot2023tract} & 2 & 3.32 \\
& KD \citep{luhman2021knowledge} & 1 & 9.36 \\
& Diff. ProjGAN \citep{wang2022diffusion}  & 1 & 2.54 \\
&PID \citep{tee2024physics} & 1 & 3.92 \\
&DFNO \citep{zheng2023fast} & 1 & 3.78 \\
& iCT-deep \citep{song2023improved} & 1 & 2.51 \\
& Diff-Instruct \citep{luo2023diff}  & 1 & 4.53 \\
& DMD \citep{yin2024one}  & 1 & 3.77 \\
& CTM \citep{kim2023consistency} & 1 & 1.98 \\
& SiD \citep{zhou2024score}  & 1 & 1.92 \\
& SiDA \citep{zhou2024adversarial}  & 1 & 1.52  \\
& Si$\text{D}^2$A \citep{zhou2024adversarial}  & 1 & 1.50  \\
& \textbf{Uni-Instruct with RKL (F.S.)}  & 1 & 1.52  \\
& \textbf{Uni-Instruct with FKL (F.S.)}  & 1 & 1.52  \\
& \textbf{Uni-Instruct with FKL (L.T.)}  & 1 & 1.48  \\
& \textbf{Uni-Instruct with JKL (F.S.)}  & 1 & \textbf{1.46}  \\
\bottomrule
\end{tabular}
\end{minipage}
\hfill
\begin{minipage}[t]{0.5\textwidth}
\scriptsize  
\centering
\captionof{table}{Class conditional ImageNet 64$\times$64 generation results. “Direct generation” and “Distillation” methods require one NFE, while the teacher uses 35 NFE. \textbf{F.S.} means \emph{from scratch}. \textbf{L.T.} means \emph{resume and Longer Training}.}
\label{tab:imagenet}
\begin{tabular}{@{}llcc@{}}
\toprule
\textbf{Family} & \textbf{Model} & NFE & \textbf{FID} ($\downarrow$) \\
\midrule
\multirow{1}{*}{Teacher} 
& VP-EDM \citep{karras2022elucidating} & 511 & 1.36 \\
\midrule 
\multirow{5}{*}{Diffusion} 
& RIN \citep{jabri2022scalable}& 1000 &1.23 \\
& DDPM \citep{ho2020denoising} & 250 & 11.00 \\
&ADM \citep{dhariwal2021diffusion} & 250 & 2.07 \\
&DiT-L/2 \citep{peebles2023scalable} & 250 & 2.91 \\
&U-ViT \citep{bao2023all} & 50 & 4.26 \\
\midrule
\multirow{3}{*}{Consistency} 
& iCT \citep{song2023improved}& 1 &4.02 \\
& iCT-deep \citep{song2023improved} & 1 & 3.25\\
&ECT \citep{geng2024consistency} & 1 & 2.49 \\
\midrule
\multirow{18}{*}{Few Step} 
&MMD \citep{salimans2024multistep} & 8 & 1.24 \\
&G-istill \citep{meng2023distillation} & 8 & 2.05 \\
&PD \citep{salimans2022progressive} & 2 & 8.95 \\
&Diff-Instruct \citep{luo2023diff} & 1 & 5.57 \\
&PID \citep{tee2024physics} & 1 & 9.49 \\
&iCT-deep \citep{song2023improved} & 1 & 3.25 \\
&EMD-16 \citep{xie2024distillation} & 1 & 2.20 \\
&DFNO \citep{zheng2023fast} & 1 & 7.83 \\
&DMD2+longer training \citep{yin2024improved} & 1 & 1.28 \\
&CTM \citep{kim2023consistency} & 1 & 1.92 \\
&SiD \citep{zhou2024score} & 1 & 1.71 \\
&SiDA \citep{zhou2024adversarial}  & 1 & 1.35  \\
&SiD$^2$A \citep{zhou2024adversarial}  & 1 & 1.10  \\
&$f$-distill \citep{xu2025one} & 1 & 1.16 \\
&\textbf{Uni-Instruct with RKL(F.S.)} & 1 & 1.35  \\
&\textbf{Uni-Instruct with JKL(F.S.)} & 1 & {1.28}  \\
&\textbf{Uni-Instruct with FKL(F.S.)}  & 1 & 1.34  \\
&\textbf{Uni-Instruct with FKL(L.T.)} & 1 & \textbf{1.02}  \\
\bottomrule
\end{tabular}
\end{minipage}

\vspace{-0.1cm}

\end{figure}

\subsection{Benchmark Datasets Generation}
\paragraph{Experiment Settings} We evaluate Uni-Instruct for both conditional and unconditional generations on CIFAR10 \citep{krizhevsky2009learning} and conditional generations on ImageNet $64\times64$\citep{deng2009imagenet}. We use EDM \citep{karras2022elucidating} as teacher models. In each experiment, we implement three types of divergences: Reverse-KL (RKL), Forward-KL (FKL), and Jeffrey-KL (JKL) divergence. We borrow the parameters settings from SiDA \citep{zhou2024adversarial}, which takes the output from the diffusion unet encoder directly as the discriminator. As for evaluation metrics, we use FID, as it simultaneously quantifies both image quality and diversity.

\paragraph{Performance Evaluations} Tab. \ref{tab:cifar-uncond}, Tab. \ref{tab:cifar-cond} and Tab. \ref{tab:imagenet} shows Uni-Instruct performance on both settings of CIFAR10 and ImageNet $64\times64$. Uni-Instruct achieves new state-of-the-art one-step generation performances on all datasets. Our important findings include: (1) \textbf{When training from scratch, JKL achieves the lowest FID score.} On CIFAR10, JKL trained from scratch has a FID score of $1.42$, out-perform other baseline methods like DMD \citep{yin2024one}, SiDA \citep{zhou2024adversarial}, and the teacher model EDM \citep{karras2022elucidating}. (2) \textbf{When resuming a trained SiD model (RKL), FKL achieves even better results.} As is shown in the Table \ref{tab:cifar-uncond}, FKL with longer training achieves a new state-of-the-art one-step generation on both datasets. This means a two-time training schedule: first trained with RKL until convergence, followed by FKL, enhances the model's performance with both mode-seeking behavior from RKL and mode-covering behavior from FKL. 

\begin{table}[!htbp]
\vspace{-0.4cm}
\centering
\begin{minipage}[t]{0.5\textwidth}
\scriptsize  
\centering
\captionof{table}{Label-conditioned image generation results on CIFAR-10. The best one/few-step generator under the FID metric is highlighted with \textbf{bold}.}
\label{tab:cifar-cond}
\begin{tabular}{@{}llcc@{}}
\toprule
\textbf{Family} & \textbf{Model} & \text{NFE} & \textbf{FID} ($\downarrow$)  \\
\midrule
\multirow{1}{*}{Teacher} & VP-EDM \citep{karras2022elucidating}  & 35 & 1.79 \\
\midrule
\multirow{2}{*}{Diffusion} 
& DDPM \citep{ho2020denoising}  & 1000 & 3.17 \\
& iDDPM \citep{nichol2021improved}  & 4000 & 2.90  \\
\midrule
\multirow{11}{*}{One Step}
& Diff-Instruct \citep{luo2023diff}  & 1 & 4.19 \\
& SIM \citep{luo2025one} & 1 & 1.96 \\
& CTM \citep{kim2023consistency} & 1 & 1.73 \\
& SiD \citep{zhou2024score}  & 1 & 1.71 \\
& SiDA \citep{zhou2024adversarial}  & 1 & 1.44  \\
& Si$\text{D}^2$A \citep{zhou2024adversarial}  & 1 & 1.40  \\
& $f$-distill \citep{xu2025one} & 1 & 1.92 \\
& \textbf{Uni-Instruct w. RKL (from scratch)}  & 1 & 1.44  \\
& \textbf{Uni-Instruct w. JKL (from scratch)}  & 1 & 1.42  \\
& \textbf{Uni-Instruct w. FKL (from scratch)}  & 1 & 1.43  \\
& \textbf{Uni-Instruct w. FKL (longer training)}  & 1 & \textbf{1.38}  \\
\bottomrule
\end{tabular}
\end{minipage}
\hfill
\begin{minipage}[t]{0.45\textwidth} 
\centering
\caption{Ablation study on CIFAR10 uncond generation. \textbf{GAN} means using GAN loss. \textbf{Init} means initialize from models.}
\label{tab:divergence}
\begin{tabular}{lccc}
\toprule 
\textbf{Div.} & \textbf{SiD Init.} & \textbf{GAN} & \textbf{FID}$\downarrow$ \\
\midrule 
 None            & ~         & \checkmark & 8.21 \\
 $\chi^2$     & ~         & \checkmark & 4.37  \\
JS   &  ~         & \checkmark & 5.23  \\
JKL     &  ~         & \checkmark & \textbf{1.46}  \\
RKL   & ~      &   &  1.92  \\
FKL      & ~         & ~ & 1.88 \\
RKL     &  ~         & \checkmark & 1.52  \\
FKL &  ~         & \checkmark & 1.52  \\
RKL   & \checkmark         & \checkmark & 1.50  \\
FKL &  \checkmark & \checkmark & 1.48  \\
JKL  &  \checkmark & \checkmark & 1.50  \\

\bottomrule 
\end{tabular}
\end{minipage}
\vspace{-0.2cm}
\end{table}

\subsection{Ablation Studies}
\paragraph{Performance Between Different Divergences and the effect of GAN loss.} We perform an ablation study on the techniques applied in our experiments. Table~\ref{tab:divergence} ablates different components of our proposed method on CIFAR10, where we use an unconditional generator for all settings. For different divergences, we select three types: JKL, FKL, and RKL are divergences that only contains $\text{Grad(SiD)}$, $\chi^2$ divergence's gradient is only contributed by $\text{Grad(DI)}$, Jensen-Shannon (JS) divergence has a gradient that contains both: $h_{\text{DI}}(\mathbf{x})\text{Grad(DI)}+h_{\text{SiD}}(\mathbf{x})\text{Grad(SiD)}$. Our result shows that JKL achieves the lowest FID value. Due to the numerical instability of the weightings, JS yields unsuccessful distillation results. As for the effect of GAN loss, we find that removing it still yields a decent result. Our integrated approach also surpasses the performance of using Uni-Instruct loss alone(without adding GAN loss), highlighting the effectiveness of combining expanded $f$-divergence with GAN losses. We also find that using a model trained with RKL Uni-Instruct (which recovers the SiD\citep{zhou2024score} loss) as the initialization leads to better performances for all divergences.

\paragraph{Additional metrics evaluation and convergence analysis.} As is shown in the right table of Tab.~\ref{TAB:rebuttal1}, across different training iterations, our method consistently achieves lower FID scores than SiDA. This clearly indicates that our approach converges faster, reaching better generative quality with fewer iterations. Moreover, Tab.~\ref{TAB:rebuttal1} and Tab.~\ref{TAB:rebuttal2} compares the performance of Uni-Instruct and SiDA \citep{zhou2024adversarial} across both CIFAR10 and ImageNet $64\times64$ benchmarks on sFID, $\text{FD}_\text{DINO}$, inception score (IS), percision and recall. Uni-Instruct achieves strictly better results in 4 of 5 metrics on ImageNet $64\times 64$, the more complex and practically relevant benchmark. These gains are significant, particularly in $\text{FD}_\text{dino}$ and IS, which measure semantic alignment and perceptual quality/diversity, respectively.

The $18.7\%$ reduction in $\text{FD}_\text{dino}$ on ImageNet $64\times64$ confirms Uni-Instruct’s advanced capability to preserve high-level semantic structures (e.g., object boundaries, textures, contextual relationships). This is critical for applications requiring fine-grained realism (e.g., medical imaging, autonomous driving). More importantly, Uni-Instruct’s gains widen significantly on ImageNet $64\times64$ ($+8.1\%$ IS, $-18.7\%$ $\text{FD}_\text{dino}$) versus CIFAR10, proving its robustness for high resolution, semantically rich image synthesis. SiDA fails to maintain competitiveness under greater complexity.

\begin{table}[!htbp]
\caption{Further comparison between SiDA\citep{zhou2024adversarial} and Uni-Instruct (forward KL). The left table is the performance evaluation on CIFAR10 unconditional generation. The right table compares the FID score along with the iterated k-images during training.}
  \centering
  \begin{minipage}{0.45\linewidth}
    \centering
    \begin{tabular}{lccccc}
      \toprule
      Method & sFID$\downarrow$ & FD$_\text{dino}\downarrow$ & IS$\uparrow$ & Precision$\uparrow$ & Recall$\uparrow$ \\
      \midrule
      SiDA & 1.71 & 132.72 & 10.32 & 0.670 & 0.624 \\
      Ours & 1.66 & 129.30 & 10.30 & 0.671 & 0.626 \\
      \bottomrule
    \end{tabular}
  \end{minipage}
  \hfill
  \begin{minipage}{0.45\linewidth}
    \centering
    \begin{tabular}{ccccc}
      \toprule
       $10^1$ & $10^2$ & $10^3$ & $10^4$ & $10^5$ \\
      \midrule
       139.86 & 68.73 & 41.45 & 6.98 & 1.44 \\
       132.51 & 54.60 & 38.58 & 5.23 & 1.41 \\
      \bottomrule
    \end{tabular}
\end{minipage}
\label{TAB:rebuttal1}
\end{table}

\begin{table}[!htbp]
  \centering
  \begin{minipage}{0.45\linewidth}
    \centering
    \begin{tabular}{lccccc}
      \toprule
      ~ &  sFID$\downarrow$ & FD$_\text{dino}\downarrow$ & IS$\uparrow$ & Precision$\uparrow$ & Recall$\uparrow$ \\
      \midrule
      SiDA &  1.68 & 111.26 & 10.28 & 0.678 & 0.632 \\
      Ours &  1.68 & 108.89 & 10.29 & 0.679 & 0.629 \\
      \bottomrule
    \end{tabular}
  \end{minipage}
  \hfill
  \begin{minipage}{0.45\linewidth}
    \centering
    \begin{tabular}{ccccc}
      \toprule
       sFID$\downarrow$ & FD$_\text{dino}\downarrow$ & IS$\uparrow$ & Precision$\uparrow$ & Recall$\uparrow$ \\
      \midrule
       1.98 & 74.86 & 59.28 & 0.562 & 0.653 \\
       2.01 & 60.86 & 64.11 & 0.561 & 0.658 \\
      \bottomrule
    \end{tabular}
  \end{minipage}
  \caption{Further comparison between SiDA\citep{zhou2024adversarial} and Uni-Instruct (forward KL). The left table is the performance evaluation on CIFAR10 conditional generation, while the right table is the performance evaluation on ImageNet 64$\times$64 generation.}
  \label{TAB:rebuttal2}
\end{table}

\begin{figure}[!htbp]
    \centering
    \begin{tabular}{cc|cc}
        \includegraphics[width=0.22\textwidth]{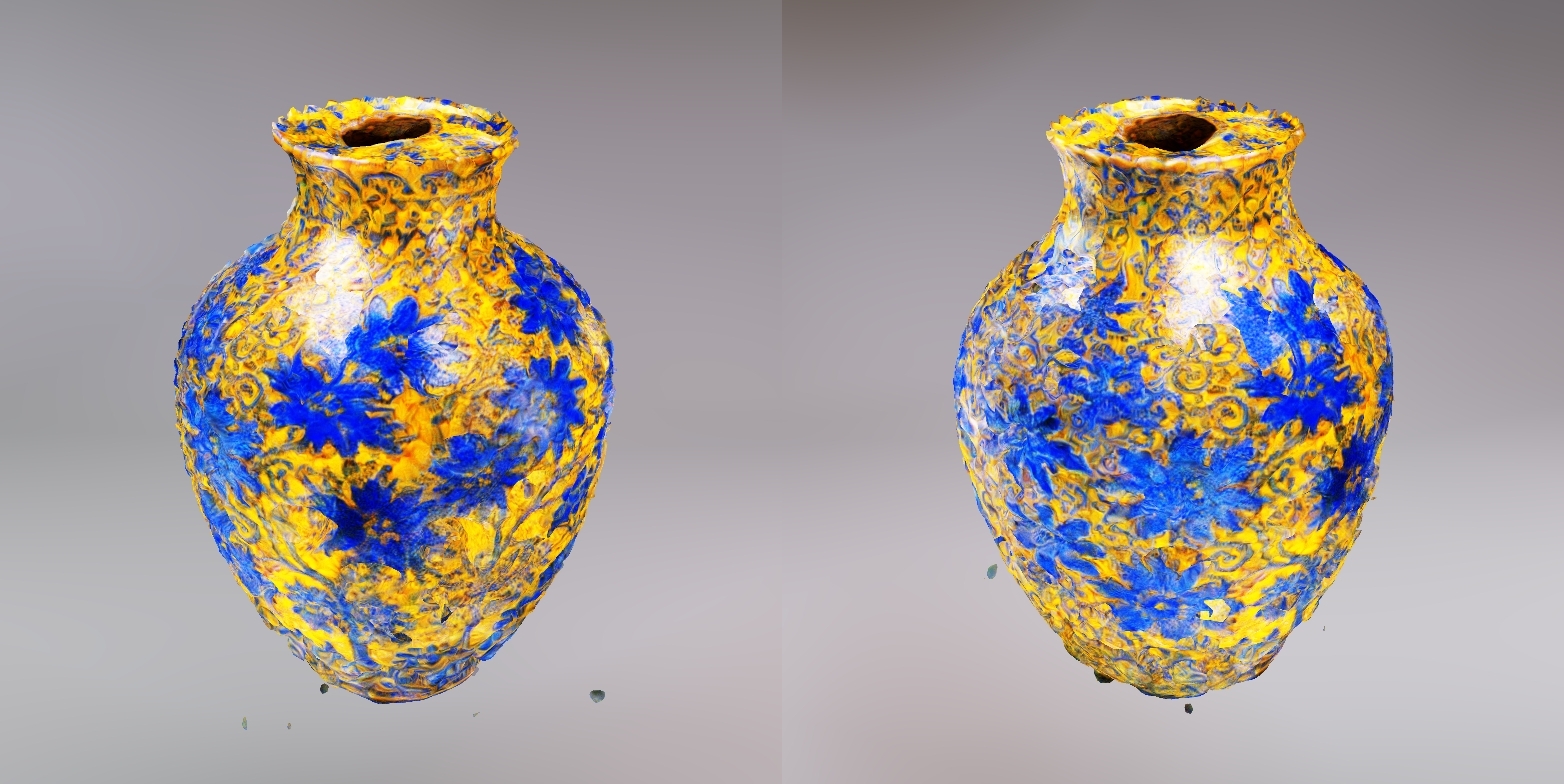} &
        \includegraphics[width=0.22\textwidth]{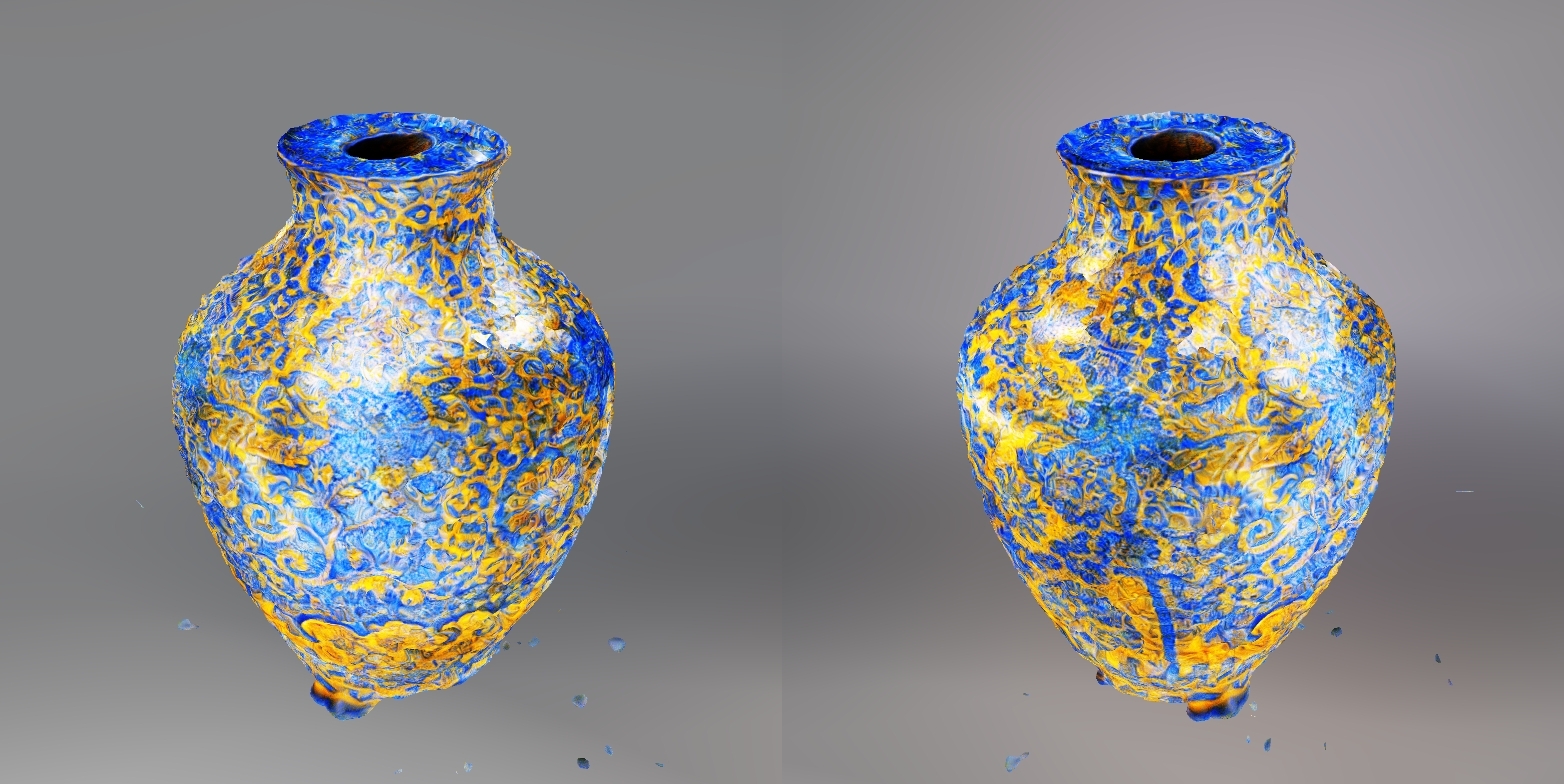} &
        \includegraphics[width=0.22\textwidth]{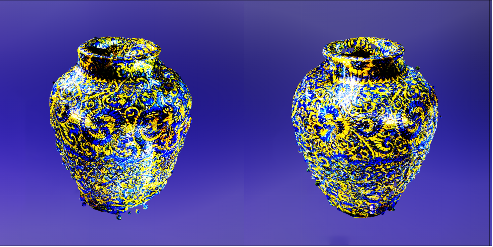} &
        \includegraphics[width=0.22\textwidth]{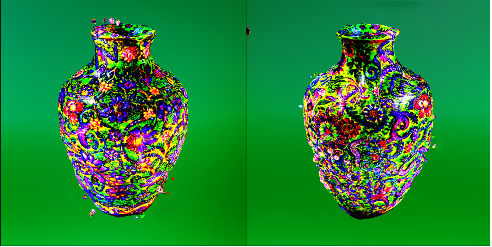} \\
        \includegraphics[width=0.22\textwidth]{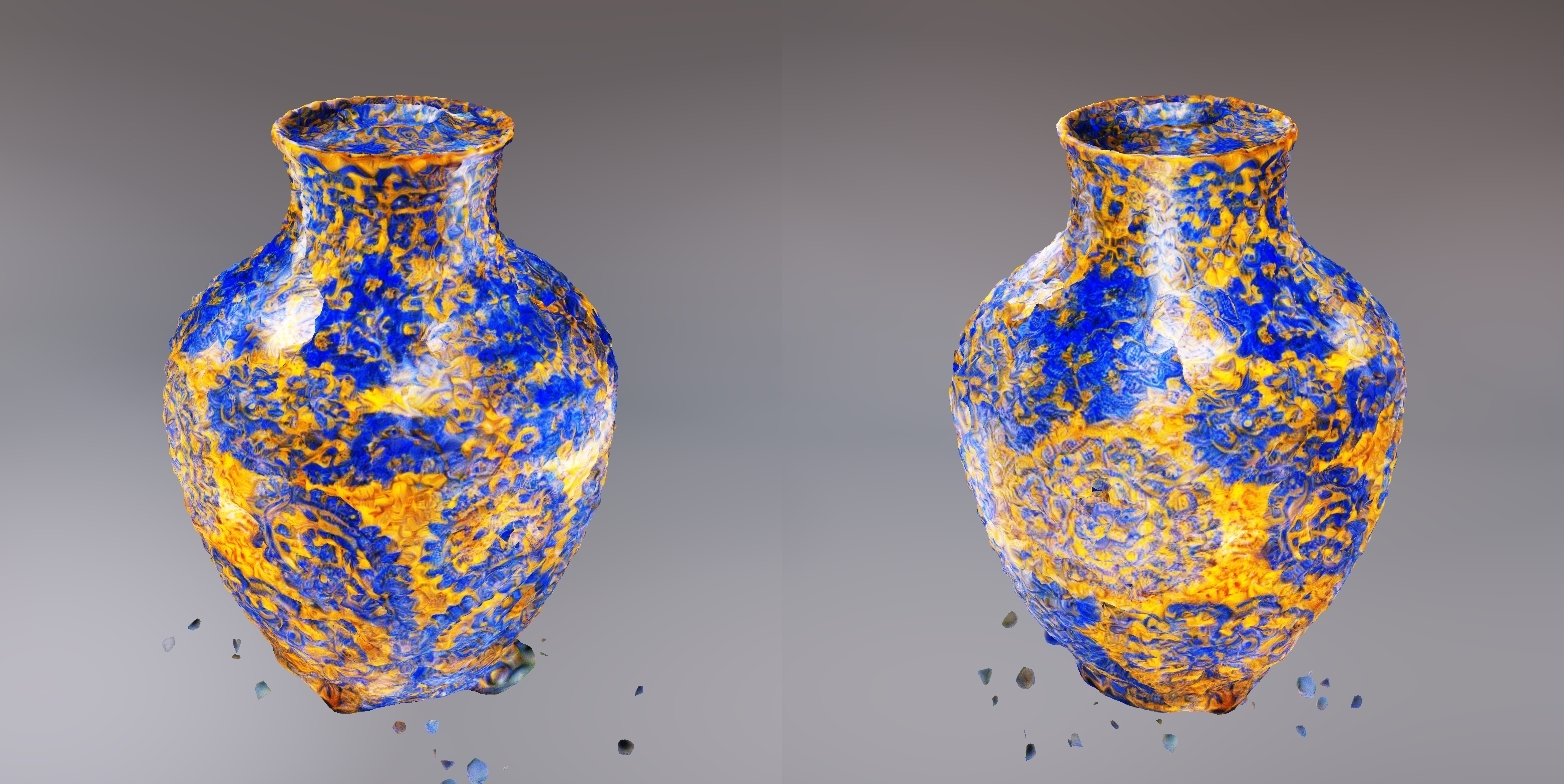} &
        \includegraphics[width=0.22\textwidth]{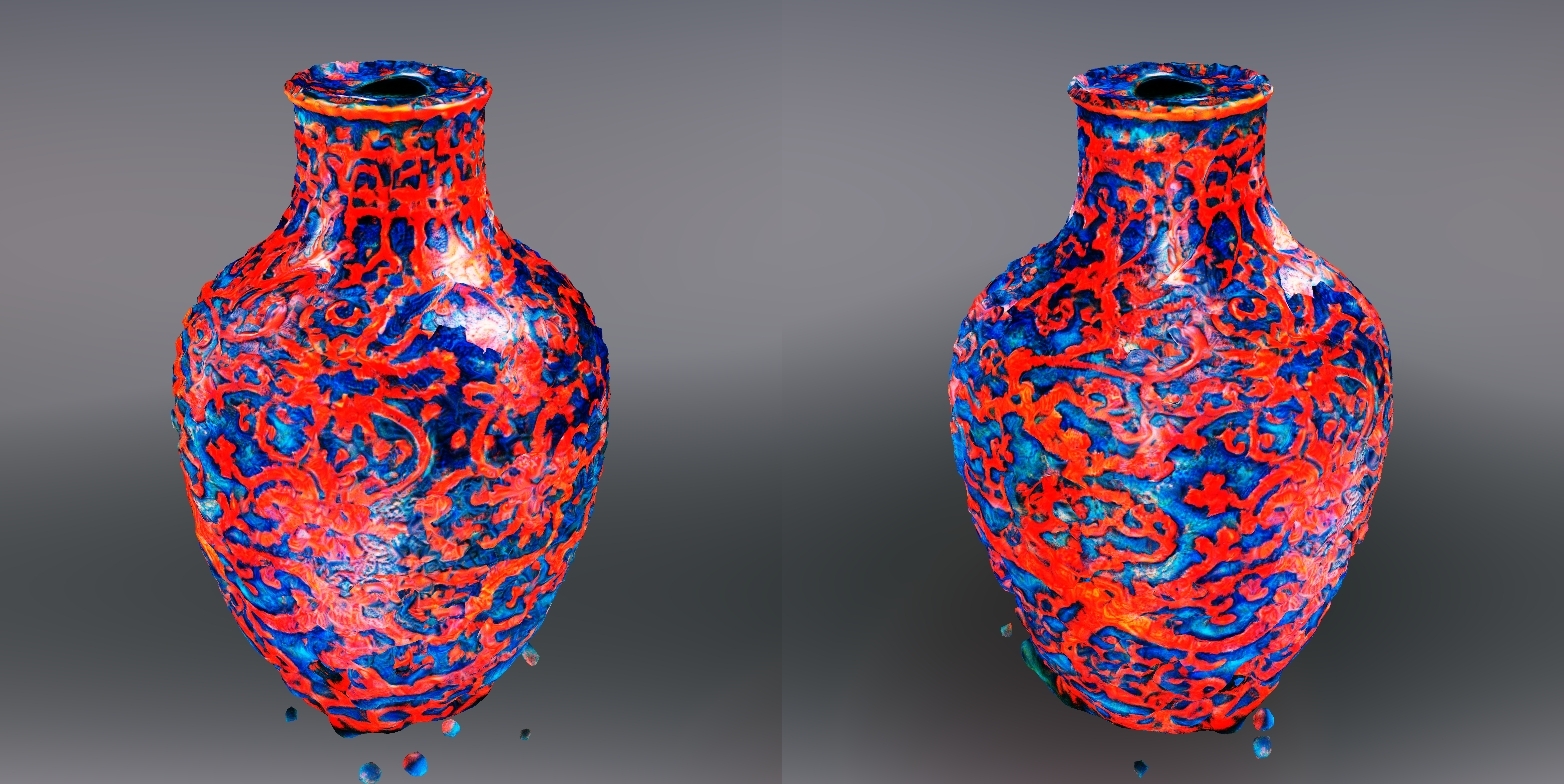} &
        \includegraphics[width=0.22\textwidth]{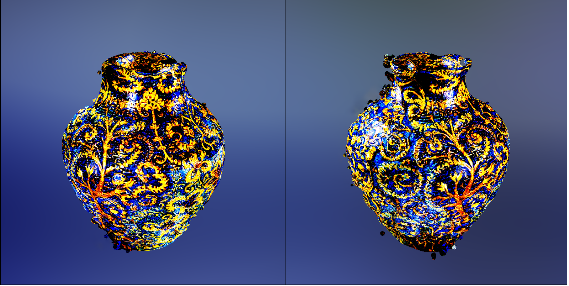} &
        \includegraphics[width=0.22\textwidth]{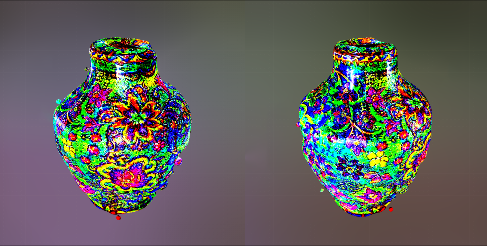} \\
    \end{tabular}
    \caption{\textbf{Prompt}: A refined vase with artistic patterns. Left: ProlificDreamer; Right: Uni-Instruct (forward KL). Our vase demonstrates more diverse shapes as well as realistic patterns.}
    \label{fig:3d}
    \vspace{-0.2cm}
\end{figure}

\subsection{Text-to-3D Generation Using 2D Diffusion}
\label{SEC:3D}
In this subsection, we apply Uni-Instruct on text-to-3D generation. We re-implement the code base of ProlificDreamer \citep{wang2023prolificdreamer} by adding an extra discriminator head to the output of the stable diffusion Unet's encoder. We use FKL to distill the model for 400 epochs. Fig. \ref{fig:3d} demonstrates the visual results from our 3D experiments. Uni-Instruct archives surprisingly decent 3D generation performances, with improved diversity and fidelity. Due to page limitations, we put detailed experiment settings, quantitative metrics, and training algorithm in the Appendix \ref{APP:3D}.


\section{Conclusions}
We present Uni-Instruct, a theoretically grounded framework for training one-step diffusion models via distribution matching. Through building upon a novel 
\emph{diffusion expansion} theory of the $f$-divergence, Uni-Instruct establishes a unifying theoretical foundation that generalizes and connects more than 10 existing diffusion distillation methodologies. Uni-Instruct also demonstrates superior performance on benchmark datasets and efficacy in downstream tasks like text-to-3D generation. We hope Uni-Instruct offers useful insights for future studies on efficient generative models. 

\clearpage

\section*{Acknowledgement}
This work was supported by the National Natural Science Foundation of China (62371007) and by Xiaohongshu Inc. The authors acknowledge helpful advice from Shanghai AI Lab and Yongqian Peng.

\bibliographystyle{plain}
\bibliography{references}

\newpage

\section*{NeurIPS Paper Checklist}

\begin{enumerate}

\item {\bf Claims}
    \item[] Question: Do the main claims made in the abstract and introduction accurately reflect the paper's contributions and scope?
    \item[] Answer: \answerYes{} 
    \item[] Justification: The claims match theoretical and experimental results.
    \item[] Guidelines:
    \begin{itemize}
        \item The answer NA means that the abstract and introduction do not include the claims made in the paper.
        \item The abstract and/or introduction should clearly state the claims made, including the contributions made in the paper and important assumptions and limitations. A No or NA answer to this question will not be perceived well by the reviewers. 
        \item The claims made should match theoretical and experimental results, and reflect how much the results can be expected to generalize to other settings. 
        \item It is fine to include aspirational goals as motivation as long as it is clear that these goals are not attained by the paper. 
    \end{itemize}

\item {\bf Limitations}
    \item[] Question: Does the paper discuss the limitations of the work performed by the authors?
    \item[] Answer: \answerYes{} 
    \item[] Justification: The limitations are discussed in the method section.
    \item[] Guidelines:
    \begin{itemize}
        \item The answer NA means that the paper has no limitation while the answer No means that the paper has limitations, but those are not discussed in the paper. 
        \item The authors are encouraged to create a separate "Limitations" section in their paper.
        \item The paper should point out any strong assumptions and how robust the results are to violations of these assumptions (e.g., independence assumptions, noiseless settings, model well-specification, asymptotic approximations only holding locally). The authors should reflect on how these assumptions might be violated in practice and what the implications would be.
        \item The authors should reflect on the scope of the claims made, e.g., if the approach was only tested on a few datasets or with a few runs. In general, empirical results often depend on implicit assumptions, which should be articulated.
        \item The authors should reflect on the factors that influence the performance of the approach. For example, a facial recognition algorithm may perform poorly when image resolution is low or images are taken in low lighting. Or a speech-to-text system might not be used reliably to provide closed captions for online lectures because it fails to handle technical jargon.
        \item The authors should discuss the computational efficiency of the proposed algorithms and how they scale with dataset size.
        \item If applicable, the authors should discuss possible limitations of their approach to address problems of privacy and fairness.
        \item While the authors might fear that complete honesty about limitations might be used by reviewers as grounds for rejection, a worse outcome might be that reviewers discover limitations that aren't acknowledged in the paper. The authors should use their best judgment and recognize that individual actions in favor of transparency play an important role in developing norms that preserve the integrity of the community. Reviewers will be specifically instructed to not penalize honesty concerning limitations.
    \end{itemize}

\item {\bf Theory assumptions and proofs}
    \item[] Question: For each theoretical result, does the paper provide the full set of assumptions and a complete (and correct) proof?
    \item[] Answer: \answerYes{} 
    \item[] Justification: All the proofs are elaborated in the appendix.
    \item[] Guidelines:
    \begin{itemize}
        \item The answer NA means that the paper does not include theoretical results. 
        \item All the theorems, formulas, and proofs in the paper should be numbered and cross-referenced.
        \item All assumptions should be clearly stated or referenced in the statement of any theorems.
        \item The proofs can either appear in the main paper or the supplemental material, but if they appear in the supplemental material, the authors are encouraged to provide a short proof sketch to provide intuition. 
        \item Inversely, any informal proof provided in the core of the paper should be complemented by formal proofs provided in appendix or supplemental material.
        \item Theorems and Lemmas that the proof relies upon should be properly referenced. 
    \end{itemize}

    \item {\bf Experimental result reproducibility}
    \item[] Question: Does the paper fully disclose all the information needed to reproduce the main experimental results of the paper to the extent that it affects the main claims and/or conclusions of the paper (regardless of whether the code and data are provided or not)?
    \item[] Answer: \answerYes{} 
    \item[] Justification: We fixed the random seed and can be reproduced.
    \item[] Guidelines:
    \begin{itemize}
        \item The answer NA means that the paper does not include experiments.
        \item If the paper includes experiments, a No answer to this question will not be perceived well by the reviewers: Making the paper reproducible is important, regardless of whether the code and data are provided or not.
        \item If the contribution is a dataset and/or model, the authors should describe the steps taken to make their results reproducible or verifiable. 
        \item Depending on the contribution, reproducibility can be accomplished in various ways. For example, if the contribution is a novel architecture, describing the architecture fully might suffice, or if the contribution is a specific model and empirical evaluation, it may be necessary to either make it possible for others to replicate the model with the same dataset, or provide access to the model. In general. releasing code and data is often one good way to accomplish this, but reproducibility can also be provided via detailed instructions for how to replicate the results, access to a hosted model (e.g., in the case of a large language model), releasing of a model checkpoint, or other means that are appropriate to the research performed.
        \item While NeurIPS does not require releasing code, the conference does require all submissions to provide some reasonable avenue for reproducibility, which may depend on the nature of the contribution. For example
        \begin{enumerate}
            \item If the contribution is primarily a new algorithm, the paper should make it clear how to reproduce that algorithm.
            \item If the contribution is primarily a new model architecture, the paper should describe the architecture clearly and fully.
            \item If the contribution is a new model (e.g., a large language model), then there should either be a way to access this model for reproducing the results or a way to reproduce the model (e.g., with an open-source dataset or instructions for how to construct the dataset).
            \item We recognize that reproducibility may be tricky in some cases, in which case authors are welcome to describe the particular way they provide for reproducibility. In the case of closed-source models, it may be that access to the model is limited in some way (e.g., to registered users), but it should be possible for other researchers to have some path to reproducing or verifying the results.
        \end{enumerate}
    \end{itemize}

\item {\bf Open access to data and code}
    \item[] Question: Does the paper provide open access to the data and code, with sufficient instructions to faithfully reproduce the main experimental results, as described in supplemental material?
    \item[] Answer: \answerYes{} 
    \item[] Justification: See the attached files. We will also open-source our code after the submission time.
    \item[] Guidelines:
    \begin{itemize}
        \item The answer NA means that paper does not include experiments requiring code.
        \item Please see the NeurIPS code and data submission guidelines (\url{https://nips.cc/public/guides/CodeSubmissionPolicy}) for more details.
        \item While we encourage the release of code and data, we understand that this might not be possible, so “No” is an acceptable answer. Papers cannot be rejected simply for not including code, unless this is central to the contribution (e.g., for a new open-source benchmark).
        \item The instructions should contain the exact command and environment needed to run to reproduce the results. See the NeurIPS code and data submission guidelines (\url{https://nips.cc/public/guides/CodeSubmissionPolicy}) for more details.
        \item The authors should provide instructions on data access and preparation, including how to access the raw data, preprocessed data, intermediate data, and generated data, etc.
        \item The authors should provide scripts to reproduce all experimental results for the new proposed method and baselines. If only a subset of experiments are reproducible, they should state which ones are omitted from the script and why.
        \item At submission time, to preserve anonymity, the authors should release anonymized versions (if applicable).
        \item Providing as much information as possible in supplemental material (appended to the paper) is recommended, but including URLs to data and code is permitted.
    \end{itemize}

\item {\bf Experimental setting/details}
    \item[] Question: Does the paper specify all the training and test details (e.g., data splits, hyperparameters, how they were chosen, type of optimizer, etc.) necessary to understand the results?
    \item[] Answer: \answerYes{} 
    \item[] Justification: See experiment section.
    \item[] Guidelines:
    \begin{itemize}
        \item The answer NA means that the paper does not include experiments.
        \item The experimental setting should be presented in the core of the paper to a level of detail that is necessary to appreciate the results and make sense of them.
        \item The full details can be provided either with the code, in appendix, or as supplemental material.
    \end{itemize}

\item {\bf Experiment statistical significance}
    \item[] Question: Does the paper report error bars suitably and correctly defined or other appropriate information about the statistical significance of the experiments?
    \item[] Answer: \answerNo{} 
    \item[] Justification: Following common practice in the generative modeling literature, we do not report error bars in this paper because of the heavy computation overheads.
    \item[] Guidelines:
    \begin{itemize}
        \item The answer NA means that the paper does not include experiments.
        \item The authors should answer "Yes" if the results are accompanied by error bars, confidence intervals, or statistical significance tests, at least for the experiments that support the main claims of the paper.
        \item The factors of variability that the error bars are capturing should be clearly stated (for example, train/test split, initialization, random drawing of some parameter, or overall run with given experimental conditions).
        \item The method for calculating the error bars should be explained (closed form formula, call to a library function, bootstrap, etc.)
        \item The assumptions made should be given (e.g., Normally distributed errors).
        \item It should be clear whether the error bar is the standard deviation or the standard error of the mean.
        \item It is OK to report 1-sigma error bars, but one should state it. The authors should preferably report a 2-sigma error bar than state that they have a 96\% CI, if the hypothesis of Normality of errors is not verified.
        \item For asymmetric distributions, the authors should be careful not to show in tables or figures symmetric error bars that would yield results that are out of range (e.g. negative error rates).
        \item If error bars are reported in tables or plots, The authors should explain in the text how they were calculated and reference the corresponding figures or tables in the text.
    \end{itemize}

\item {\bf Experiments compute resources}
    \item[] Question: For each experiment, does the paper provide sufficient information on the computer resources (type of compute workers, memory, time of execution) needed to reproduce the experiments?
    \item[] Answer: \answerYes{} 
    \item[] Justification: In the appendix section, we explained the type of computational resource we used.
    \item[] Guidelines:
    \begin{itemize}
        \item The answer NA means that the paper does not include experiments.
        \item The paper should indicate the type of compute workers CPU or GPU, internal cluster, or cloud provider, including relevant memory and storage.
        \item The paper should provide the amount of compute required for each of the individual experimental runs as well as estimate the total compute. 
        \item The paper should disclose whether the full research project required more compute than the experiments reported in the paper (e.g., preliminary or failed experiments that didn't make it into the paper). 
    \end{itemize}
    
\item {\bf Code of ethics}
    \item[] Question: Does the research conducted in the paper conform, in every respect, with the NeurIPS Code of Ethics \url{https://neurips.cc/public/EthicsGuidelines}?
    \item[] Answer: \answerYes{} 
    \item[] Justification: We followed the NeurIPS Code of Ethics in every respect.
    \item[] Guidelines:
    \begin{itemize}
        \item The answer NA means that the authors have not reviewed the NeurIPS Code of Ethics.
        \item If the authors answer No, they should explain the special circumstances that require a deviation from the Code of Ethics.
        \item The authors should make sure to preserve anonymity (e.g., if there is a special consideration due to laws or regulations in their jurisdiction).
    \end{itemize}

\item {\bf Broader impacts}
    \item[] Question: Does the paper discuss both potential positive societal impacts and negative societal impacts of the work performed?
    \item[] Answer: \answerYes{} 
    \item[] Justification: See the conclusion part of the paper. We hope our method will inspire future work.
    \item[] Guidelines:
    \begin{itemize}
        \item The answer NA means that there is no societal impact of the work performed.
        \item If the authors answer NA or No, they should explain why their work has no societal impact or why the paper does not address societal impact.
        \item Examples of negative societal impacts include potential malicious or unintended uses (e.g., disinformation, generating fake profiles, surveillance), fairness considerations (e.g., deployment of technologies that could make decisions that unfairly impact specific groups), privacy considerations, and security considerations.
        \item The conference expects that many papers will be foundational research and not tied to particular applications, let alone deployments. However, if there is a direct path to any negative applications, the authors should point it out. For example, it is legitimate to point out that an improvement in the quality of generative models could be used to generate deepfakes for disinformation. On the other hand, it is not needed to point out that a generic algorithm for optimizing neural networks could enable people to train models that generate Deepfakes faster.
        \item The authors should consider possible harms that could arise when the technology is being used as intended and functioning correctly, harms that could arise when the technology is being used as intended but gives incorrect results, and harms following from (intentional or unintentional) misuse of the technology.
        \item If there are negative societal impacts, the authors could also discuss possible mitigation strategies (e.g., gated release of models, providing defenses in addition to attacks, mechanisms for monitoring misuse, mechanisms to monitor how a system learns from feedback over time, improving the efficiency and accessibility of ML).
    \end{itemize}
    
\item {\bf Safeguards}
    \item[] Question: Does the paper describe safeguards that have been put in place for responsible release of data or models that have a high risk for misuse (e.g., pretrained language models, image generators, or scraped datasets)?
    \item[] Answer: \answerNA{} 
    \item[] Justification: The paper poses no such risks.
    \item[] Guidelines:
    \begin{itemize}
        \item The answer NA means that the paper poses no such risks.
        \item Released models that have a high risk for misuse or dual-use should be released with necessary safeguards to allow for controlled use of the model, for example by requiring that users adhere to usage guidelines or restrictions to access the model or implementing safety filters. 
        \item Datasets that have been scraped from the Internet could pose safety risks. The authors should describe how they avoided releasing unsafe images.
        \item We recognize that providing effective safeguards is challenging, and many papers do not require this, but we encourage authors to take this into account and make a best faith effort.
    \end{itemize}

\item {\bf Licenses for existing assets}
    \item[] Question: Are the creators or original owners of assets (e.g., code, data, models), used in the paper, properly credited and are the license and terms of use explicitly mentioned and properly respected?
    \item[] Answer: \answerYes{} 
    \item[] Justification: We cite the papers and include CC-BY 4.0 license.
    \item[] Guidelines:
    \begin{itemize}
        \item The answer NA means that the paper does not use existing assets.
        \item The authors should cite the original paper that produced the code package or dataset.
        \item The authors should state which version of the asset is used and, if possible, include a URL.
        \item The name of the license (e.g., CC-BY 4.0) should be included for each asset.
        \item For scraped data from a particular source (e.g., website), the copyright and terms of service of that source should be provided.
        \item If assets are released, the license, copyright information, and terms of use in the package should be provided. For popular datasets, \url{paperswithcode.com/datasets} has curated licenses for some datasets. Their licensing guide can help determine the license of a dataset.
        \item For existing datasets that are re-packaged, both the original license and the license of the derived asset (if it has changed) should be provided.
        \item If this information is not available online, the authors are encouraged to reach out to the asset's creators.
    \end{itemize}

\item {\bf New assets}
    \item[] Question: Are new assets introduced in the paper well documented and is the documentation provided alongside the assets?
    \item[] Answer: \answerYes{} 
    \item[] Justification: We provide new anonymized assets.
    \item[] Guidelines:
    \begin{itemize}
        \item The answer NA means that the paper does not release new assets.
        \item Researchers should communicate the details of the dataset/code/model as part of their submissions via structured templates. This includes details about training, license, limitations, etc. 
        \item The paper should discuss whether and how consent was obtained from people whose asset is used.
        \item At submission time, remember to anonymize your assets (if applicable). You can either create an anonymized URL or include an anonymized zip file.
    \end{itemize}

\item {\bf Crowdsourcing and research with human subjects}
    \item[] Question: For crowdsourcing experiments and research with human subjects, does the paper include the full text of instructions given to participants and screenshots, if applicable, as well as details about compensation (if any)? 
    \item[] Answer: \answerNA{} 
    \item[] Justification: This project does not involve human subjects.
    \item[] Guidelines:
    \begin{itemize}
        \item The answer NA means that the paper does not involve crowdsourcing nor research with human subjects.
        \item Including this information in the supplemental material is fine, but if the main contribution of the paper involves human subjects, then as much detail as possible should be included in the main paper. 
        \item According to the NeurIPS Code of Ethics, workers involved in data collection, curation, or other labor should be paid at least the minimum wage in the country of the data collector. 
    \end{itemize}

\item {\bf Institutional review board (IRB) approvals or equivalent for research with human subjects}
    \item[] Question: Does the paper describe potential risks incurred by study participants, whether such risks were disclosed to the subjects, and whether Institutional Review Board (IRB) approvals (or an equivalent approval/review based on the requirements of your country or institution) were obtained?
    \item[] Answer: \answerNA{} 
    \item[] Justification: This project does not involve human subjects.
    \item[] Guidelines:
    \begin{itemize}
        \item The answer NA means that the paper does not involve crowdsourcing nor research with human subjects.
        \item Depending on the country in which research is conducted, IRB approval (or equivalent) may be required for any human subjects research. If you obtained IRB approval, you should clearly state this in the paper. 
        \item We recognize that the procedures for this may vary significantly between institutions and locations, and we expect authors to adhere to the NeurIPS Code of Ethics and the guidelines for their institution. 
        \item For initial submissions, do not include any information that would break anonymity (if applicable), such as the institution conducting the review.
    \end{itemize}

\item {\bf Declaration of LLM usage}
    \item[] Question: Does the paper describe the usage of LLMs if it is an important, original, or non-standard component of the core methods in this research? Note that if the LLM is used only for writing, editing, or formatting purposes and does not impact the core methodology, scientific rigorousness, or originality of the research, declaration is not required.
    \item[] Answer: \answerNA{} 
    \item[] Justification: The core method development in this research does not involve LLMs as any important, original, or non-standard components.
    \item[] Guidelines:
    \begin{itemize}
        \item The answer NA means that the core method development in this research does not involve LLMs as any important, original, or non-standard components.
        \item Please refer to our LLM policy (\url{https://neurips.cc/Conferences/2025/LLM}) for what should or should not be described.
    \end{itemize}

\end{enumerate}

\newpage
\appendix

\section{Related Works}
\paragraph{Diffusion Distillation} 
Diffusion distillation \citep{luo2023comprehensive} focuses on reducing generation costs by transferring knowledge from teacher diffusion models to more efficient student models. It primarily includes three categories of methods: 
(1) \textit{Trajectory Distillation:} These methods train student models to approximate the generation trajectory of diffusion models using fewer denoising steps. Approaches such as direct distillation \citep{luhman2021knowledge, geng2023one} and progressive distillation \citep{salimans2022progressive, meng2023distillation} aim to predict cleaner data from noisy inputs. Consistency-based methods \citep{song2023consistency, kim2023consistency, song2023improved, liu2025scott, gu2023boot} instead minimize a self-consistency loss across intermediate steps. Most of these methods require access to real data samples for effective training.
(2) \textit{Divergence Minimization (Distribution Matching):} This line of work aims to align the distribution of the student model with that of the teacher. Adversarial training-based methods \citep{xiao2021tackling, xu2024ufogen} typically require real data to perform distribution matching. Alternatively, several approaches minimize divergences like the KL divergence (e.g., Diff-Instruct \citep{luo2023diff, yin2024one}) or Fisher divergence (e.g., Score Identity Distillation \citep{zhou2024score}, Score Implicit Matching \citep{luo2025one}), and often do so without requiring real samples. Numerous improvements have been made to these two lines of work: DMD2 \citep{yin2024improved} and SiDA \citep{zhou2024adversarial} add real images during training, rapidly surpassing the teacher's performance. $f$-distill \citep{xu2025one} generalize KL divergence of Diff-Instruct into $f$-divergence and compared the affection of different divergences. Additionally, significant progress has been made toward scaling diffusion distillation for ultra-fast or even one-step text-to-image generation \citep{luo2023latent, hoang2023swiftbrush, song2024sdxs, yin2024one, zhou2024long, yin2024improved}.
(3) \textit{Other Methods:} Several alternative techniques that train the model from scratch have been proposed, including ReFlow \citep{liu2022flow}, Flow Matching Models (FMM) \citep{boffi2024flow}, which propose an ODE to model the diffusion process. Inductive Moment Matching \citep{zhou2025inductive}
models the self-consistency of stochastic interpolants at different time steps. Consistency models \citep{song2023consistency,song2023improved,kim2023consistency,geng2024consistency} impose consistency constraints on network outputs along the trajectory. 

\section{Proofs}
\subsection{Proof of Theorem~\ref{THM:ONE}}
\label{APP:ONE}

\begin{proof}
Let $p_t$ and $q_t$ be distributions satisfying the Fokker-Planck equations, and decay rapidly at infinity:
\begin{align}
\label{equ:a1}
    \frac{\partial p_{\theta,t}}{\partial t} &= \nabla_{\bx} \cdot \left[\frac{1}{2}g^2(t)p_{\theta,t}\nabla_{\bx}\log p_{\theta,t} - \bm{F}(\bx,t)p_{\theta,t}\right] \nonumber\\
    \frac{\partial q_t}{\partial t} &= \nabla_{\bx} \cdot \left[\frac{1}{2}g^2(t)q_t\nabla_{\bx}\log q_t - \bm{F}(\bx,t)q_t\right]
\end{align}
We begin with the definition of $f$-divergence and apply differentiation under the integral sign:
\begin{align}
\label{equ:a2}
    \frac{\diff}{\diff t}\mathcal{D}_f(q_t\|p_{\theta,t}) &= \frac{\diff}{\diff t}\int p_{\theta,t} f\left(\frac{q_t}{p_{\theta,t}}\right)\diff\bx \nonumber \\
    &= \int \frac{\partial p_{\theta,t}}{\partial t}f\left(\frac{q_t}{p_{\theta,t}}\right)\diff\bx + \int p_{\theta,t}\frac{\partial}{\partial t}f\left(\frac{q_t}{p_{\theta,t}}\right)\diff\bx
\end{align}

For the second term, apply the chain rule and the quotient rule:
\begin{align}
\label{equ:a3}
    \int p_{\theta,t}\frac{\partial}{\partial t}f\left(\frac{q_t}{p_{\theta,t}}\right)\diff \bx &= \int p_{\theta,t} f'\left(\frac{q_t}{p_{\theta,t}}\right)\frac{\partial}{\partial t}\left(\frac{q_t}{p_{\theta,t}}\right)\diff\bx \nonumber \\
    &= \int f'\left(\frac{q_t}{p_{\theta,t}}\right)\left(\frac{\partial q_t}{\partial t} - \frac{q_t}{p_{\theta,t}}\frac{\partial p_{\theta,t}}{\partial t}\right)\diff\bx
\end{align}

Combining Eq. \ref{equ:a1}, Eq. \ref{equ:a2} and Eq. \ref{equ:a3}, we obtain:
\begin{align}
\label{equ:a4}
    \frac{\diff}{\diff t}\mathcal{D}_f(q_t\|p_{\theta,t}) &=\int \nabla_{\bx}\left[ \frac{1}{2}g^2(t)p_{\theta,t}\nabla_{\bx}\log p_{\theta,t}-\bm{F}(\bx,t)p_{\theta,t}\right]f\left(\frac{q_t}{p_{\theta,t}}\right)\diff \bx \nonumber\\
    &+\int f'\left(\frac{q_t}{p_{\theta,t}}\right)\nabla_{\bx}\left[ \frac{1}{2}g^2(t)q_t\nabla_{\bx}\log q_t-\bm{F}(\bx,t)q_t\right]\diff \bx \nonumber\\
    &-\int \frac{q_t}{p_{\theta,t}} f'\left(\frac{q_t}{p_{\theta,t}}\right)\nabla_{\bx}\left[ \frac{1}{2}g^2(t)p_{\theta,t}\nabla_{\bx}\log p_{\theta,t}-\bm{F}(\bx,t)p_{\theta,t}\right]\diff \bx
\end{align}

Apply integration by parts to the RHS of Eq. \ref{equ:a4} and with previous assumption that distribution $p_{\theta,t}$ and $q_t$ decay rapidly at infinity, we have:

\begin{align}
\label{equ:a5}
    \frac{\diff }{\diff t}\mathcal{D}_f(q_t\|p_{\theta,t}) &=-\int \left[ \frac{1}{2}g^2(t)p_{\theta,t}\nabla_x\log p_{\theta,t}-\bm{F}(\bx,t)p_{\theta,t}\right]\nabla_xf\left(\frac{q_t}{p_{\theta,t}}\right)\diff \bx\nonumber\\
        &-\int \nabla_xf'\left(\frac{q_t}{p_{\theta,t}}\right)\left[ \frac{1}{2}g^2(t)q_t\nabla_x\log q_t-\bm{F}(\bx,t)q_t\right]\diff \bx\nonumber\\
        &+\int \nabla_x\left[\frac{q_t}{p_{\theta,t}} f'\left(\frac{q_t}{p_{\theta,t}}\right)\right]\left[ \frac{1}{2}g^2(t)p_{\theta,t}\nabla_x\log p_{\theta,t}-\bm{F}(\bx,t)p_{\theta,t}\right]\diff \bx
\end{align}

Now we can further expand the gradient terms in Eq. \ref{equ:a5}:
\begin{align}
    \nabla_{\bx}f\left(\frac{q_t}{p_{\theta,t}}\right)=f'\left(\frac{q_t}{p_{\theta,t}}\right)\frac{\nabla_{\bx}q_tp_{\theta,t}-\nabla_{\bx}p_{\theta,t}q_t}{p_{\theta,t}^2}
    \label{equ:a6}
\end{align}
\begin{align}
    \nabla_{\bx}f'\left(\frac{q_t}{p_{\theta,t}}\right)=f''\left(\frac{q_t}{p_{\theta,t}}\right)\frac{\nabla_{\bx}q_tp_{\theta,t}-\nabla_{\bx}p_{\theta,t}q_t}{p_{\theta,t}^2}
    \label{equ:a7}
\end{align}
\begin{align}
    \nabla_{\bx}\left[\frac{q_t}{p_{\theta,t}} f'\left(\frac{q_t}{p_{\theta,t}}\right)\right]=f'\left(\frac{q_t}{p_{\theta,t}}\right)\frac{\nabla_{\bx}q_tp_{\theta,t}-\nabla_{\bx}p_{\theta,t}q_t}{p_{\theta,t}^2}+\frac{q_t}{p_{\theta,t}}f''\left(\frac{q_t}{p_{\theta,t}}\right)\frac{\nabla_{\bx}q_tp_{\theta,t}-\nabla_{\bx}p_{\theta,t}q_t}{p_{\theta,t}^2}
    \label{equ:a8}
\end{align}
Replace the gradient terms in Eq. \ref{equ:a5} with Eq. \ref{equ:a6}, Eq. \ref{equ:a7}, and Eq. \ref{equ:a8} and after algebraic manipulation, we obtain:
\begin{align}
    \frac{\diff}{\diff t}\mathcal{D}_f(q_t\|p_{\theta,t}) = -\frac{1}{2}g^2(t)\int p_{\theta,t}\left(\frac{q_t}{p_{\theta,t}}\right)^2f''\left(\frac{q_t}{p_{\theta,t}}\right)\|\nabla_{\bx}\log q_t - \nabla_{\bx}\log p_{\theta,t}\|^2 \diff\bx 
    \label{equ:a9}
\end{align}

The integral version of Eq. \ref{equ:a9} is:
\begin{align}
    \mathcal{D}_f(q_0\|p_{\theta}) = \int_0^T\frac{1}{2}g^2(t)\mathbb{E}_{p_{\theta,t}}\left[ \left(\frac{q_t}{p_{\theta,t}}\right)^2f''\left(\frac{q_t}{p_{\theta,t}}\right)\|\nabla_{\bx}\log q_t - \nabla_{\bx}\log p_{\theta,t}\|^2 \right] \diff t + \mathcal{D}_f(q_T\|p_{\theta,T})
\end{align}

\end{proof}

\subsection{Proof of Theorem~\ref{thm:f_divergence_gradient}}
\label{APP:GRADIENT}

\begin{lemma}[Calculate the gradient of \( \bx \sim p_{\theta,t} \)~\cite{xu2025one}] 
\label{lemma:gradient}
    Assuming that sampling from \( \bx \sim p_{\theta,t} \) can be parameterized as \( \bx = G_\theta(\bz) + \sigma(t) \epsilon \), where \( \bz \sim p(\bz) \), \( \epsilon \sim \mathcal{N}(0, I) \), and \( G_\theta \), \( g \) are differentiable mappings. In addition, \( g \) is constant with respect to \( \theta \). Then,
    \[
    \int \nabla_\theta p_{\theta,t}(\bx) g(\bx) \, \diff\bx = \int \int p(\epsilon) p(z) \nabla_x g(\bx) \nabla_\theta G_\theta(\bz) \, \diff\epsilon \, \diff \bz.
    \]
\end{lemma}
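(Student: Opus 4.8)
The plan is to never touch $\nabla_\theta p_{\theta,t}$ head-on; instead I would route the $\theta$-derivative through the reparameterization map, where everything is explicit. Throughout, $g$ denotes the (fixed, $\theta$-independent) differentiable test function and $\sigma(t)$ the $\theta$-independent noise scale in the parameterization $\bx = G_\theta(\bz) + \sigma(t)\epsilon$.

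First I would rewrite the left-hand side as the $\theta$-gradient of an expectation. Since $g$ does not depend on $\theta$, pulling $\nabla_\theta$ inside the integral (Leibniz rule) gives
\[
\int \nabla_\theta p_{\theta,t}(\bx)\, g(\bx)\,\diff\bx \;=\; \nabla_\theta \int p_{\theta,t}(\bx)\, g(\bx)\,\diff\bx \;=\; \nabla_\theta\, \mathbb{E}_{\bx\sim p_{\theta,t}}\!\left[ g(\bx)\right].
\]
Next I would invoke the hypothesis that $\bx\sim p_{\theta,t}$ is realized as $\bx = G_\theta(\bz) + \sigma(t)\epsilon$ with $\bz\sim p(\bz)$ and $\epsilon\sim\mathcal{N}(0,I)$, so that the expectation is taken against a $\theta$-free product measure:
\[
\mathbb{E}_{\bx\sim p_{\theta,t}}\!\left[ g(\bx)\right] \;=\; \int\!\!\int p(\bz)\, p(\epsilon)\; g\!\left(G_\theta(\bz) + \sigma(t)\epsilon\right)\,\diff\epsilon\,\diff\bz .
\]
Now I would differentiate under this integral (Leibniz rule again, now with a $\theta$-free measure) and apply the chain rule to the integrand: since $\sigma(t)$ is constant in $\theta$,
\[
\nabla_\theta\, g\!\left(G_\theta(\bz) + \sigma(t)\epsilon\right) \;=\; \nabla_\theta G_\theta(\bz)\; \nabla_{\bx} g(\bx)\big|_{\bx = G_\theta(\bz)+\sigma(t)\epsilon},
\]
which, after substituting back, is precisely the right-hand side of the claimed identity (up to the transpose/contraction convention used to pair $\nabla_{\bx}g$ with the Jacobian $\nabla_\theta G_\theta$). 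Equating the two expressions obtained for $\nabla_\theta \mathbb{E}_{\bx\sim p_{\theta,t}}[g(\bx)]$ closes the argument.

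The only genuine work — and the step I expect to be the main obstacle — is justifying the two interchanges of $\nabla_\theta$ with the integral. I would discharge this with a standard dominated-convergence / differentiation-under-the-integral argument: it suffices that $\theta\mapsto G_\theta(\bz)$ be $C^1$ with $\|\nabla_\theta G_\theta(\bz)\|$ and $\|\nabla_{\bx}g(G_\theta(\bz)+\sigma(t)\epsilon)\|$ dominated, uniformly for $\theta$ in a neighborhood of the point of interest, by a $p(\bz)p(\epsilon)$-integrable envelope (for instance, $g$ globally Lipschitz together with a local Lipschitz-in-$\theta$ bound on $G_\theta$ having $p(\bz)$-integrable constant). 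I would also note, as a sanity remark, that the additive Gaussian term $\sigma(t)\epsilon$ is exactly what guarantees $p_{\theta,t}$ is a smooth density — it is the Gaussian convolution of the pushforward of $p(\bz)$ by $G_\theta$ — so the object $\nabla_\theta p_{\theta,t}$ appearing on the left-hand side is well defined to begin with; no further structural knowledge of $p_{\theta,t}$ is needed, since the entire computation is carried out on the reparameterized side.
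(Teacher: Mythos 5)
Your argument is correct and follows exactly the route taken in the paper: pull $\nabla_\theta$ outside the integral, rewrite the expectation via the reparameterization $\bx = G_\theta(\bz) + \sigma(t)\epsilon$ against the $\theta$-free measure $p(\bz)p(\epsilon)$, then differentiate under the integral and apply the chain rule. The only difference is that you flag and sketch the dominated-convergence justification for the two interchanges, which the paper leaves implicit.
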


\begin{proof}
    As \( q_{t} \) and \( g \) are both continuous functions, we can interchange integration and differentiation:
    \begin{align*}
        \int \nabla_\theta p_{\theta,t}(\bx) g(\bx) \, \diff\bx &= \nabla_\theta \int p_{\theta,t}(\bx) g(\bx) \, \diff\bx \\
        &= \iint p(\epsilon) p(\bz) \nabla_\theta g(G_\theta(\bz) + \sigma(t) \epsilon) \, \diff\epsilon \, \diff \bz \\
        &= \iint p(\epsilon) p(\bz) \nabla_{\bx} g(\bx) \nabla_\theta G_\theta(\bz) \, \diff\epsilon \, \diff \bz\\
        &=\int p_{\theta,t}(\bx) \nabla_{\bx} g(\bx) \frac{\partial \bx}{\partial\theta} \diff\bx,
    \end{align*}
    where \( \bx = G_\theta(\bz) + \sigma(t) \epsilon \).
\end{proof}

\begin{lemma}[Calculate the gradient of the score fuction~\cite{luo2025one}] 
\label{lemma:sim}
    If distribution \( p_{\theta,t} \) satisfies some mild regularity conditions, we have for any score function \( \bm{s}_{q_t}(\cdot) \), the following equation holds for all parameter \( \theta \):
    \begin{align}
        &\mathbb{E}_{\bx_t \sim p_{\text{sg}[\theta],t}} \left[ \left( \bm{s}_{p_{\theta,t}}(\bx_t) - \bm{s}_{q_t}(\bx_t) \right) \frac{\partial}{\partial \theta} \bm{s}_{p_{\theta,t}}(\bx_t) \right]\\
        &= - \frac{\partial}{\partial \theta} \mathbb{E} \left[ \left\{ \left( \bm{s}_{\text{sg}[\theta],t}(\bx_t) - \bm{s}_{q_t}(\bx_t) \right) \right\}^T \left\{ \bm{s}_{\text{sg}[\theta],t}(\bx_t) - \nabla_{\bx_t} \log q_t(\bx_t | \bx_0) \right\} \right\}
    \end{align}
\end{lemma}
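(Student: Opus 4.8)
The plan is to identify the right-hand side as $-\Phi'(\theta)$ for an explicit one-parameter family $\Phi(\eta)$, and then rewrite $\Phi$ with the denoising score-matching (Vincent) identity so that $\Phi'(\theta)$ becomes directly computable. The key observation is that, on the right-hand side, the stop-gradient $\operatorname{sg}[\theta]$ freezes the parameter appearing inside the score networks, so the only live $\theta$-dependence there is the one carried by the sampling rule $\bx_0=g_\theta(\bz),\ \bx_t\mid\bx_0\sim q_t(\bx_t\mid\bx_0)$; renaming that sampling parameter $\eta$, the right-hand side is exactly $-\Phi'(\theta)$ with
\[
\Phi(\eta)\coloneqq \mathbb{E}_{\bx_0\sim p_{\eta,0},\,\bx_t\mid\bx_0}\Big[\big(\bm{s}_{p_{\theta,t}}(\bx_t)-\bm{s}_{q_t}(\bx_t)\big)^{T}\big(\bm{s}_{p_{\theta,t}}(\bx_t)-\nabla_{\bx_t}\log q_t(\bx_t\mid\bx_0)\big)\Big].
\]

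First I would prove the Vincent identity in the needed form: for any vector field $\bm{a}$ independent of $\bx_0$,
\[
\mathbb{E}_{\bx_0\sim p_{\eta,0},\,\bx_t\mid\bx_0}\big[\bm{a}(\bx_t)^{T}\nabla_{\bx_t}\log q_t(\bx_t\mid\bx_0)\big]=\mathbb{E}_{\bx_t\sim p_{\eta,t}}\big[\bm{a}(\bx_t)^{T}\bm{s}_{p_{\eta,t}}(\bx_t)\big],
\]
obtained by differentiating $p_{\eta,t}(\bx_t)=\int p_{\eta,0}(\bx_0)q_t(\bx_t\mid\bx_0)\diff\bx_0$ under the integral in $\bx_t$ and applying Bayes' rule to recognise $\mathbb{E}_{\bx_0\mid\bx_t}[\nabla_{\bx_t}\log q_t(\bx_t\mid\bx_0)]=\bm{s}_{p_{\eta,t}}(\bx_t)$. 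Taking $\bm{a}=\bm{s}_{p_{\theta,t}}-\bm{s}_{q_t}$ converts $\Phi$ into the $\bx_0$-free form
\[
\Phi(\eta)=\mathbb{E}_{\bx_t\sim p_{\eta,t}}\Big[\big(\bm{s}_{p_{\theta,t}}(\bx_t)-\bm{s}_{q_t}(\bx_t)\big)^{T}\big(\bm{s}_{p_{\theta,t}}(\bx_t)-\bm{s}_{p_{\eta,t}}(\bx_t)\big)\Big],
\]
which holds for every $\eta$ and hence may be used to evaluate $\Phi'$.

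Next I would differentiate this last form at $\eta=\theta$. By the Leibniz rule $\Phi'(\eta)$ splits into the term where $\partial_\eta$ hits the density $p_{\eta,t}$ and the term where it hits the inner factor $\bm{s}_{p_{\eta,t}}$. At $\eta=\theta$ the first term vanishes, because there the integrand carries the factor $\bm{s}_{p_{\theta,t}}(\bx_t)-\bm{s}_{p_{\theta,t}}(\bx_t)=\bm{0}$; the second term equals $-\,\mathbb{E}_{\bx_t\sim p_{\theta,t}}\big[(\bm{s}_{p_{\theta,t}}(\bx_t)-\bm{s}_{q_t}(\bx_t))^{T}\partial_\theta\bm{s}_{p_{\theta,t}}(\bx_t)\big]$, i.e. minus the left-hand side of the lemma, using $\partial_\eta\bm{s}_{p_{\eta,t}}\big|_{\eta=\theta}=\partial_\theta\bm{s}_{p_{\theta,t}}$. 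Hence $\Phi'(\theta)=-(\text{LHS})$, so the left-hand side equals $-\Phi'(\theta)$, which is the claimed right-hand side.

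The conceptual content is light, so I expect the main obstacle to be the analytic bookkeeping hidden in the ``mild regularity conditions''. The most delicate point is the Leibniz interchange used to differentiate $\Phi$: it needs a dominating integrable majorant, uniform for $\eta$ in a neighbourhood of $\theta$, for both $\partial_\eta p_{\eta,t}$ times the integrand and $p_{\eta,t}$ times the $\eta$-derivative of the integrand, which in turn requires moment bounds on $p_{\eta,t}$ together with local Lipschitz control of $\eta\mapsto\bm{s}_{p_{\eta,t}}(\bx)$ and of its spatial gradient. The remaining technical checks are milder: the differentiation under the integral sign in the Vincent identity is legitimate because $q_t(\cdot\mid\bx_0)$ is a smooth, rapidly decaying Gaussian kernel, so $p_{\eta,t}$ is smooth and strictly positive for $t>0$, and the exchange $\partial_\eta\bm{s}_{p_{\eta,t}}=\nabla_{\bx}\partial_\eta\log p_{\eta,t}$ is a Clairaut/dominated-convergence statement. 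Assembling these estimates is the technical heart of the argument.
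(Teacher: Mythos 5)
Your proof is correct, but it is organized quite differently from the paper's. The paper follows SIM and starts from the score projection identity $\mathbb{E}\{u(\bx_t,\theta)^{T}(\bm{s}_{p_{\theta},t}(\bx_t)-\nabla_{\bx_t}\log q_t(\bx_t|\bx_0))\}=0$, which holds identically in $\theta$ and $u$; it then takes the total $\theta$-derivative of this identity, tracks the contributions through $\bx_t$, $\bx_0$, $u$, and the explicit score argument via the chain rule, regroups everything except the $\partial_\theta\bm{s}_{p_{\theta},t}$ term into the derivative of the stop-gradient expression, and finally substitutes $u=\bm{s}_{p_{\theta,t}}-\bm{s}_{q_t}$. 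You instead isolate the single live $\theta$-dependence on the right-hand side as a separate sampling parameter $\eta$, use the Vincent identity to integrate out $\bx_0$ and rewrite $\Phi(\eta)$ with the marginal score $\bm{s}_{p_{\eta,t}}$, and then observe that at $\eta=\theta$ the density-derivative term dies because the integrand vanishes on the diagonal, leaving exactly the left-hand side. The two arguments rest on the same underlying fact --- $\mathbb{E}_{\bx_0|\bx_t}[\nabla_{\bx_t}\log q_t(\bx_t|\bx_0)]=\bm{s}_{p_{\eta,t}}(\bx_t)$, which is also what makes the paper's projection identity true --- but your decomposition makes the cancellation mechanism transparent (one sees \emph{why} only the $\partial_\theta\bm{s}_{p_{\theta,t}}$ term survives), whereas the paper's chain-rule bookkeeping obscures it; the paper's version has the minor advantage of proving the identity for arbitrary $u(\bx_t,\theta)$ before specializing. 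Your closing discussion of where the ``mild regularity conditions'' are actually consumed (the Leibniz interchanges and the smoothness of $\eta\mapsto\bm{s}_{p_{\eta,t}}$) is more careful than anything in the paper, which simply invokes them.
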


For completeness, we appreciate the efforts of Luo et al.~\cite{luo2025one} and provide the proof here. The original version can be refered to Theorem 3.1 from ~\cite{luo2025one}.

\begin{proof}
    Starting with score projection identity~\cite{zhou2024score}:
    \begin{equation}
        \mathbb{E}_{\bx_0 \sim p_\theta, 0 \atop \bx_t | \bx_0 \sim q_t(\bx_t | \bx_0)} \left\{ u(\bx_t, \theta)^{T} \left( \bm{s}_{p_\theta, t}(\bx_t) - \nabla_{\bx_t} \log q_t(\bx_t | \bx_0) \right) \right\} = 0, \quad \forall \theta, \forall u.
    \end{equation}

    Taking the gradient with respect to $\theta$ on the above identity, we have:
    \begin{align}
        0 &= \mathbb{E}_{\bx_0 \sim p_\theta, 0 \atop \bx_t | \bx_0 \sim q_t(\bx_t | \bx_0)} \left\{ \frac{\partial}{\partial \bx_t} \left( u(\bx_t, \theta)^{T} \left\{ \bm{s}_{p_\theta, t}(\bx_t) - \nabla_{\bx_t} \log q_t(\bx_t | \bx_0) \right\} \right) \right\} \frac{\partial \bx_t}{\partial \theta} \\
        &+ \mathbb{E}_{\bx_0 \sim p_\theta, 0 \atop \bx_t | \bx_0 \sim q_t(\bx_t | \bx_0)} \left\{ \frac{\partial}{\partial \bx_0} \left( u(\bx_t, \theta)^{T} \left\{ -\nabla_{\bx_t} \log q_t(\bx_t | \bx_0) \right\} \right) \right\} \frac{\partial \bx_0}{\partial \theta} \\
        &+ \mathbb{E}_{\bx_0 \sim p_\theta, 0 \atop \bx_t | \bx_0 \sim q_t(\bx_t | \bx_0)} \left\{ u(\bx_t, \theta)^{T} \frac{\partial}{\partial \theta} \left\{ \bm{s}_{p_\theta, t}(\bx_t) \right\} \right\} + \frac{\partial}{\partial \theta} u(\bx_t, \theta)^{T} \bm{s}_\theta(\bx_t) \\
        &= \mathbb{E}_{\bx_0 \sim p_\theta, 0 \atop \bx_t | \bx_0 \sim q_t(\bx_t | \bx_0)} \left\{ u(\bx_t, \theta)^{T} \frac{\partial}{\partial \theta} \left\{ \bm{s}_{p_\theta, t}(\bx_t) \right\} \right\} \\
        &+ \mathbb{E}_{\bx_0 \sim p_\theta, 0 \atop \bx_t | \bx_0 \sim q_t(\bx_t | \bx_0)} \left\{ \frac{\partial}{\partial \bx_t} \left\{ \frac{\partial}{\partial \theta} \left( u(\bx_t, \theta)^{T} \left\{ \bm{s}_{p_\theta, t}(\bx_t) - \nabla_{\bx_t} \log q_t(\bx_t | \bx_0) \right\} \right) \right\} \frac{\partial \bx_t}{\partial \theta} \right\} \\
        & + \mathbb{E}_{\bx_0 \sim p_\theta, 0 \atop \bx_t \bx_0 \sim q_t(\bx_t | \bx_0)}\left \{\frac{\partial}{\partial \bx_0} \left\{ u(\bx_t, \theta)^{T} \left\{ - \nabla_{\bx_t} \log q_t(\bx_t | \bx_0) \right\} \right\} \frac{\partial \bx_0}{\partial \theta} + \frac{\partial}{\partial \theta} u(\bx_t, \theta)^{T} \bm{s}_\theta(\bx_t)\right\} \\
        &= \mathbb{E}_{\bx_t \sim p_\theta, t} \left\{ u(\bx_t, \theta)^{T} \frac{\partial}{\partial \theta} \left\{ \bm{s}_{p_\theta, t}(\bx_t) \right\} \right\} \\
        &+ \frac{\partial}{\partial \theta} \mathbb{E}_{\bx_0 \sim p_\theta, 0 \atop \bx_t | \bx_0 \sim q_t(\bx_t | \bx_0)} \left\{ u(\bx_t, \theta)^{T} \left\{ \bm{s}_{p_\theta, t}(\bx_t) - \nabla_{\bx_t} \log q_t(\bx_t | \bx_0) \right\} \right\}.
    \end{align}

    Therefore, we obtain the following identity:
    \begin{equation}
        \mathbb{E}_{\bx_t \sim p_\theta, t} \left\{ u(\bx_t, \theta)^{T} \frac{\partial}{\partial \theta} \bm{s}_{p_\theta, t}(\bx_t) \right\} = - \frac{\partial}{\partial \theta} \mathbb{E}_{\bx_0 \sim p_\theta, 0 \atop x_t | \bx_0 \sim q_t(\bx_t | \bx_0)} \left\{ u(\bx_t, \theta)^{T} \left( \bm{s}_{p_\theta, t}(\bx_t) - \nabla_{\bx_t} \log q_t(\bx_t | \bx_0) \right) \right\}.
    \end{equation}

    Replacing $u(\bx_t )$ with $ \bm{s}_{p_{\theta,t}}(\bx_t) - \bm{s}_{q_t}(\bx_t) $ we can proof the correctness of the original identity.
    
\end{proof}

We now complete the proof of Theorem~\ref{thm:f_divergence_gradient}:

\begin{proof}
    Applying the product rule to the gradient, we can obtain:
    \begin{align}
        &\nabla_{\theta}\left\{ \frac{1}{2}g^2(t) \mathbb{E}_{p_{\theta,t}} \left[ \left( \frac{q_t}{p_{\theta,t}} \right)^2 f''\left( \frac{q_t}{p_{\theta,t }} \right) \| \nabla \log p_{\theta,t} - \nabla \log q_{t } \|^2_2 \right]\right\}\\
        =&\frac{1}{2}g^2(t)\nabla_{\theta}\int p_{\theta,t }(\bx_t) \left( \frac{q_t}{p_{\theta,t }} \right)^2 f''\left( \frac{q_t}{p_{\theta,t }} \right) \| \nabla \log p_{\theta,t} - \nabla \log q_{t } \|^2_2 \diff \bx_t \\
        =&\frac{1}{2}g^2(t)\int \nabla_{\theta} p_{\theta,t }(\bx_t) \left( \frac{q_t}{p_{\theta,t }} \right)^2 f''\left( \frac{q_t}{p_{\theta,t }} \right) \| \nabla \log p_{\theta,t} - \nabla \log q_{t } \|^2_2 \diff \bx_t\\
        +&\frac{1}{2}g^2(t)\int p_{\theta,t }(\bx_t) \nabla_{\theta}\left[\left( \frac{q_t}{p_{\theta,t }} \right)^2 f''\left( \frac{q_t}{p_{\theta,t }} \right) \| \nabla \log p_{\theta,t} - \nabla \log q_{t } \|^2_2 \right]\diff \bx_t,
    \end{align}
    which can be further decomposed into the following four terms:
    \begin{align}
        \text{Grad}=&\underbrace{\frac{1}{2}g^2(t)\int \nabla_{\theta} p_{\theta,t }(\bx_t) \left( \frac{q_t}{p_{\theta,t }} \right)^2 f''\left( \frac{q_t}{p_{\theta,t }} \right) \| \nabla \log p_{\theta,t} - \nabla \log q_{t } \|^2_2 \diff \bx_t}_A\\
        +&\underbrace{\frac{1}{2}g^2(t)\int p_{\theta,t }(\bx_t) \nabla_{\theta}\left[\left( \frac{q_t}{p_{\theta,t }} \right)^2\right] f''\left( \frac{q_t}{p_{\theta,t }} \right) \| \nabla \log p_{\theta,t} - \nabla \log q_{t } \|^2_2 \diff \bx_t}_B\\
        +&\underbrace{\frac{1}{2}g^2(t)\int p_{\theta,t }(\bx_t) \left( \frac{q_t}{p_{\theta,t }} \right)^2\nabla_{\theta}\left[f''\left( \frac{q_t}{p_{\theta,t }} \right) \right]  \| \nabla \log p_{\theta,t} - \nabla \log q_{t } \|^2_2 \diff \bx_t}_C\\
        +&\underbrace{\frac{1}{2}g^2(t)\int p_{\theta,t }(\bx_t) \left( \frac{q_t}{p_{\theta,t }} \right)^2 f''\left( \frac{q_t}{p_{\theta,t }} \right) \nabla_{\theta}\left[\| \nabla \log p_{\theta,t} - \nabla \log q_{t } \|^2_2\right] \diff \bx_t}_{D}
    \end{align}
    
    We calculate the above four terms separately.
    
    \begin{align}
        A=&\frac{1}{2}g^2(t)\int \nabla_{\theta} p_{\theta,t }(\bx_t) \left( \frac{q_t}{p_{\theta,t }} \right)^2 f''\left( \frac{q_t}{p_{\theta,t }} \right) \| \nabla \log p_{\theta,t} - \nabla \log q_{t } \|^2_2 \diff \bx_t\\
        =&\frac{1}{2}g^2(t)\int p_{\theta,t}(\bx_t)\left(2 \frac{q_t}{p_{\theta,t}} \nabla_{\bx}\frac{q_t}{p_{\theta,t}}\frac{\partial \bx_t}{\partial\theta}\right)f''\left( \frac{q_t}{p_{\theta,t}} \right) \| \nabla \log p_{\theta,t} - \nabla \log q_{t}\|^2_2\diff \bx_t\\
        +&\frac{1}{2}g^2(t)\int p_{\theta,t }(\bx_t)\left( \frac{q_t}{p_{\theta,t}} \right)^2\left(f'''\left( \frac{q_t}{p_{\theta,t}} \right)\nabla_{\bx}\frac{q_t}{p_{\theta,t}}\frac{\partial \bx_t}{\partial\theta}\right) \| \nabla \log p_{\theta,t} - \nabla \log q_{t} \|^2_2\diff \bx_t\\
        +&\frac{1}{2}g^2(t)\int p_{\theta,t }(\bx_t)\left( \frac{q_t}{p_{\theta,t}} \right)^2f''\left( \frac{q_t}{p_{\theta,t}} \right)\nabla_{\theta}\left( \| \nabla \log p_{\theta,t} - \nabla \log q_{t} \|^2_2\right)\diff \bx_t
    \end{align}

    \begin{align}
        B=&\frac{1}{2}g^2(t)\int p_{\theta,t }(\bx_t) \nabla_{\theta}\left[\left( \frac{q_t}{p_{\theta,t }} \right)^2\right] f''\left( \frac{q_t}{p_{\theta,t }} \right) \| \nabla \log p_{\theta,t} - \nabla \log q_{t } \|^2_2 \diff \bx_t\\
        =&\frac{1}{2}g^2(t)\int p_{\theta,t }(\bx_t) \left[2\left( \frac{q_t}{p_{\theta,t }} \right) \left(-\frac{q_t}{p_{\theta,t }^2}\right)\nabla_{\theta} p_{\theta,t }(\bx_t)\right]f''\left( \frac{q_t}{p_{\theta,t }} \right) \| \nabla \log p_{\theta,t} - \nabla \log q_{t } \|^2_2 \diff \bx_t\\
        =&-\frac{1}{2}g^2(t)\int \nabla_{\theta} p_{\theta,t }(\bx_t) \left[2\left( \frac{q_t}{p_{\theta,t }} \right)^2 \right]f''\left( \frac{q_t}{p_{\theta,t }} \right) \| \nabla \log p_{\theta,t} - \nabla \log q_{t } \|^2_2 \diff \bx_t\\
        =&-2*A
    \end{align}
    
    \begin{align}
        C=&\frac{1}{2}g^2(t)\int p_{\theta,t }(\bx_t) \left( \frac{q_t}{p_{\theta,t }} \right)^2\nabla_{\theta}\left[f''\left( \frac{q_t}{p_{\theta,t }} \right) \right]  \| \nabla \log p_{\theta,t} - \nabla \log q_{t } \|^2_2 \diff \bx_t\\
        =&-\frac{1}{2}g^2(t)\int p_{\theta,t }(\bx_t)\left(3\left( \frac{q_t}{p_{\theta,t}} \right)^2\nabla_x\frac{q_t}{p_{\theta,t}}\frac{\partial \bx_t}{\partial\theta}\right)f'''\left( \frac{q_t}{p_{\theta,t}} \right) \| \nabla \log p_{\theta,t} - \nabla \log q_{t} \|^2_2\diff \bx_t\\
        &-\frac{1}{2}g^2(t)\int p_{\theta,t }(\bx_t)\left( \frac{q_t}{p_{\theta,t}} \right)^3\left(f''''\left( \frac{q_t}{p_{\theta,t}} \right)\nabla_{\bx}\frac{q_t}{p_{\theta,t}}\frac{\partial \bx_t}{\partial\theta}\right) \| \nabla \log p_{\theta,t} - \nabla \log q_{t} \|^2_2\diff \bx_t\\
        &-\frac{1}{2}g^2(t)\int p_{\theta,t }(\bx_t)\left( \frac{q_t}{p_{\theta,t}} \right)^3f'''\left( \frac{q_t}{p_{\theta,t}} \right)\nabla_{\theta}\left( \| \nabla \log p_{\theta,t} - \nabla \log q_{t} \|^2_2\right)\diff \bx_t
    \end{align}

    \begin{align}
        D=\frac{1}{2}g^2(t)\int p_{\theta,t }(\bx_t) \left( \frac{q_t}{p_{\theta,t }} \right)^2 f''\left( \frac{q_t}{p_{\theta,t }} \right) \nabla_{\theta}\left[\| \nabla \log p_{\theta,t} - \nabla \log q_{t } \|^2_2\right] \diff \bx_t
    \end{align}

    As a result:
    \begin{align}
        &\nabla_{\theta}\left\{ \frac{1}{2}g^2(t) \mathbb{E}_{p_{\theta,t }} \left[ \left( \frac{q_t}{p_{\theta,t }} \right)^2 f''\left( \frac{q_t}{p_{\theta,t }} \right) \| \nabla \log p_{\theta,t} - \nabla \log q_{t } \|^2_2 \right]\right\}\\
        &=A+B+C+D=-A+C+D\\
        &=\frac{1}{2}g^2(t)\mathbb{E}_{ p_{\theta,t}}\left[\underbrace{\left[\left(\frac{q_t}{p_{\theta,t}}\right)^3f'''\left(\frac{q_t}{p_{\theta,t}}\right)\right]}_{\text{weight 1}}\nabla_{\theta}\|\nabla \log p_{\theta,t} - \nabla \log q_{t} \|_2^2\right]\\
        &+\frac{1}{2}g^2(t)\mathbb{E}_{p_{\theta,t}}\left[\underbrace{\left(*\right)\|\nabla \log p_{\theta,t} - \nabla \log q_{t} \|_2^2}_{\text{weight 2}}\left(\nabla\log p_{\theta,t} - \nabla \log q_{t}\right)\frac{\partial \bx_t}{\partial \theta}\right]
    \end{align}
    where $(*)$ stands for $2\left(\frac{q_t}{p_{\theta,t}}\right)^2f''\left(\frac{q_t}{p_{\theta,t}}\right)+4\left(\frac{q_t}{p_{\theta,t}}\right)^3f'''\left(\frac{q_t}{p_{\theta,t}}\right)+\left(\frac{q_t}{p_{\theta,t}}\right)^4f''''\left(\frac{q_t}{p_{\theta,t}}\right).$ 

    Now we will focus on the only intractable term left in the previous equation: $\nabla_{\theta}\|\nabla \log p_{\theta,t} - \nabla \log q_{t} \|_2^2$. Such a problem has been well studied by FGM \citep{huang2024flow} and SIM\citep{luo2025one}. The former one calculated the term under the assumption that $\bx$ has gradient dependence on $\theta$, while SIM \citep{luo2025one} simply ignores such an assumption and achieves comparable performance. For the simplicity of the loss expression, we follow the setting in SIM\citep{luo2025one}. Thus we calculate $2(\nabla \log p_{\theta,t} - \nabla \log q_{t}) \frac{\partial \bs_{p_{\theta,t}}(\bx_t)}{\partial \theta}$.
    
    Applying Lemma \ref{lemma:sim}, we have:

    \begin{align}
         &\frac{1}{2}g^2(t)\nabla_\theta \Bigg\{ \mathbb{E}_{ p_{\theta,t}} \left[ \left( \frac{q_t}{p_{\theta,t}} \right)^2 f''\left( \frac{q_t}{p_{\theta,t}} \right) \| \bm{s}_{p_{\theta,t}}(\bx_t) - \bm{s}_{q_t}(\bx_t) \|^2_2 \right] \Bigg\} \nonumber \\
        =& -\frac{1}{2}g^2(t)\frac{\partial}{\partial \theta}\Bigg\{ \mathbb{E}_{p_{\theta,t}} \Bigg[ \text{SG}\Bigg(\mathcal{C}_1\left(\frac{q_t}{p_{\theta,t}}\right) \Bigg)  2\left(\bm{s}_{q_t}(\bx_t) - \bm{s}_{p_{sg[\theta],t}}(\bx_t) \right)\left(\bm{s}_{p_{sg[\theta],t}}(\bx_t) - \nabla_{\bx_t}\log q_t(\bx_t\mid \bx_0)\right) \Bigg]\Bigg\} \nonumber \\
         & -\frac{1}{2}g^2(t)\frac{\partial}{\partial \theta}\Bigg\{\mathbb{E}_{p_{\theta,t}} \Bigg[\text{SG}\Bigg( \mathcal{C}_2\left(\frac{q_t}{p_{\theta,t}}\right) \left( \bm{s}_{q_t}(\bx_t) - \bm{s}_{p_{\theta,t}}(\bx_t) \right) \| \bm{s}_{q_t}(\bx_t) - \bm{s}_{p_{\theta,t}}(\bx_t) \|_2^2 \Bigg) \bx_t \Bigg]\Bigg\}
    \end{align}

    where $\text{SG}$ donates stop gradient operator, and the curvature coupling coefficient $\mathcal{C}(r)$ are defined as:
    \begin{align}
        \mathcal{C}_1(r) \coloneqq  r^3 f'''(r),~~
        \mathcal{C}_2(r) \coloneqq 2r^2 f''(r) + 4r^3 f'''(r) + r^4 f''''(r), \quad r \coloneqq \frac{q_t(\bx)}{p_{\theta,t}(\bx)}
    \end{align}

\end{proof}

\subsection{Density Ratio Representation}
\begin{theorem}[Density Ratio Representation]
\label{thm:density_ratio}
For adversarial discriminator conditioned on the timestep $t$ $D$$:\mathcal{X}\times[0,T]\to[0,1]$ satisfying:
\begin{equation}
    D^* = arg\min_D\; \mathbb{E}_{\bx\sim q_{\mathrm{data}}}[-\log D(\bx,t)] + \mathbb{E}_{\bx\sim p_g}[-\log(1-D(\bx,t))],
\end{equation}
The density ratio admits the variational representation:
\begin{equation}
    \frac{q_t(\bx)}{p_{\theta,t}(\bx)} = \frac{D^*(\bx,t)}{1-D^*(\bx,t)}.
\end{equation}
\end{theorem}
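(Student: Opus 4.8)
The plan is to follow the classical pointwise-optimization argument of GAN theory, adapted to the time-conditioned setting. Fix the noise level $t$ and write $q_t$ and $p_{\theta,t}$ for the two distributions the discriminator is trained against at that level (the diffused data distribution and the diffused generator distribution). First I would rewrite the adversarial objective as a single integral over the sample space,
\[
\mathcal{L}_t(D) = \int_{\mathcal{X}} \Big[ -q_t(\bx)\log D(\bx,t) - p_{\theta,t}(\bx)\log\big(1 - D(\bx,t)\big) \Big]\, \diff\bx,
\]
and observe that, since $D(\cdot,t)$ ranges over (essentially) all measurable functions into $[0,1]$, minimizing $\mathcal{L}_t$ reduces to minimizing the integrand separately for each $\bx$.

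Next I would carry out this pointwise minimization. For fixed $\bx$ with $q_t(\bx), p_{\theta,t}(\bx) > 0$ — which holds everywhere under the Gaussian-smoothed forward diffusion, so the common-support requirement is automatic — consider $\varphi(a) = -q_t(\bx)\log a - p_{\theta,t}(\bx)\log(1-a)$ on $(0,1)$. Its second derivative $q_t(\bx)/a^2 + p_{\theta,t}(\bx)/(1-a)^2$ is strictly positive, so $\varphi$ is strictly convex with a unique minimizer; solving $\varphi'(a) = -q_t(\bx)/a + p_{\theta,t}(\bx)/(1-a) = 0$ gives
\[
D^*(\bx,t) = \frac{q_t(\bx)}{q_t(\bx) + p_{\theta,t}(\bx)}.
\]
Substituting this into the claimed identity, a one-line computation yields
\[
\frac{D^*(\bx,t)}{1 - D^*(\bx,t)} = \frac{q_t(\bx)/\big(q_t(\bx)+p_{\theta,t}(\bx)\big)}{p_{\theta,t}(\bx)/\big(q_t(\bx)+p_{\theta,t}(\bx)\big)} = \frac{q_t(\bx)}{p_{\theta,t}(\bx)},
\]
which is the desired representation, valid for almost every $\bx$.

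The only genuine subtlety — and the step I would treat most carefully — is the exchange of the minimization with the integral: this is the usual nonparametric idealization, requiring the discriminator family to be expressive enough to realize the pointwise optimum, together with mutual absolute continuity of $q_t$ and $p_{\theta,t}$ so that the ratio is well defined. Both conditions are met in the diffusion setting, since convolving with a Gaussian transition kernel renders every $q_t$ and $p_{\theta,t}$ smooth and strictly positive on all of $\mathcal{X}$; I would state these as the standing regularity assumptions and note that everything else in the argument is routine convex calculus.
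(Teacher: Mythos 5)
Your proposal is correct and follows essentially the same route as the paper: reduce the adversarial objective to a pointwise optimization, solve the first-order condition to obtain $D^*(\bx,t)=q_t(\bx)/(q_t(\bx)+p_{\theta,t}(\bx))$, and take the ratio. You are in fact somewhat more careful than the paper's proof, since you verify strict convexity of the pointwise objective and make the common-support and nonparametric-idealization assumptions explicit rather than leaving them implicit.
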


\begin{proof}[Proof of Theorem~\ref{thm:density_ratio}]
Firstly, we calculate the optimal discriminator:
\begin{lemma}[Optimal Discriminator Characterization]
\label{lem:optimal_d}
For measurable functions $D: \mathcal{X} \times [0,T] \to [0,1]$, the minimizer of:
\begin{equation}
    \mathcal{J}(D) = \mathbb{E}_{\bx\sim q_{t}}[-\log D(\bx,t)] + \mathbb{E}_{\bx\sim p_{\theta,t}}[-\log(1-D(\bx,t))]
\end{equation}
satisfies the first-order optimality condition:
\begin{equation}
    \frac{\delta \mathcal{J}}{\delta D}\bigg|_{D=D^*} = -\frac{q_{t}(\bx)}{D^*(\bx,t)} + \frac{p_{\theta,t}(\bx)}{1-D^*(\bx,t)} = 0.
\end{equation}
\end{lemma}

Solving Lemma~\ref{lem:optimal_d}'s optimality condition yields:
\begin{equation}
    D^*(\bx,t) = \frac{q_{t}(\bx)}{q_{t}(\bx) + p_{\theta,t}(\bx)}
\end{equation}

Through algebraic transformation, we have:
\begin{equation}
    \frac{q_t(\bx)}{p_{\theta,t}(\bx)} = \frac{D^*(\bx,t)}{1-D^*(\bx,t)}.
\end{equation}

\end{proof}

\subsection{Proof of Corollary ~\ref{cor:iff}}
\label{APP:IFF}
\begin{proof}[Proof of Corollary\ref{cor:iff}]
Using Theorem\ref{THM:ONE}, assuming some mild assumptions on the growth of $\log q_t $ and $\log p_t$ at infinity, we have:
\begin{equation}
\label{equ:1}
        \mathcal{D}_f(q_0||p_{\theta})= \int_0^T \frac{1}{2}g^2(t) \mathbb{E}_{p_{\theta}} \left[ \left( \frac{q_t}{p_{\theta,t}} \right)^2 f''\left( \frac{q_t}{p_{\theta,t}} \right) \| \nabla \log p_{\theta,t} - \nabla \log q_t \|^2_2 \right] \diff t.
\end{equation}
We also have the differential form of this formula:
\begin{equation}
        \frac{\diff }{\diff t}\mathcal{D}_f(q_t||p_{\theta,t})=  -\frac{1}{2}g^2(t) \mathbb{E}_{p_{\theta,t}} \left[ \left( \frac{q_t}{p_{\theta,t}} \right)^2 f''\left( \frac{q_t}{p_{\theta,t}} \right) \| \nabla \log p_{\theta,t} - \nabla \log q_t \|^2_2 \right].
\end{equation}
We can re-weight Eq. \ref{equ:1} for arbitrary weightings, where  $W(t)$ is selected in our case. The re-weighted version of the RHS of Eq. \ref{equ:1} can be written as:
\begin{align}
    &\int_0^T \frac{1}{2}g^2(t) W(t) \mathbb{E}_{p_{\theta,t}} \left[ \left( \frac{q_t}{p_{\theta,t}} \right)^2 f''\left( \frac{q_t}{p_{\theta,t}} \right) \| \nabla \log p_{\theta,t} - \nabla \log q_t \|^2_2 \right] \diff t.\\
    =&\int_0^T -W(t) \frac{\diff }{\diff t}\mathcal{D}_f(q_t||p_{\theta,t}) \diff t.\\
    =&-W(t)\mathcal{D}_f(q_t||p_{\theta,t})\bigg|_{0}^{T} +\int_0^T W'(t) \mathcal{D}_f(q_t||p_{\theta,t}) \diff t.\\
    =&\int_0^T w(t) \mathcal{D}_f(q_t||p_{\theta,t}) \diff t.
\end{align}

\end{proof}

\section{Detailed Analysis on $f$ Divergence}
\label{APP:f}
In this section, we provide several example divergences derived from our Uni-Instruct framework. Tab. \ref{TAB:EXAMPLE} summarizes five types of divergence.

\begin{table}[!htbp]
\centering
\begin{tabular}{ccccc}
\hline
Divergence & $f(r)$ & $\mathcal{C}_1(r)$ & $\mathcal{C}_2(r)$ & Mode-Seeking?\\
\hline
FKL & $r \log r$ & $-r$ & 0 & -\\
\hline
RKL & $- \log r$ & $-1$ & 0 & \checkmark\\
\hline
JKL & $(r-1) \log r$ & $-r-1$ & 0 & -\\
\hline
$\chi^2$ & $(r - 1)^2$ & $0$ & $4r^2$ & -\\
\hline
JS & $r \log r - (r + 1) \log \left( r + \frac{1}{2} \right)$ & $-\frac{r(2r+1)}{(r+1)^2}$ & $-\frac{2r^2}{(r+1)^3}$ & \checkmark\\
\hline
\end{tabular}
\vspace{0.1cm}
\caption{Comparison of different $f$-divergences as a function of the likelihood ratio $r:=\frac{q_t(\bx)}{p_{\theta,t}(\bx)}$}
\label{TAB:EXAMPLE}
\end{table}

\paragraph{Mode Seeking vs.\@ Mode Covering} For arbitrary $f$ divergence $\mathcal{D}_f(q\vert\vert p)=\int p(\bx)f\left(\frac{q(\bx)}{p(\bx)}\right)\diff \bx$, it can be classified into two categories based on its mode seeking behavior. Divergences that are mode-seeking tend to push the generative distribution $p_{\theta}$ toward reproducing only a subset of the modes of the data distribution $p$. This selectivity is problematic for generative modeling because it can cause missing modes and reduce sample diversity. Such mode collapse has been noted for the integral KL loss employed in Diff-Instruct and DMD \citep{luo2023diff,yin2024one}.
A convenient way to quantify mode-seeking behavior is to inspect the limit $\lim_{r\to\infty} f(r)/r$: the smaller this limit grows, the stronger the mode-seeking tendency. Both reverse KL and Jensen–Shannon (JS) divergences have a finite value for this limit. By contrast, forward KL, Jeffrey KL, and $\chi^2$ yield an infinite limit, reflecting its well-known mode-covering nature, which tends to recover the entire data distribution $q$. In practice, we observed that mode covering divergences such as forward-KL and Jeffrey-KL achieves a lower FID score.

\paragraph{$\text{Grad(SIM)}$ vs.\@ $\text{Grad(DI)}$} Another way to inspect different $f$ divergence is checking the gradient expression. It is worth mentioning that the gradient expression of Uni-Instruct is composed of $\text{Grad(SIM)}$ and $\text{Grad(DI)}$ (Eq. \ref{QEU:SIMloss}). For KL divergence (reverse, forward, Jeffrey), $\mathcal{C}_2(r)=0$ and the gradient is only contributed by $\text{Grad(SIM)}$. On contrary, when selecting $\chi^2$ divergence, $\mathcal{C}_2(r)=0$ and the gradient is only contributed by $\text{Grad(DI)}$. The gradient expression of Jensen-Shannon (JS) is a combination of both.

\paragraph{Training Stability}
However, during training we often observe training instability in Jensen-Shannon divergence and $\chi^2$ divergence, due to the complex expression of $\mathcal{C}_1(r)$ and $\mathcal{C}_2(r)$, which will result in higher FID score (Tab. \ref{tab:divergence}). Tricks such as normalizing the weighting function or implementing the discriminator on the teacher model \citep{xu2025one} can be applied to stabilize training. We leave this part to future work.

\section{Unified Distillation Loss}
In this section, we discuss how Uni-Instruct unifies previous diffusion distillation methods through recovering previous methods into a special case of Uni-Instruct. We summarize the connections in Tab. \ref{TAB:TEASER}.

\subsection{One Step Diffusion Model Distillation}
From Section \ref{SEC::CONNECTIONS} and Corollary \ref{cor:iff}, we have demonstrated that integral KL-based divergence minimization can be treated as Uni-Instruct with special weighting. More surprisingly, we found that if we choose $\chi^2$-divergence in Uni-Instruct, the weighting of SIM becomes $0$ and the remaining gradient is only contributed by Diff-Instruct, as is shown in Tab. \ref{TAB:EXAMPLE} and the third column of Tab. \ref{TAB:TEASER}. In this way, Uni-Instruct can unify the first line of work: Diff-Instruct \citep{luo2023diff} is Uni-Instruct with $\chi^2$-divergence. DMD \citep{yin2024one} added extra regression loss contributed by pre-sampled paired images, while DMD2 \citep{yin2024improved} added an adversarial loss. SwiftBrush \citep{hoang2023swiftbrush} applied the same loss on text-to-image generation. $f$-distill \citep{xu2025one} can be seen as Uni-Instruct with manually selected weighting, and has a gradient expression of ($\chi^2$) divergence in Uni-Instruct.

Moreover, in Sec. \ref{SEC::CONNECTIONS}, we demonstrate that leveraging the connection between KL divergence and score-based divergence, score matching can be interpreted as minimizing single-step KL divergence. Thus, selecting reverse-KL (RKL) divergence in Uni-Instruct, we can recover score-based divergence, as shown in the third column of Tab. \ref{TAB:TEASER}.  In this way, SIM \citep{luo2025one} and SiD \citep{zhou2024score} minimize Uni-Instruct loss with RKL. Additional adversarial loss is added in SiDA\citep{zhou2024adversarial}, while text-to-image distillation is applied in SID-LSG\citep{zhou2024long}, both under the same Uni-Instruct(RKL) setting. Though our experiments on benchmark datasets have already demonstrated the superior performance of Uni-Instruct on distilling a one-step diffusion model (Sec. \ref{SEC:EXP}). We believe Uni-Instruct can be further applied to large-scale datasets and text-to-image diffusion models. We leave that to future work.

\subsection{Text-to-3D Generation with Diffusion Distillation}
DreamFusion \citep{poole2022dreamfusion} and ProlificDreamer \citep{wang2023prolificdreamer} propose to leverage text-to-image diffusion models to distill neural radiance fields (NeRF) \citep{mildenhall2021nerf}, enabling efficient text-to-3D generation from a fixed text prompt. DreamFusion utilizes a pretrained text-to-image diffusion model to guide the optimization of a NeRF network by performing score-distillation sampling (SDS). This method minimizes KL divergence that aligns the rendered images from NeRF with the guidance from a pretrained diffusion model.

ProlificDreamer further advances this concept by introducing variational distillation, which involves training an extra student network to stabilize and enhance the distillation process. Specifically, denote $p_{\theta}(\bx\vert c,y)$ as the implicit distribution of the rendered image $\bx \coloneqq \bg(\theta,c)$ given the camera $c$ with the rendering function $\bg(\cdot,c)$, while $q_0(\bx\vert y^c)$ as the distribution modeled by the pretrained text-to-image diffusion model with the view-dependent prompt $y^c$. ProlificDreamer approximates the intractable implicit distribution posterior distribution $p_{\theta}(\bx\vert c,y)$ by minimizing the integral KL divergence between the diffusion-guided posterior and the implicit distribution rendered by NeRF:
\begin{equation}
\mathcal{D}_{\text{IKL}}(p_{\theta}(\bx|c,y) \vert\vert q_{0}(\bx\vert y^c)) \coloneqq \int_{0}^{T} w(t) \mathbb{E}{p_{\theta,t}(\bx_t|c,y)}\left[ \log \frac{p_{\theta,t}(\bx_t|c,y)}{q_{t}(\bx_t\vert y^c)} \right]\diff t.
\end{equation}

Utilizing Corollary \ref{cor:iff}, we observe that by choosing suitable weighting functions $W(t)$, the integral KL divergence used by ProlificDreamer corresponds to the reverse KL (RKL) version of Uni-Instruct:
\begin{equation}
\int_0^T w(t)\mathcal{D}_{\text{KL}}(p_{\theta,t}(\bx_t|c,y)||q_{t}(\bx_t)) \diff t = \int_0^T \frac{1}{2}g^2(t)W(t) \mathbb{E}_{p_{\theta,t}} \left[ | \bm{s}_{p_{\theta,t}}(\bx) - \bm{s}_{q_t}(\bx) |^2_2 \right] \diff t,
\end{equation}
ignoring $W(t)$ becomes the RKL  loss function we applied in our experiments.

Moreover, the gradient expression of DreamFusion and ProlificDreamer can be seamlessly unified under the Uni-Instruct framework, specifically aligning with the $\chi^2$ divergence case of Uni-Instruct (third column of Tab. \ref{TAB:TEASER}). Our experiments indicate that employing Uni-Instruct with KL-based divergence in the text-to-3D setting slightly improves the quality of generated 3D objects (App. \ref{APP:3D}). 

\subsection{Solving Inverse Problems with Diffusion Distillation}
To solve a general noisy inverse problem, which seeks to find $\bx$ from a corrupted observation:
\begin{equation}
    \by=h(\bx)+v, v\sim \mathcal{N}(0,\sigma^2_v\bm{I})
\end{equation}
where the forward model $h$ is known, we aims to compute the posterior $p(\bx\vert\by)$ to recover underlying signals $\bx$ from its observation $\by$. The intractable posterior $p(\bx\vert\by)$ can be approximated by $q(\bx\vert\by)$ through variational inference, where $q \coloneqq\mathcal{N}(\mu,\sigma^2\bm{I})$ is the variational distribution. Starting from minimizing the KL divergence between these two distributions, we have:
\begin{align}
    \mathcal{D}_{\text{KL}}(p_{\theta}(\bx\vert\by)\vert\vert p(\bx\vert\by))= -\mathbb{E}_{q(\boldsymbol{x}|\boldsymbol{y})} \left[ \log p(\boldsymbol{y}|\boldsymbol{x}) \right] + \mathcal{D}_{\text{KL}}\left( p_{\theta}(\boldsymbol{x}|\boldsymbol{y})\vert\vert q(\boldsymbol{x}) \right)+ \log p(\boldsymbol{y}),
\end{align}
where the first term is tractable base on the forward model of the inverse problem and the third term is irrelevant to the optimization problem. RedDiff \citep{mardani2023variational} proposed to estimate the second term with diffusion distillation. Specifically, they expand the KL term with integral KL through manually adding time weighting $w(t)$: $\mathcal{D}_{\text{IKL}}(p_\theta (\bx\vert\by) \| q_0(\bx)) \coloneqq \int_{t=0}^T w(t)\mathbb{E}_{p_{\theta,t}(\bx_t\vert\by)}\bigg\{ \log \frac{p_{\theta,t}(\bx_t\vert\by)}{q_{t}(\bx_t)} \bigg\}\mathrm{d}t$. Using Corollary \ref{cor:iff}, choosing $W(t)=\int w(t)dt+C, W(0)=0$, we can recover the RKL version of Uni-Instruct:
\begin{equation}
    \int_0^T w(t)\mathcal{D}_{\text{KL}}(p_{\theta,t}(\bx_t\vert\by)\vert\vert q_{t}(\bx_t)) \diff t = \int_0^T \frac{1}{2}g^2(t)W(t) \mathbb{E}_{p_{\theta,t}} \left[ \| \bm{s}_{p_{\theta,t}}(\bx\vert\by) - \bm{s}_{q_t}(\bx) \|^2_2 \right] \diff t.
\end{equation}

\subsection{Human Preference Aligned Diffusion Models}
Reinforcement learning from human feedback \citep{ouyang2022training,christiano2017deep} (RLHF) is  proposed to incorporate human feedback
knowledge to improve model performance. The RLHF method trains the model to maximize the human reward with a
Kullback-Leibler divergence regularization, which is equivalent to minimizing:
\begin{equation}
\label{EQU:RLHF}
    \mathcal{L}(\theta) = \mathbb{E}_{ \bx \sim p_\theta(\bx)}\left[ -r(\bx) \right] + \beta \, \mathcal{D}_{\text{KL}}\left( p_\theta(\bx)\vert\vert q_{\mathrm{ref}}(\bx) \right)
\end{equation}
The KL divergence regularization term penalizes the distance between the optimized model and the reference model to prevent it from diverging, while the reward term encourages the model to generate outputs with high human rewards. After the RLHF finetuning process, the model will be aligned with human preferences.

The KL penalty in Eq. \ref{EQU:RLHF} can be performed with diffusion distillation when aligning the diffusion model with human preference. DI++ \citep{luo2024diff} propose to penalize the second term with IKL, which minimizes the KL divergence along the diffusion forward process:
\begin{equation}
\label{EQU:DI++}
    \mathcal{L}(\theta) = \mathbb{E}_{\substack{ z \sim p_z,\, \bx_0 = g_\theta(z) \\ x_t \mid x_0 \sim p(x_t \mid x_0)}}\left[ -r(\bx_0) \right]+ \beta \int_{0}^T w(t) \, \mathcal{D}_{\text{KL}}\left( p_\theta(\bx_t)\vert\vert q_{\mathrm{ref}}(\bx_t) \right) \, dt
\end{equation} 
Alternatively, DI$^*$ \citep{luo2024diffstar} replaces the integral KL divergence with score-based divergence:
\begin{equation}
\label{EQU:DI*}
    \mathcal{L}(\theta) = \mathbb{E}_{\bx_0 \sim p_\theta(x_0)} \left[ r(\bx_0) \right] + \beta \int_{0}^T \frac{1}{2}g^2(t) \, \| \bm{s}_{p_{\theta,t}}(\bx_t) - \bm{s}_{q_t}(\bx_t) \|^2_2 \, dt
\end{equation}
Leveraging Corollary \ref{cor:iff}, the integral KL divergence in Eq. \ref{EQU:DI++} is a weighted version of KL divergence. Choosing $W(t)=\int w(t)dt+C, W(0)=0$, we have:
\begin{equation}
    \int_0^T w(t)\mathcal{D}_{\text{KL}}(q_0\vert\vert p_{\theta,t}) \diff t = \int_0^T \frac{1}{2}g^2(t)W(t) \mathbb{E}_{p_{\theta,t}} \left[ \| \bm{s}_{p_{\theta,t}}(\bx_t) - \bm{s}_{q_t}(\bx_t) \|^2_2 \right] \diff t.
\end{equation}
Moreover, the score based divergence in Eq. \ref{EQU:DI*} is minimizing KL divergence $D_{\text{KL}}( p_\theta(\bx)\vert\vert q_{\mathrm{ref}}(\bx))$, based on Theorem \ref{THM:ONE}, which recovers the RKL version of Uni-Instruct.

The gradient of DI++ \citep{luo2024diff} and DI$^*$ \citep{luo2024diffstar} takes the form of DI \citep{luo2023diff} and SIM \citep{luo2025one}, which correspond to $\chi^2$ and RKL divergence separately (third column of Tab. \ref{TAB:TEASER}).

\section{Details of 3D Experiments}
\label{APP:3D}

\paragraph{Experiment Settings} In this section, we elaborate on the implementation details of Uni-Instruct on text-to-3D generation. We re-implement the code base of ProlificDreamer \citep{wang2023prolificdreamer} by adding an extra discriminator head to the output of the stable diffusion Unet's encoder. We apply forward-KL and reverse-KL to Uni-Instruct and train the NeRF model. To further demonstrate the visual quality, we transform the NeRF to mesh with the three-stage refinement scheme proposed by ProlificDreamer: (1) Stage one, we use Uni-Instruct guidance to train the NeRF model for 300$\sim$400 epochs, based on the model's performance on different text prompts. (2) Stage 2, we obtain the mesh representation from the NeRF model and use the SDS loss to fine-tune the object's geometry appearance for 150 epochs. (3) Stage 3: We add more vivid texture to the object through further finetuning with Uni-Instruct guidance for an additional 150 epochs. Additionally, we enhance the object's appearance with a human-aligned loss provided by a reward model. Algorithm \ref{alg:3d} shows how to distill a 3D NeRF model.

\begin{algorithm}[!htbp] 
\caption{Uni-Instruct for Text-to-3D Generation}
\label{alg:3d}
\textbf{Input:} pre-trained DM \( \bm{s}_{q_{t}} \), generator \( g_\theta \), fake score network \( \bm{s}_{\phi} \), discriminator \(D_{\lambda}\), divergence \(f\), GAN weight \(w_{\text{GAN}}\), diffusion timesteps weighting $w(t)$.

\begin{algorithmic}[1]
\WHILE{not converge}
    \STATE Sample camera view $c$ and random noises: $\epsilon \sim \mathcal{N}(0,I)$
    \STATE Render fake images from NeRF: $\bx = \bg(\theta, c)$
    \STATE Sample real images and random noises: $\bx_{\text{real}} \sim p_{\text{data}}, \epsilon \sim \mathcal{N}(0,I)$
    \STATE Update $D_{\lambda}$ with discriminator loss: $\mathcal{L}_D = -\mathbb{E}_{\bx_{\text{real}}}[\log D_{\lambda}(\bx_{\text{real}})] - \mathbb{E}_{\bx_{\text{fake}}}[\log(1 - D_{\lambda}(\bx_{\text{fake}}))]$
    \STATE Compute diffusion guidance: $\mathcal{L}_{\text{diffusion}} = \int_{0}^{T} w(t) \, \mathbb{E}_{\bx_t \mid \bx \sim p_{\theta,t}(\bx_t \mid \bx)} \left\| \bm{s}_{p_{\theta,t}}(\bx_t) - \bm{s}_{q_t}(\bx_t) \right\|_2^2 \diff t$
    \STATE Compute Uni-Instruct loss: $\mathcal{L}_{\text{Uni}}$ (Equation \ref{EQU:CONNECTION})
    \STATE Update $\theta$ with $\mathcal{L}_{\text{Uni}}$.
\ENDWHILE
\RETURN $g_\theta$
\end{algorithmic}
\end{algorithm}

\paragraph{Performance Evaluations} Fig. \ref{fig:mesh} shows the objects produced by the mesh backbone. Uni-Instruct produces more diverse results compared to ProlificDreamer and DreamFusion. Fig. \ref{fig:nerf} demonstrates more objects trained with the NeRF backbone. Tab. \ref{tab:3d_comparison} shows the numerical results. Our method slightly outperforms the baseline methods.

\begin{figure}[!htbp]
\vspace{-0.1cm}
    \centering
    \begin{tabular}{cccc}
        \includegraphics[width=0.23\textwidth]{figs/3d/prolific-vase1.jpeg} &
        \includegraphics[width=0.23\textwidth]{figs/3d/prolific-vase2.jpeg} &
        \includegraphics[width=0.23\textwidth]{figs/3d/prolific-vase4.jpeg} &
        \includegraphics[width=0.23\textwidth]{figs/3d/prolific-vase3.jpeg} \\
        \includegraphics[width=0.23\textwidth]{figs/3d/vase1.png} &
        \includegraphics[width=0.23\textwidth]{figs/3d/vase2.png} &
        \includegraphics[width=0.23\textwidth]{figs/3d/vase3.png} &
        \includegraphics[width=0.23\textwidth]{figs/3d/vase4.png} \\
        \includegraphics[width=0.23\textwidth]{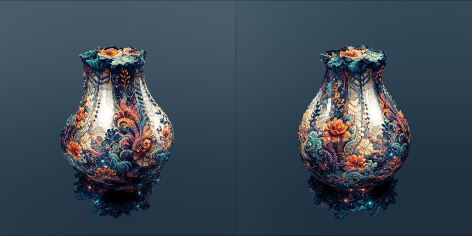} &
        \includegraphics[width=0.23\textwidth]{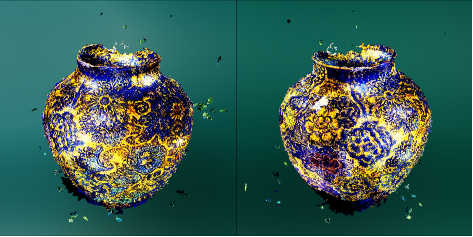} &
        \includegraphics[width=0.23\textwidth]{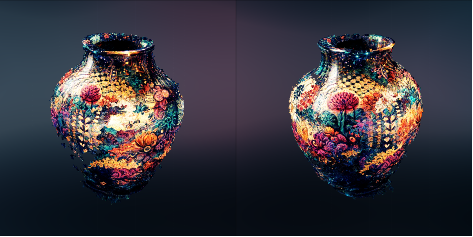} &
        \includegraphics[width=0.23\textwidth]{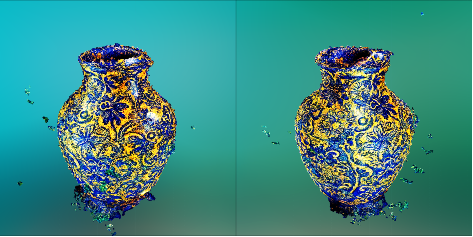} \\
    \end{tabular}
    \caption{\textbf{Prompt}: A refined vase with artistic patterns. \textit{From top to bottom :} ProlificDreamer, Uni-Instruct (Forward-KL), Uni-Instruct (Reverse-KL). Our vase demonstrates more diverse shapes as well as realistic patterns.}
    \label{fig:mesh}
    \vspace{-0.2cm}
\end{figure}

\begin{figure}[!htbp]
\vspace{-0.1cm}
    \centering
    \begin{tabular}{c}
        \includegraphics[width=0.9\textwidth]{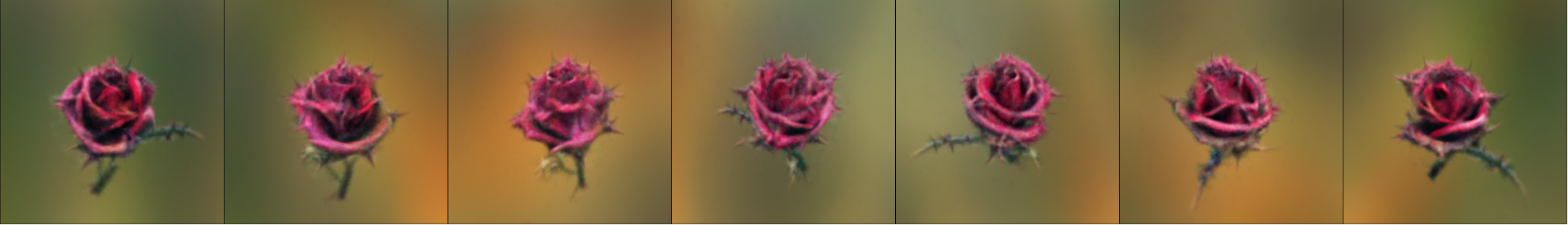} \\
        \includegraphics[width=0.9\textwidth]{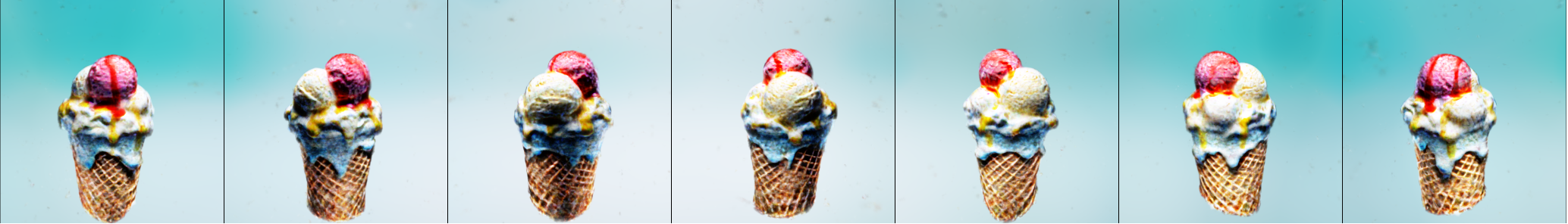} \\
        \includegraphics[width=0.9\textwidth]{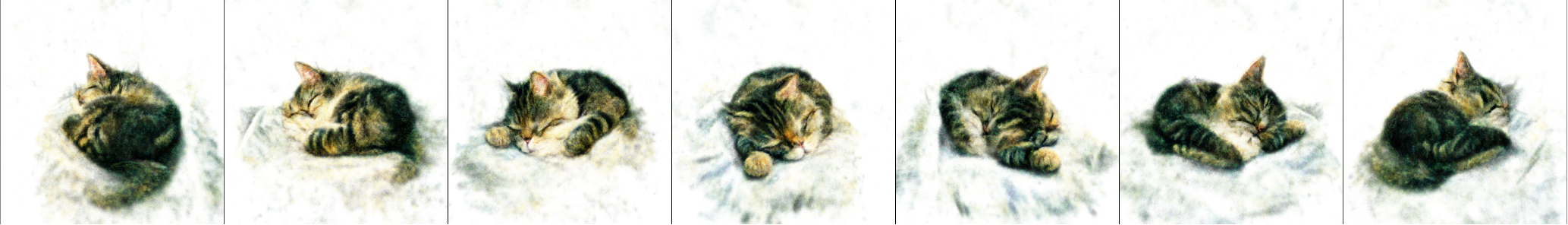} \\
        \includegraphics[width=0.9\textwidth]{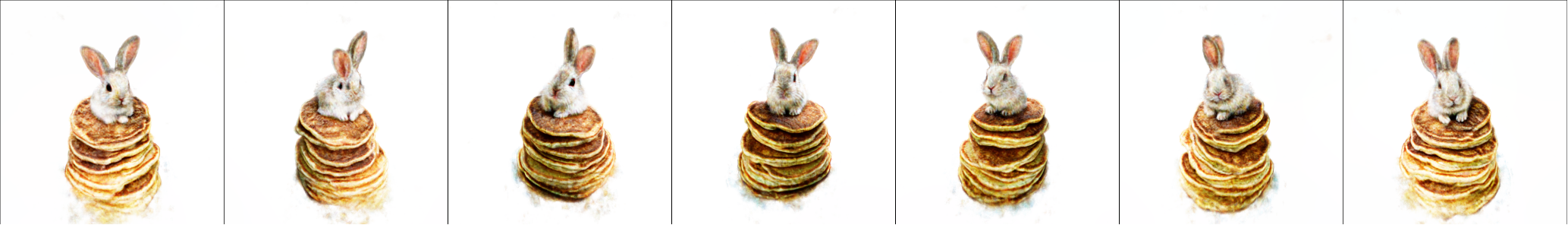} \\
    \end{tabular}
    \caption{Results generated from our NeRF backbone. \textbf{Prompts (From top to buttom)}: "A thorny rose.", "A high-quality photo of an ice cream sundae.", "A sleeping cat.", "A baby bunny sitting on top of a stack of pancakes."}
    \label{fig:nerf}
    \vspace{-0.4cm}
\end{figure}

\begin{table}[ht]
\centering
\caption{Comparison of different methods on Mesh and NeRF backbones. The prompt is: "A refined vase with artistic patterns."}
\begin{tabular}{lcccc}
\toprule
\textbf{Method}  & \multicolumn{2}{c}{\textbf{NeRF}} & \multicolumn{2}{c}{\textbf{Mesh}}\\
\cmidrule(lr){2-3} \cmidrule(lr){4-5}
& 3D-Aes Score$\uparrow$ & 3D-CLIP$\uparrow$ & 3D-Aes Score$\uparrow$ & 3D-CLIP$\uparrow$ \\
\midrule
DreamFusion~\cite{poole2022dreamfusion}       & 1.07 & 27.79 & -    & -  \\
Fantasia3D~\cite{chen2023fantasia3d}          & -    & -   & 2.76    & 30.96    \\
ProlificDreamer~\cite{wang2023prolificdreamer}   & 2.15 & 30.97 & 4.91 & 31.92  \\
\textbf{Uni-Instruct (Forward-KL)}          &  2.46 & 31.35 & 4.83 & 31.74 \\
\textbf{Uni-Instruct (Reverse-KL)}         & \textbf{4.45} & \textbf{33.94} & \textbf{7.54} & \textbf{34.56} \\
\bottomrule
\end{tabular}
\label{tab:3d_comparison}
\end{table}

\section{Limitaions}
\label{APP:LIMITATIONS}

Training an additional discriminator to estimate the density ratio brings extra computational costs and may lead to unstable training. For instance, we found that the output of a 3D object trained with Uni-Instruct forward KL is more foggy than reverse KL, which doesn't require an extra discriminator. Additionally, Uni-Instruct suffers from slow convergence: Training Uni-Instruct on both 2D distillation and text-to-3D tasks takes twice as long as training DMD and ProlificDreamer on their respective tasks. Moreover, Uni-Instruct may result in bad performance with an improper choice of $f$, as the gradient formula in Eq. \ref{EQU:LOSS} requires the fourth derivative of function $f$, which will add complexity to the gradient formula. Therefore, Uni-Instruct is not as straightforward as some simpler existing methods like Diff-Instruct. We hope to develop more stable training techniques in future work.

\section{Additional Results}

\clearpage
\begin{figure}[H]
    \vspace*{\fill}
    \centering
    \includegraphics[width=1\textwidth]{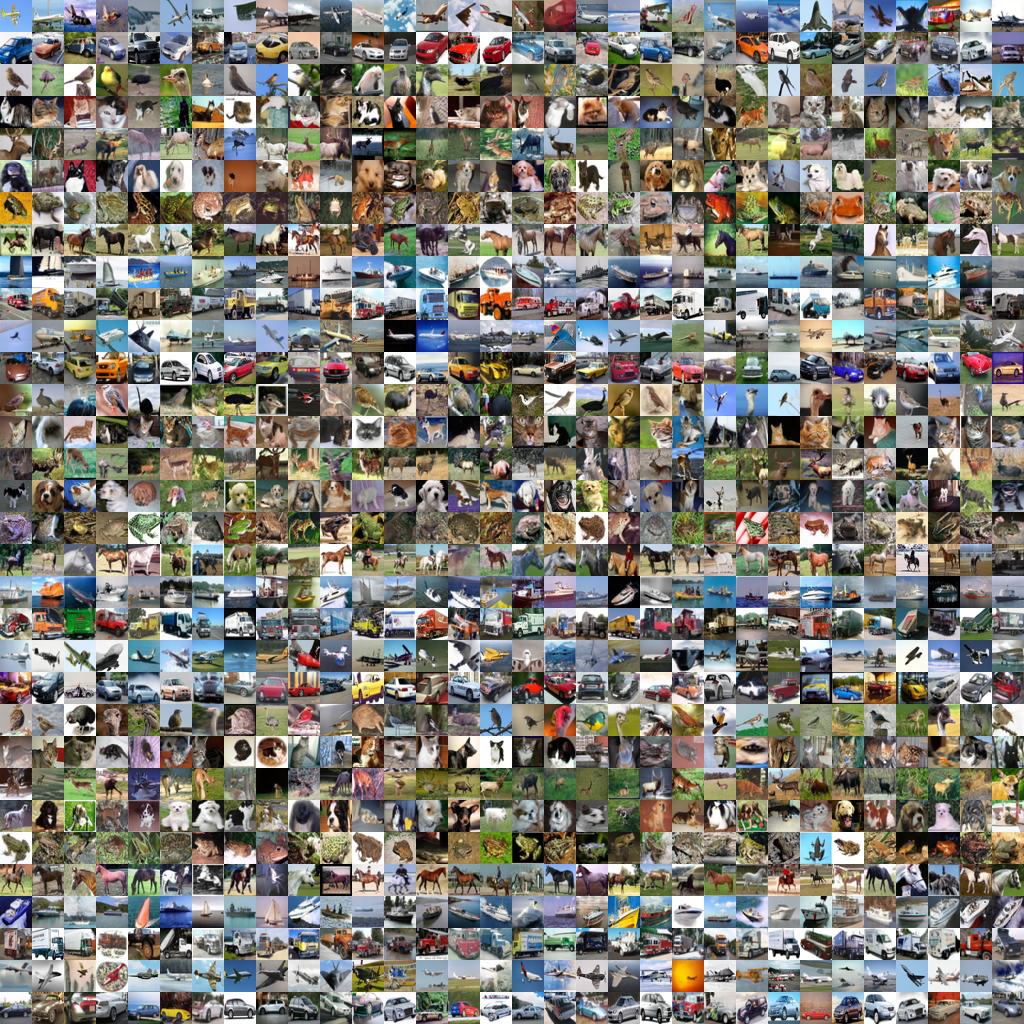}
    \caption{Forward-KL CIFAR10 conditional generation.}   
    \label{fig:example}  
    \vspace*{\fill}
\end{figure}
\clearpage

\clearpage
\begin{figure}[H]
    \centering
    \includegraphics[width=1\textwidth]{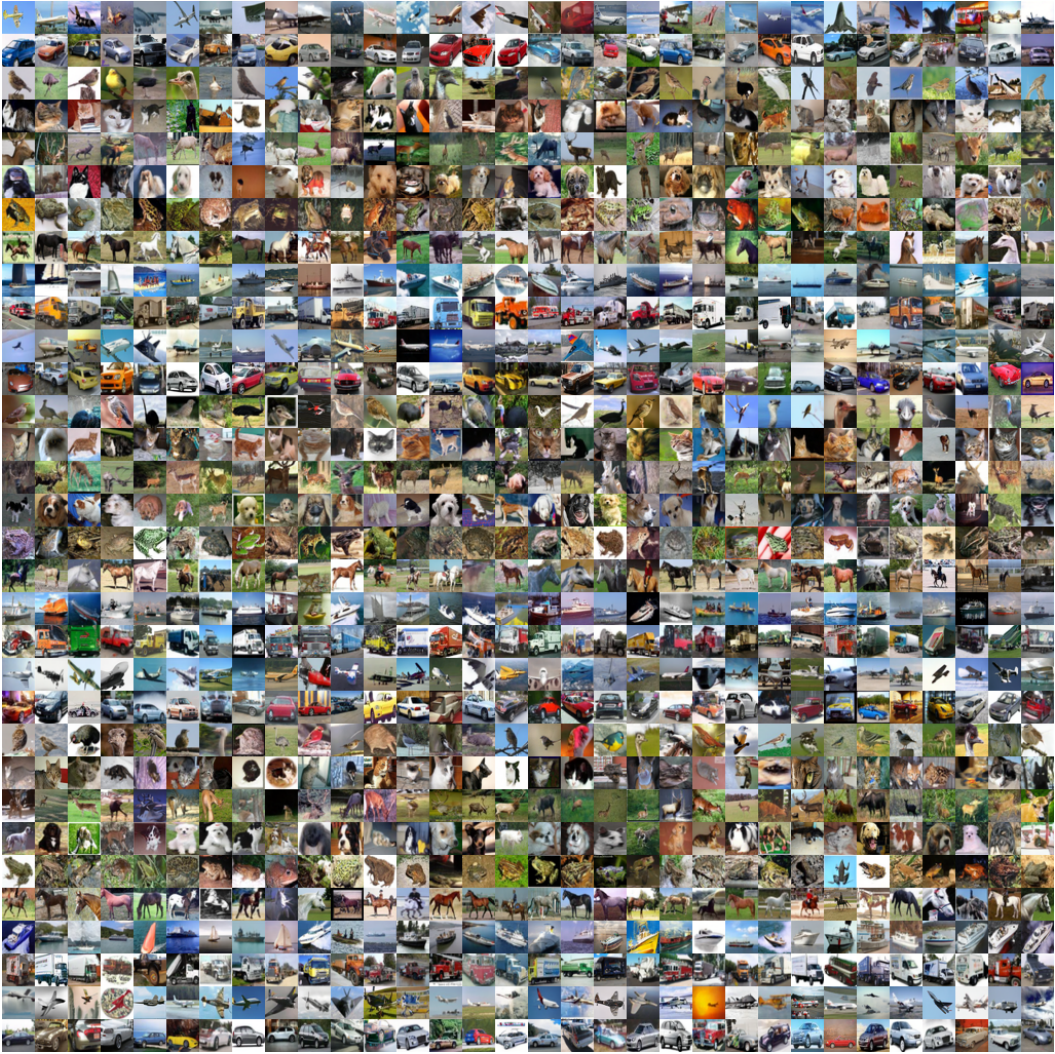}
    \caption{Jeffrey-KL CIFAR10 conditional generation.}   
    \label{fig:example}  
\end{figure}
\clearpage

\clearpage
\begin{figure}[H]
    \centering
    \includegraphics[width=1\textwidth]{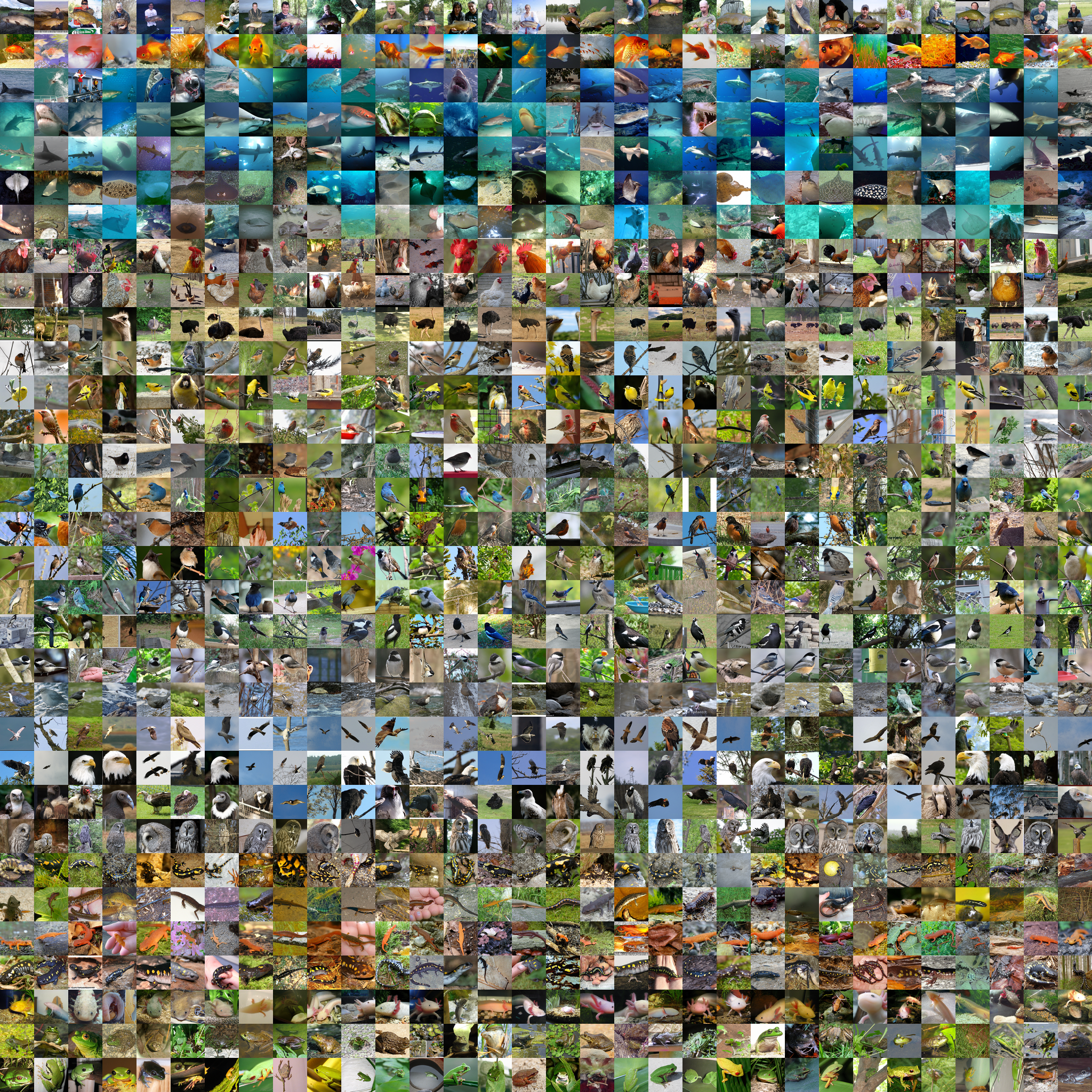}
    \caption{Forward-KL ImageNet64 conditional generation.}   
    \label{fig:example}  
\end{figure}
\clearpage

\clearpage
\begin{figure}[H]
    \centering
    \includegraphics[width=1\textwidth]{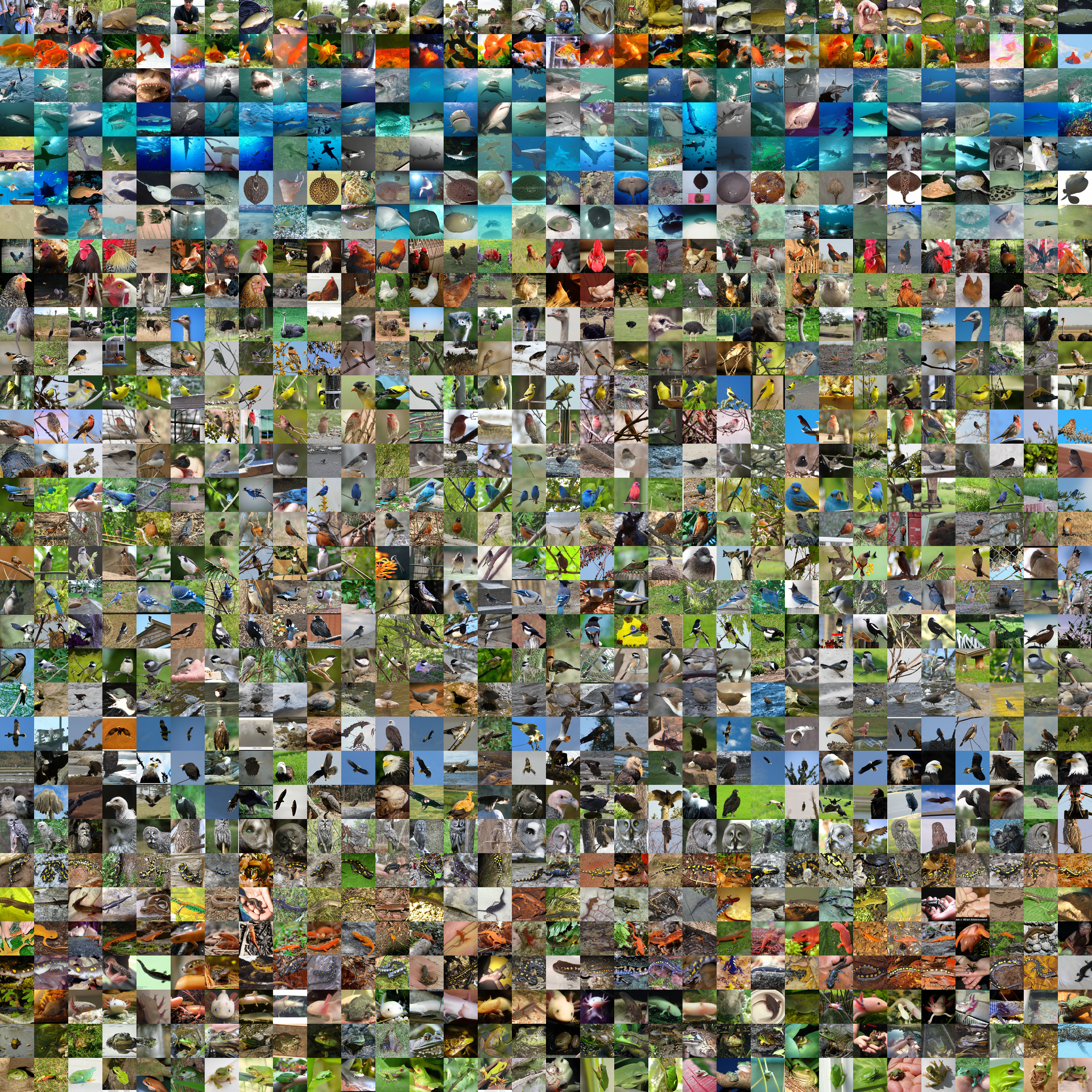}
    \caption{Jeffrey-KL ImageNet64 conditional generation.}   
    \label{fig:example}  
\end{figure}
\clearpage

\end{document}